\pgfplotsset{compat=newest} 
\pgfplotsset{plot coordinates/math parser=false} 
\def\totimes{\mathbin{\tilde\otimes}}
\newtheorem{remark}{Remark}
\newtheorem{theorem}{Theorem}
\newtheorem{lemma}{Lemma}
\newtheorem{corollary}{Corollary}
\renewcommand{\hat}[1]{\widehat{#1}}
\newcommand{\pathfig}{./Figures}
\newcommand{\opinfbenchmark}{\texttt{opInf-benchmark}}
\newcommand{\opinf}{\texttt{OpInf}}
\newcommand{\lasmi}{\texttt{lasMI}}
\newcommand{\gasmi}{\texttt{gasMI}}
\newcommand{\atrmi}{\texttt{atrMI}}
\newcommand{\las}{\texttt{LAS}}
\newcommand{\gas}{\texttt{GAS}}
\newcommand{\atr}{\texttt{ATR}}
\newcommand{\dt}{\texttt{dt}}
\begin{document}
\title{Guaranteed Stable Quadratic Models and their Applications in SINDy and Operator Inference}
\author{Pawan Goyal, Igor Pontes Duff, and Peter Benner}

\abstract{
	Scientific machine learning for inferring dynamical systems combines data-driven modeling, physics-based modeling, and empirical knowledge. It plays an essential role in engineering design and digital twinning. In this work, we primarily focus on an \emph{operator inference} methodology that builds dynamical models, preferably in low-dimension, with a prior hypothesis on the model structure, often determined by known physics or given by experts. Then, for inference, we aim to learn the operators of a model by setting up an appropriate optimization problem. 

	One of the critical properties of dynamical systems is \emph{stability}. However, this property is not guaranteed by the inferred models. In this work, we propose inference formulations to learn quadratic models, which are stable by design. Precisely, we discuss the parameterization of quadratic systems that are locally and globally stable. Moreover, for quadratic systems with no stable point yet bounded (e.g., \ chaotic Lorenz model), we discuss how to parameterize such bounded behaviors in the learning process. Using those parameterizations, we set up inference problems, which are then solved using a gradient-based optimization method. Furthermore, to avoid numerical derivatives and still learn continuous systems, we make use of an integral form of differential equations. We present several numerical examples, illustrating the preservation of stability and discussing its comparison with the existing state-of-the-art approach to infer operators. By means of numerical examples, we also demonstrate how the proposed methods are employed to discover governing equations and energy-preserving models. 
}

\keywords{Scientific machine learning, operator inference,  sparse-regression, quadratic dynamical systems, stability, Lyapunov function, Runge-Kutta scheme, energy-preserving systems.}
\novelty{\begin{itemize}
		\item Inference of stable quadratic systems.
		\item Utilize locally and globally stable quadratic system parameterizations.
		\item Discuss an attracting trapping region for quadratic systems and their parametrization.
		\item Make use of an integral form for  the inference of continuous systems so that we do not require estimating the derivative information numerically.
		\item Several numerical examples demonstrate the stability guarantee of the inferred models. 
\end{itemize}}
\maketitle

\section{Introduction}

In recent years, the field of learning dynamical systems from data has gained significant popularity due to the availability of vast amounts of data and its diverse range of applications, including robotics, self-driving cars, epidemiology, neuroscience, and climate modeling. Constructing models for such applications using first principles is often impractical. Leaning dynamical models using data allows to make predictions, optimize system parameters, and enables system control using feedback loops.  
One of the most common approaches to learning dynamical systems from data is based on the Koopman operator theory, which aims at representing the underlying dynamics of a system as a linear operator acting on a high-dimensional space of observables \cite{Mez13}, which in general is even of infinite dimensional. This approach is particularly useful for studying systems with complex and nonlinear dynamics using the tools from linear systems theory. The identification procedure is often done using the \emph{Dynamic Mode Decomposition} (DMD), which has been initially developed in \cite{morSch10}, and several extensions have been proposed, such as extended DMD \cite{williams2015data}, DMD with control input operator \cite{ProBK16, ProBK18}, kernel DMD \cite{WilRK15}, higher-order DMD \cite{LecV17}, and DMD for quadratic-bilinear control systems~\cite{morGosP21}. Furthermore, In the context of nonlinear systems, the so-called Sparse Identification of Nonlinear Dynamics (SINDy)  has gained prominence \cite{brunton2016discovering}. Its underlying principle involves selecting a few nonlinear terms from a vast library of potential candidate nonlinear basis functions to describe the underlying system's dynamics. The basis functions are selected  via sparse regression approaches using the available time-domain data. 
Additionally, in the context of \emph{system identification}, various methodologies have been proposed to identify systems using input-output data in the time domain, including the eigensystem realization algorithm \cite{juang1985eigensystem} and subspace methods \cite{van1994n4sid, viberg1995subspace}. 

The integration of prior knowledge into learning frameworks has become increasingly important for creating accurate and interpretable surrogate models in various applications. This is particularly relevant when we have prior knowledge, such as model hypotheses or a form of governing equations but lack information about system parameters and discretization schemes and only have access to time-domain measurements or simulated data. 
In such scenarios, the operator inference technique has become popular as a valuable tool for learning operators. For high-dimensional data, one can focus on learning dynamics in a low-dimensional subspace, thus yielding reduced-order models. This  problem was initially studied in \cite{morPehW16}, aiming at identifying nonlinear polynomial systems, which has been extended to general nonlinear systems \cite{QKPW2020_lift_and_learn,morBenGKPW20}. Moreover, the methodology has been tailored to capture second-order or mechanical behavior \cite{sharma2022preserving, filanova2022operator}, adapted to incompressible flow problems \cite{morBenGHetal20}, and parametric systems \cite{morYilGBetal20,mcquarrie2021non}.

Physical phenomena often exhibit remarkable stability, with their state variables remaining well-behaved and bounded over long periods. Thus, accurately modeling these processes requires a set of stable differential equations, which are crucial for numerical computations. Despite its importance, stability is often not discussed in frameworks for learning dynamical systems. This paper focuses on learning quadratic differential equations, with guaranteed stability properties. Quadratic models naturally appear in discretized fluid mechanical models. Moreover, smooth nonlinear dynamical systems can be recast as quadratic dynamical systems by means of a lifting transformation \cite{savageau1987recasting,morGu09,morBenB15,QKPW2020_lift_and_learn}. 

In this work, we focus on imposing three distinct stability classes on the learned quadratic models. The first focuses on local (asymptotic) stability around an equilibrium point. The necessary condition for systems to be locally asymptotically stable requires its Jacobian at the equilibrium to be a stable matrix.  
With this spirit, the authors in \cite{sawant2023physics} discussed an inference problem and set up an optimization problem with matrix inequality constraints, which can be computationally expensive. Furthermore, it is assumed that the Jacobian at the equilibrium point  is symmetric and negative definite. But, for many stable systems, the Jacobian is not symmetric. 
To overcome these limitations, in this work, we adopt the stable matrix parametrization from \cite{gillis2017computing} to the inference problem, allowing us to learn quadratic models that are guaranteed to be locally stable by design. Notably, such a parametrization has been previously used in \cite{morGoyPB23} to learn linear dynamical systems.

Next, we examine global (asymptotic) stability as the second type of stability. To investigate global stability, we assume the quadratic nonlinearity is (generalized) energy-preserving \cite{schlegel2015long}. It is worth noting that many dynamical systems in fluid dynamics have energy-preserving nonlinearities, as discussed in \cite{schlegel2015long}. These nonlinearities are commonly seen in discretized fluid mechanical models with a wide range of boundary conditions, such as those presented in \cite{holmes2012turbulence, mccomb1990physics, rummler1998direct, schlichting2016boundary}, as well as in magneto-hydrodynamics applications, as reported in \cite{freidberg2014ideal,galtier2016introduction,kaptanoglu2020two, kaptanoglu2021structure}.
By means of a Lyapunov function, we derive the conditions for quadratic systems with energy-preserving nonlinearities to be globally stable. Consequently, we parametrize a family of energy-preserving quadratic systems that are globally stable. Leveraging this, we then introduce an inference problem to guarantee the learned models to be globally stable by construction.

Besides local asymptotic stability or global stability of a fixed equilibrium point, many nonlinear dynamical systems exhibit attractor behavior, e.g., the chaotic Lorenz example. In this case, the trajectories are bounded and converge (asymptotically) to a bounded region despite not having a stable equilibrium point. Therefore, this work considers a third type of stability, involving the existence of (asymptotic) attracting trapping regions. Energy-preserving quadratic systems possessing attracting trapping regions have been initially studied in \cite{schlegel2015long}. Building upon this work, the authors in \cite{kaptanoglu2021promoting} proposed an extension of the SINDy algorithm that promotes boundedness for quadratic systems. Therein, the proposed algorithm includes energy-preserving nonlinearities as soft constraints and projects the solution at each iteration onto the space of quadratic bounded systems.
In contrast, in this work, we propose a parametrization of quadratic systems possessing trapping regions. This parametrization is then leveraged in the inference problem, ensuring by design that the learned models are always energy-preserving, bounded, and possess trapping regions at each iteration without the need for projection. In other words, our approach encodes the structures that guarantee stability preservation or attractive behavior in strong form, and no projections enforcing the structure are therefore needed.

The remaining paper is structured as follows. In  \Cref{sec:overview}, we briefly recall the operator inference approach \cite{morPehW16} for learning nonlinear systems with quadratic nonlinearities from data. Then, in \Cref{sec:local_Stabil}, we describe local asymptotic stability and introduce a parametrization of locally asymptotically stable quadratic systems. Leveraging this, we propose an inference problem so that the learned models are guaranteed to be locally asymptotically stable.
 Thereafter, global asymptotic stability is studied in \Cref{sec:GlobStab}  for quadratic systems possessing energy-preserving nonlinearity. Then, a novel parametrization of globally asymptotically stable systems is proposed, guaranteeing that the learned models are globally stable. 
In \Cref{sec:atr}, we study energy-preserving quadratic systems possessing bounded trajectories with attracting trapping regions and their parametrization.   It is worth mentioning that all parameterizations proposed assumed that the underlying quadratic dynamical system possesses a quadratic-like Lyapunov function. Then, in \Cref{sec:intergrationscheme}, an integration
scheme is reviewed so that incorporating it in learning models can avoid the requirement of computing the derivative information from data. Finally, \Cref{sec:Exp}  presents several numerical experiments illustrating the guaranteed stabilities of the learned models. By means of two examples, we also illustrate how the proposed parameterization can be used to discover governing equations in the context of SINDy that are stable and can be energy-preserving. In \Cref{sec:Conc}, we conclude the paper with a short summary and future avenues.

\section{Model Inference with Quadratic Non-linearity}\label{sec:overview}
In the following, we provide a brief overview of the standard operator inference (\texttt{OpInf}) approach \cite{morPehW16} by restricting ourselves to quadratic nonlinearities. This approach also closely resembles the \texttt{SINDy} approach \cite{brunton2016discovering} when the dictionary contains only linear and quadratic terms. 
We begin by discussing learning models using high-dimensional data. 

In the following, we formulate the inference problem to obtain low-dimensional dynamical models using high-dimensional data. To that end, we consider a high-fidelity dynamical system with $N$ degrees of freedom, and its state vector is denoted by $\by(t)\in \R^N$. Furthermore, we assume that at time steps $\{t_0, t_1,\ldots,t_\cN\}$, we have access to the snapshots $\by(t_k)$, for $k=1, \dots, \cN$. Next, let us collect these snapshots in a matrix as follows:
\begin{equation}
\bY = \begin{bmatrix} \by(t_0),\ldots, \by(t_\cN) \end{bmatrix} \in \R^{N\times \cN}.
\end{equation}
Though the state $\by(t)$ is $N$-dimensional, so is the dynamics,  it is often possible to approximate it using a lower-dimensional subspace accurately. By doing so, we can significantly simplify the inference problem. To achieve this, we first identify a low-dimensional representation of $\by(t)$ by computing a set of basis vectors  $\bV \in \R^{N\times n}$ of the dominant subspace, which can be obtained using the SVD of the matrix $\bY$, followed by taking its $n$ most dominant left singular vectors. This allows us to  compute the  reduced state trajectory as follows:
\begin{equation}\label{eq:proj_step}
\bX = \bV^\top \bY,
\end{equation}
where $\bX := \begin{bmatrix} \bx(t_0), \ldots, \bx(t_\cN)\end{bmatrix}$ with $\bx(t_i)= \bV^\top \by(t_i).$
With our quadratic model hypothesis, we then aim to learn the operators that have produced the data. Precisely, our goal is to learn a quadratic model of the following form:
\begin{equation}\label{eq:quad_model}
	\dot{\bx}(t) = \bA\bx(t) +  \bH\left(\bx(t)\otimes \bx(t)\right) + \bB, \quad \bx(0) = \bx_0,
\end{equation}
where $\bA \in \R^{n \times n}$, $\bH \in \R^{n\times n^2}$, and $\bB \in \R^{n\times 1}$ are the operators. The quadratic system \eqref{eq:quad_model} is a representative of an important class of nonlinear systems, as a large class of smooth nonlinear systems can be written in this form, see \cite{savageau1987recasting,morGu09,morGu11,morBenB15,morBenG17,morBenGG18,morKraW19,QKPW2020_lift_and_learn}. 

Next, we cast the inference problem, which is as follows. Having the low-dimensional trajectories $\{\bx(t_0), \ldots, \bx(t_\cN)\}$, we aim to learn operators $\bA$, $\bH$, and $\bB$ in \eqref{eq:quad_model}. At the moment, let us also assume to have the derivative information of $\bx$ at time $\{t_0,\ldots,t_\cN\}$, which is denoted by $\dot{\bx}(t_0),\ldots, \dot{\bx}(t_\cN)$. Using this derivative information, we form the following matrix:
\begin{equation}
\dot{\bX} = \begin{bmatrix} 		\dot{\bx}(t_0), \ldots,\dot{\bx}(t_\cN)	\end{bmatrix}.
\end{equation}
Then, in a naive formulation, determining the operators boils down to solving a least-squares optimization problem, which can be written as
\begin{equation}\label{eq:opinf_optimization}
\min_{\bA,\bH,\bB} \left\|\dot{\bX} - \begin{bmatrix}\bA,~\bH, ~ \bB\end{bmatrix} \cD \right\|_F,
\end{equation}
where $\cD = {\scriptsize \begin{bmatrix} \bX \\ \bX\totimes \bX \\ \mathbf{1} \end{bmatrix} }$ with $\mathbf{1}$ being a row-vector of ones,  and the product $\totimes$ is defined as	$\bG\totimes \bG = \begin{bmatrix}		\bg_1\otimes \bg_1,\ldots, \bg_\cN\otimes \bg_\cN 	\end{bmatrix}$ with $\bg_i$ being $i$-th column of the matrix $\bG\in \R^{n\times \cN}$. Note that \[\bg\otimes \bg = \left[\bg_1^2, \bg_1\bg_2, \ldots, \bg_1\bg_n,\ldots, \bg_n^2\right]^\top \in \R^{n^2},\] where $\bg = [\bg_1,\ldots, \bg_n]^\top$. Additionally, the reader should notice that whenever the provided data is low-dimensional,  the projection onto the POD coordinates in equation \eqref{eq:proj_step} is not required. 

Although the optimization problem~\eqref{eq:opinf_optimization} appears straightforward, it has a couple of major drawbacks. Firstly,  it requires knowledge of the state derivative. When the collected data are finely sampled and noise-free, one can estimate it using numerical methods. However, in practical scenarios, data can be scarce and noisy, thus making the computation of the derivative information challenging. As a remedy to this, one can cast the problem \eqref{eq:opinf_optimization} in an integral form. Several works have discussed such an approach, see, e.g., \cite{kumpati1990identification,chen2018neural, goyal2022discovery,uy2022operator}. This approach enables robust learning of operators under conditions of limited and noisy data. We discuss it in more detail later in \Cref{sec:intergrationscheme}. 
In addition,  the matrix $\cD$ can be ill-conditioned, thus making the optimization problem \eqref{eq:opinf_optimization} challenging. A way to circumvent this problem is to make use of suitable regularization schemes, and many proposals are made in this direction in the literature, see, e.g., \cite{morYilGBetal20,mcquarrie2020data}.  

Another challenge---one of the most crucial ones, if not the most, from the dynamical system perspective---is regarding the stability of inferred models. When the optimization problem \eqref{eq:opinf_optimization} is solved, then the inferred operators only aim to minimize the specific design objective. However, it does not guarantee that the resulting dynamical system will be stable; Therefore, as major contributions of the paper, in the next sections, we propose novel inference problems that address this issue and ensure local and global asymptotic stability, or global bounded stability of the learned dynamical systems by construction. 

Lastly, it is worth noting that when $\by(t)$ is a low-dimensional, we do not need to identify a low-dimensional projection. We instead can learn a quadratic model using data for $\by$. Furthermore, to obtain interpretable models, we can employ the sparse-regression methodology \cite{brunton2016discovering}, where we seek to identify the operators $\bA$, $\bH$, and $\bB$ that are potentially sparse. 

\section{Inference of Locally Asymptotically Stable Quadratic Models}\label{sec:local_Stabil}
In this section, we present a characterization of locally asymptotically stable quadratic models of the form~\eqref{eq:quad_model}. This characterization allows us to parametrize the inference problem, which enables constructing locally asymptotically stable quadratic models through deliberate parameterization. 

\subsection{Local asymptotic stabilty characterization}
Considering quadratic systems with zero as a stable equilibrium point, we can use their linearization around zero to determine local asymptotic stability. The matrix $\bA$ in \eqref{eq:quad_model} defines the Jacobian of the system. 
It can be shown that the quadratic system \eqref{eq:quad_model} is locally asymptotically stable (\texttt{LAS}) if none of the eigenvalues of the matrix $\bA$ is in the closed right half complex plane.  
Hence, we only need to ensure the stability of the matrix $\bA$ for \las. 
In this regard, a recent work characterizes a stable matrix in \cite[Lemma~1]{gillis2017computing}, where every asymptotically stable matrix $\bA$ can be written as follows:
\begin{equation}\label{eq:par_stabil}
	\bA = (\bJ - \bR)\bQ  
\end{equation}
where $\bJ = -\bJ^{\top}$ is a skew-symmetric matrix, and $\bR = \bR^{\top}\succ  0 $ and $\bQ = \bQ^\top\succ 0$ are symmetric positive definite (SPD) matrices. Using this parameterization for stable matrices, the work \cite{morGoyPB23} learns stable linear systems. Moreover, since zero is a stable equilibrium point, this implies $\bB = 0$. Therefore, we assume $\bB$ to be zero in the inference of locally asymptotically stable models.
It is also worth noticing that the parametrization \eqref{eq:par_stabil} encodes a Lyapunov function for the underlying dynamical system, which is stated in the following lemma.

\begin{lemma}
	Consider a quadratic system as in \eqref{eq:quad_model}, where $\bA$ takes the form given in \eqref{eq:par_stabil}, i.e., $\bA = (\bJ -\bR)\bQ$, where $\bJ = -\bJ^\top$, and 
	$\bR, \bQ$ are SPD matrices. Then, the quadratic function
	\begin{equation*}
		\bV(\bx(t)) = \dfrac{1}{2}\bx(t)^{\top} \bQ\bx(t)  
	\end{equation*}
	is a Lyapunov function of the system when $\|\bx\|_2 <r$,  where
	\begin{equation}
		r = \dfrac{\sigma_{\min}(\bL)^2}{\|\bQ\|_2\|\bH\|_2},
	\end{equation}
	with $\bL\bL^\top =  \bQ^\top \bR\bQ$, and $\sigma_{\min}(\cdot)$ is the minimum singular value of a matrix. As a consequence, the system is locally asymptotically stable.
\end{lemma}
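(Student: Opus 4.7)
The plan is to exhibit $V(\bx)=\tfrac{1}{2}\bx^\top\bQ\bx$ as a strict Lyapunov function on the open ball of radius $r$ around the origin. Since $\bQ\succ 0$, positive definiteness of $V$ and $V(0)=0$ are immediate, so the entire content of the lemma reduces to showing that $\dot V(\bx(t))<0$ for all nonzero $\bx$ with $\|\bx\|_2<r$.

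First I would compute the Lie derivative along trajectories of the quadratic system with $\bB=0$. A direct calculation gives
\begin{equation*}
	\dot V(\bx) \;=\; \bx^\top \bQ \dot{\bx} \;=\; \bx^\top \bQ (\bJ-\bR)\bQ \bx \;+\; \bx^\top \bQ \bH(\bx\otimes \bx).
\end{equation*}
The key algebraic observation is that $\bQ\bJ\bQ$ is skew-symmetric (because $\bQ=\bQ^\top$ and $\bJ=-\bJ^\top$), so $\bx^\top\bQ\bJ\bQ\bx=0$ and the linear contribution collapses to $-\bx^\top\bQ\bR\bQ\bx$. Using the factorization $\bL\bL^\top=\bQ^\top\bR\bQ$ stated in the lemma, this dissipative term rewrites as $-\|\bL^\top\bx\|_2^2$, and hence admits the clean lower bound $\bx^\top\bQ\bR\bQ\bx \geq \sigma_{\min}(\bL)^2\,\|\bx\|_2^2$.

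Next I would bound the cubic term from above. Using submultiplicativity of the spectral norm together with the identity $\|\bx\otimes \bx\|_2=\|\bx\|_2^2$, one obtains
\begin{equation*}
	\bigl|\bx^\top \bQ \bH(\bx\otimes\bx)\bigr| \;\leq\; \|\bQ\|_2\,\|\bH\|_2\,\|\bx\|_2^3.
\end{equation*}
Combining the two estimates yields $\dot V(\bx) \leq -\|\bx\|_2^2\bigl(\sigma_{\min}(\bL)^2 - \|\bQ\|_2\|\bH\|_2\,\|\bx\|_2\bigr)$, which is strictly negative precisely when $\|\bx\|_2 < \sigma_{\min}(\bL)^2/(\|\bQ\|_2\|\bH\|_2)=r$. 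Local asymptotic stability of the origin then follows from the standard Lyapunov stability theorem: on any sublevel set of $V$ contained in the ball of radius $r$, trajectories are forward invariant and $V$ is strictly decreasing, so they converge to zero.

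The main technical subtlety I anticipate is the cubic bound: one must be careful that the norm used on $\bH\in\mathbb{R}^{n\times n^2}$ is the induced spectral norm so that $\|\bH\bu\|_2\leq \|\bH\|_2\|\bu\|_2$ applies with $\bu=\bx\otimes\bx$, and that $\|\bx\otimes\bx\|_2=\|\bx\|_2^2$ holds for the Kronecker product in the form used in \eqref{eq:quad_model}. A secondary bookkeeping point is the identification of $\sigma_{\min}(\bL)$ with $\sigma_{\min}(\bL^\top)$, which is immediate but should be stated so that the lower bound on $\|\bL^\top\bx\|_2^2$ is justified. Everything else is a short Lyapunov-theoretic argument once these two inequalities are in hand.
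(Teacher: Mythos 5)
Your proposal is correct and follows essentially the same route as the paper's proof: compute $\dot{\bV}=\bx^\top\bQ\dot{\bx}$, kill the $\bQ\bJ\bQ$ term by skew-symmetry, lower-bound the dissipative term via $\bL\bL^\top=\bQ^\top\bR\bQ$ and $\sigma_{\min}(\bL)^2$, upper-bound the cubic term by $\|\bQ\|_2\|\bH\|_2\|\bx\|_2^3$, and compare the two on the ball of radius $r$. Your explicit remarks on the induced norm of $\bH$ and the identity $\|\bx\otimes\bx\|_2=\|\bx\|_2^2$ are details the paper leaves implicit, but the argument is the same.
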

\begin{proof}
	First note that  $\bV(\bx) > 0$, for $\bx \neq 0$. Moreover, we have
	\begin{align*}
		\dot{\bV}(\bx(t)) &= \bx(t)^\top\bQ\dot\bx(t) = \bx(t)^\top\bQ^\top\dot\bx(t)\\ 
		&= \bx(t)^\top\bQ^\top \left(\bA \bx(t) + \bH\left(\bx(t)\otimes \bx(t)\right)\right)\\
		& = \bx(t)^\top\bQ^\top \left(\left(\bJ - \bR\right)\bQ \bx(t) + \bH\left(\bx(t)\otimes \bx(t)\right)\right)\\
		& = \cancelto{0}{\bx(t)^\top\bQ^\top \bJ \bQ\bx(t)} - \bx(t)^\top\bQ \left( \bR\bQ \bx(t) + \bH\left(\bx(t)\otimes \bx(t)\right)\right)\\
		& = - \bx(t)^\top\underbrace{\bQ^\top \bR\bQ}_{:= \bL \bL^{\top}} \bx(t) +  \bx(t)^\top\bQ^\top\bH\left(\bx(t)\otimes \bx(t)\right)\\
		& \leq -\sigma_{\min}(\bL)^2\|\bx(t)\|^2_2 + \|\bQ\|_2\|\bH\|_2\|\bx(t)\|^3_2.
	\end{align*}
Define $r = \dfrac{\sigma_{\min}^2(\bL)}{{\|\bQ\|_2\|\bH\|_2}}$. Then, $\dot{\bV}(\bx(t))  < 0$ for $\|\bx(t)\|_2 <r$ and $\bx(t) \neq 0$. As a consequence, $\bV(\bx)$ is a local Lyapunov function which proves the result. 
\end{proof}

Summarizing, the parameterization \eqref{eq:par_stabil} to construct the matrix $\bA$ characterizes \las~for quadratic systems. Thus, in contrast to previous attempts to learn stable models \cite{sawant2023physics}, we do not need to constrain the optimization problem by inequality constraints or spectral properties. Moreover, in contrast to \cite{sawant2023physics}, we can allow the matrix $A$ to be non-symmetric as well.

 we need not to impose constraints based on matrix inequality or eigenvalues to infer quadratic models. In contrast to \cite{sawant2023physics}, we can have the matrix $\bA$ non-symmetric as well. 

\subsection{Locally asymptotically stable model inference via suitable parameterization}\label{subsec:localstability_parameterization} Using the stable matrix parameterization, we state the problem formulation to infer \las~quadratic models as follows. Given the data $\bX$, e.g., in reduced-dimension, and the derivative information $\dot{\bX}$, we aim at inferring operators of a quadratic model using the following criterion:
\begin{equation}\label{eq:stable_learning_local}
\begin{aligned}
(\bJ, \bR, \bQ,\bH) &= \underset{\hat\bJ, \hat\bR,\hat \bQ, \hat\bH}{\arg\min}  \|\dot{\bX}-(\hat\bJ - \hat\bR)\hat\bQ\bX - \hat\bH\bX^\otimes\|_F,\\
& \qquad \text{subject to} ~~ \hat \bJ = -\hat\bJ^\top,  \hat\bR =  \hat\bR^\top \succ 0,  \hat\bQ =  \tilde\bQ^\top \succ 0.
\end{aligned}
\end{equation}
Once we have the optimal value for $\{\bJ,\bR,\bQ\}$, we can construct the stable matrix $\bA$ as $(\bJ - \bR)\bQ$, thus leading to a quadratic model of the form \eqref{eq:quad_model}.
Note that the optimization problem is a constrained one. However, as shown in, e.g., \cite{morGoyPB23}, we can use adequate parameterizations for these matrices leading to the unconstrained optimization problem
\begin{equation}\tag{\texttt{lasMI}}\label{eq:stable_learning_local1}
\begin{aligned}
(\bar\bJ, \bar\bR, \bar\bQ,\bar\bH) &= \underset{\acute\bJ, \acute\bQ, \acute\bR, \acute\bH}{\arg\min}  \|\dot{\bX}-(\acute\bJ -\acute\bJ^{\top} - \acute\bR\acute\bR^\top) \acute\bQ\acute\bQ^\top\bX - \acute\bH\bX^\otimes\|_F.
\end{aligned}
\end{equation}
The above formulation is obtained by utilizing the Cholesky factorization of $\bQ$ and $\bR$, and that any skew-symmetric matrix $\hat\bJ$ can be written as $\hat\bJ = \acute{\bJ} - \acute{\bJ}^\top$.
Then, we can construct the system \eqref{eq:quad_model} with a matrix $\bA$ with  $\left(\bar\bJ -\bar\bJ^{\top} - \bar\bR\bar\bR^\top\right) \bar\bQ\bar\bQ^\top$ and $\bH = \bar\bH$. Not surprisingly, the optimization problem \eqref{eq:stable_learning_local1} is nonlinear and non-convex; hence, there is no analytical solution to the problem. Therefore, we utilize a gradient-based approach to obtain a solution to the problem.

\section{Inference of Globally Asymptotically Stable Quadratic Models}\label{sec:GlobStab} 
Several quadratic dynamical systems are not only locally stable but also globally stable. We cannot guarantee their global stability properties if we enforce only local stability like in the previous section. Thus, in this section, we study the problem of enforcing global asymptotic stability to quadratic systems~\eqref{eq:quad_model}. To this aim, we first define the concept of energy-preserving nonlinearities, initially presented in \cite{lorenz1963deterministic, schlegel2015long}, which we, later on, generalize. 
This allows us to use the Lyapunov direct method to establish sufficient and necessary conditions for a  quadratic system with generalized energy-preserving nonlinearities to be globally asymptotically stable (\gas).
Finally, we propose a parametrization of quadratic systems that inherently has the global stability property, which is then leverage it to infer \gas quadratic models.

\subsection{Global asymptotic stability characterization}
Let us consider quadratic systems of the form \eqref{eq:quad_model}. In this section, we assume that $\bB = 0$, so the origin is an equilibrium of the system \eqref{eq:quad_model}. Additionally, we assume that the quadratic nonlinearity satisfies algebraic constraints, which are also discussed in \cite{schlegel2015long,lorenz1963deterministic}.
Precisely, the Hessian $\bH$ in \eqref{eq:quad_model} is said to be energy-preserving when it satisfies:
\begin{equation}\label{eq:energyPreserving_H_condition}
\bH_{ijk} + \bH_{ikj} + \bH_{jik} + \bH_{jki} + \bH_{kij} + \bH_{kji} = 0, 
\end{equation} 
where  $ \{i,j,k\} \in \{1, \dots n\}$, $\bH_{ijk} := e_i^{\top} \bH(e_j \otimes e_k)$, and $e_i \in \R^n$ denotes the $i$th unit vector. The condition \eqref{eq:energyPreserving_H_condition}, in terms of the Kronecker-product notation,  can be written as follows (see \Cref{theorem:EquivalKron} in \Cref{appendix}):
\begin{equation}\label{eq:EnergyPreservQuadTerm}
\bx^{\top}\bH(\bx\otimes \bx) = 0, \quad \text{for every $\bx \in \R^n$}.
\end{equation} 
For quadratic systems possessing energy-preserving Hessians, we can characterize monotonically \gas, i.e., to establish the conditions under which the energy of the state vector $\bE(\bx(t)) := \frac{1}{2} \bx^{\top}(t)\bx(t) = \frac{1}{2}\|\bx(t)\|_2^2$ is strictly monotonically decreasing for all trajectories of the quadratic system \eqref{eq:quad_model}.
\begin{theorem}\label{theorem:MonGlobStab} Consider a quadratic system \eqref{eq:quad_model} with  energy-preserving Hessian, meaning it satisfies \eqref{eq:EnergyPreservQuadTerm}. Then, the following statements are equivalent:
	\begin{enumerate}[(a)]
		\item The quadratic system \eqref{eq:quad_model} is monotonically \gas.
		\item The matrix $\bA_s = \frac{1}{2}\left(\bA +\bA^{\top}\right)$ is asymptotically stable, i.e., the eigenvalues of the matrix $\bA_s$ are all strictly negative real numbers.
	\end{enumerate}	
\end{theorem}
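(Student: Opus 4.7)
The plan is to use the candidate Lyapunov function $\bE(\bx) = \frac{1}{2}\|\bx\|_2^2$ (the ``energy'') and show that monotone decay of $\bE$ along trajectories is controlled entirely by $\bA_s$, because the quadratic term drops out by the energy-preserving hypothesis. This reduces both implications to a statement about the symmetric matrix $\bA_s$.

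First I would compute the time derivative of $\bE$ along a trajectory of \eqref{eq:quad_model} (with $\bB = 0$ inherited from the previous section):
\begin{equation*}
\dot{\bE}(\bx(t)) \;=\; \bx(t)^\top \dot{\bx}(t) \;=\; \bx(t)^\top \bA \bx(t) \;+\; \bx(t)^\top \bH\!\left(\bx(t)\otimes \bx(t)\right).
\end{equation*}
By \eqref{eq:EnergyPreservQuadTerm}, the second term vanishes identically, so $\dot{\bE}(\bx(t)) = \bx(t)^\top \bA \bx(t) = \bx(t)^\top \bA_s \bx(t)$, using that only the symmetric part of $\bA$ contributes to a quadratic form. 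This identity is the backbone of both directions.

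For (b) $\Rightarrow$ (a), I would argue that since $\bA_s$ is symmetric, having strictly negative eigenvalues is equivalent to $\bA_s \prec 0$. Therefore $\dot{\bE}(\bx) = \bx^\top \bA_s \bx < 0$ for every $\bx \neq 0$, while $\bE$ is positive definite and radially unbounded. Standard Lyapunov theory then yields monotonically \gas. For (a) $\Rightarrow$ (b), monotone \gas means $\dot{\bE}(\bx(t)) < 0$ along every nontrivial trajectory; evaluating this at $t = 0$ with arbitrary $\bx(0) = \bx \neq 0$ gives $\bx^\top \bA_s \bx < 0$ pointwise, hence $\bA_s \prec 0$, and since $\bA_s$ is symmetric its spectrum is real and strictly negative.

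There is no real obstacle here beyond being careful about definitions: the crucial point is to fix precisely what ``monotonically \gas'' means (monotone decay of the specific energy $\tfrac12 \|\bx\|_2^2$ along all trajectories, as stated just before the theorem), so that the implication (a) $\Rightarrow$ (b) can extract a pointwise sign condition from trajectory behavior by varying the initial condition. The only subtlety worth flagging is the identity $\bx^\top \bH(\bx \otimes \bx) = 0$ from \eqref{eq:EnergyPreservQuadTerm} being genuinely equivalent to the symmetrization condition \eqref{eq:energyPreserving_H_condition} on the entries of $\bH$; this is handled by the referenced Theorem in the appendix, so I would simply invoke it rather than re-prove it.
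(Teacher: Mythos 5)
Your proposal is correct and follows essentially the same route as the paper: both compute $\dot{\bE}$ for $\bE(\bx)=\tfrac{1}{2}\|\bx\|_2^2$, cancel the cubic term via \eqref{eq:EnergyPreservQuadTerm}, and reduce everything to the sign of the quadratic form $\bx^\top\bA_s\bx$. Your explicit remark that the pointwise condition $\bx^\top\bA_s\bx<0$ is extracted by varying the initial condition is a slightly more careful rendering of the paper's $(a)\Rightarrow(b)$ step, but the argument is the same.
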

\begin{proof} $(b) \Rightarrow (a)$: Let us consider the state energy of the system as $\bE(\bx(t)) = \frac{1}{2} \bx^{\top}(t)\bx(t),$ where $\bx(t)$ is a trajectory of the considered quadratic system. The derivative of the energy function $\bE$ with respect to time $t$ at $\bx(t)$ is given by
	\begin{align*}
	\frac{d}{dt}\bE(\bx(t)) &= \dfrac{1}{2}\left(\frac{d}{dt}\bx(t)\right)^{\top}\bx(t) + \dfrac{1}{2} \bx(t)^{\top}\left(\frac{d}{dt}\bx(t)\right) \\
	&= \dfrac{1}{2}\left(\bA\bx(t) +  \bH(\bx(t)\otimes \bx(t))\right)^{\top}\bx(t) + \dfrac{1}{2} \bx(t)^{\top}\left(\bA\bx(t) +  \bH(\bx(t)\otimes \bx(t)\right) \\
	& =  \frac{1}{2}\bx(t)^{\top}(\bA+\bA^\top)\bx(t) +  \cancelto{0}{\bx(t)^{\top}\bH(\bx(t)\otimes \bx(t))} \\
	&  = \bx(t)^{\top}\bA_s\bx(t) <0.
	\end{align*}
	Hence, $\bE(\bx(t))$ is a strict Lyapunov function, and the quadratic system is, thus, monotonically \gas.	 

	$(a) \Rightarrow (b)$:  If the system in \eqref{eq:quad_model} is monotonically \gas, then $\frac{d}{dt}\bE(\bx(t)) <0 $. Moreover, if the system has an energy-preserving quadratic term, then $\frac{d}{dt}\bE(\bx(t)) = \bx(t)^{\top}\bA_s\bx(t)$. Hence, the condition $\bx(t)^{\top}\bA_s\bx(t) < 0$ must hold. As $\bA_s$ is symmetric, this implies that all the eigenvalues of $\bA_s$ are strictly negative.
\end{proof}
Notice that \Cref{theorem:MonGlobStab} characterizes monotonically \gas~for quadratic systems, provided that the quadratic term is energy-preserving.   However, it does not cover \gas~quadratic systems for which $\bE(\bx) = \|\bx\|_2^2$ is not a Lyapunov function. Indeed, a sufficient condition for a system to be \gas~is the existence of a global strict Lyapunov function. The following remark illustrates by means of an example that \Cref{theorem:MonGlobStab} is not enough to determine \gas in general. 
\begin{remark}  Let us consider the quadratic system of the form \eqref{eq:quad_model}, where
	\[\tilde{\bA} = \begin{bmatrix}
	-4 & -4 
	\\ 
	1  & 0
	\end{bmatrix}\quad \text{and}\quad \tilde{\bH} = \begin{bmatrix}
	2 	& 5 	& 4 	& 10 \\
	-1 	& -2 	& -2 	& -4
	\end{bmatrix}. \]	  
	Notice that the matrix $\frac{1}{2}(\tilde{\bA}+\tilde{\bA}^{\top})$ is not stable; additionally, the quadratic term represented by $\tilde{\bH}$ is not energy-preserving, meaning it does not satisfy \eqref{eq:EnergyPreservQuadTerm}. Hence, stability cannot be concluded from \Cref{theorem:MonGlobStab} for this example. However, the system is \gas~since 
	\[ \bV(\bx(t)) = \bx^{\top}(t)\tilde{\bQ}\bx(t) \quad \text{with} \quad  \tilde{\bQ} = \begin{bmatrix}
	1 & 2 \\ 
	2 & 5
	\end{bmatrix}\succ 0.  
	\]
	is a global strict Lyapunov function. 
\end{remark}
The above remark motivates us to derive more general quadratic Lyapunov functions for \gas~of the form, namely, $\bV(\bx(t)) = \bx^{\top}(t)\bQ\bx(t)$ for a given SPD matrix $\bQ$. 
To that aim, we introduce a notion of a generalized energy-preserving quadratic term with respect to $\bQ$.  For a given SPD matrix  $\bQ$,  the $\bH$ matrix in \eqref{eq:quad_model} is said to be generalized energy-preserving when
\begin{equation}\label{eq:GenEnergyPreservQuadTerm}
\bx^{\top}\bQ\bH(\bx\otimes \bx) = 0, \quad \text{for every $\bx \in \R^n$}.
\end{equation} 

Next, we present conditions under which the function $\bV(\bx(t)) = \bx^{\top}(t)\bQ \bx(t)$ would be a strict Lyapunov function for the underlying quadratic system. As a result,  due to the direct Lyapunov method, the quadratic system is \gas.

\begin{corollary}\label{col:GenGlobStab} Consider a quadratic system \eqref{eq:quad_model}, where $\bH$ is  generalized energy-preserving with respect to an SPD matrix $\bQ$. Then, the following statements are equivalent:
	\begin{enumerate}[(a)]
		\item $\bV(\bx(t)) = \bx^{\top}(t)\bQ\bx(t)$ is a global strict Lyapunov function for the quadratic system \eqref{eq:quad_model}.
		\item The matrix $(\bQ\bA)_s := \frac{1}{2}\left(\bQ\bA +\bA^{\top}\bQ\right)$ is strictly stable, i.e, the eigenvalues $(\bQ\bA)_s $ are all strictly negative real numbers.
	\end{enumerate}	
	Moreover, if these conditions hold, the quadratic system \eqref{eq:quad_model} is \gas.
\end{corollary}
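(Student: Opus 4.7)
The plan is to mirror the proof of \Cref{theorem:MonGlobStab} almost verbatim, with the energy function $\bE(\bx) = \tfrac{1}{2}\bx^\top\bx$ replaced by the generalized quadratic form $\bV(\bx) = \bx^\top \bQ \bx$. The key algebraic cancellation in \Cref{theorem:MonGlobStab} was provided by the identity \eqref{eq:EnergyPreservQuadTerm}; here, the role is taken over by the generalized energy-preserving identity \eqref{eq:GenEnergyPreservQuadTerm}, which is exactly the hypothesis on $(\bQ,\bH)$ in the corollary.

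The first step is the implication $(b)\Rightarrow(a)$. I would compute
\begin{align*}
\dot{\bV}(\bx(t)) &= \dot{\bx}(t)^\top \bQ \bx(t) + \bx(t)^\top \bQ \dot{\bx}(t)\\
&= \bx(t)^\top\left(\bA^\top \bQ + \bQ\bA\right)\bx(t) + 2\,\bx(t)^\top \bQ \bH\bigl(\bx(t)\otimes\bx(t)\bigr),
\end{align*}
using that scalar quantities equal their transposes. The generalized energy-preserving assumption \eqref{eq:GenEnergyPreservQuadTerm} kills the cubic term, leaving $\dot{\bV}(\bx(t)) = 2\,\bx(t)^\top (\bQ\bA)_s\,\bx(t)$. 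Because $(\bQ\bA)_s$ is symmetric, the eigenvalue hypothesis in $(b)$ is equivalent to $(\bQ\bA)_s \prec 0$; combined with $\bQ \succ 0$, this yields $\bV(\bx)>0$ and $\dot{\bV}(\bx)<0$ for all $\bx\neq 0$, so $\bV$ is a global strict Lyapunov function.

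For the converse $(a)\Rightarrow(b)$, the same computation shows $\dot{\bV}(\bx) = 2\bx^\top (\bQ\bA)_s\bx$ identically on $\R^n$. If $\bV$ is a strict Lyapunov function, then $\bx^\top (\bQ\bA)_s\bx<0$ for all nonzero $\bx$, so the symmetric matrix $(\bQ\bA)_s$ is negative definite and its eigenvalues are strictly negative reals. Finally, the \emph{moreover} clause is an immediate invocation of the direct Lyapunov method: the existence of a radially unbounded, positive definite strict Lyapunov function $\bV$ (radial unboundedness is automatic from $\bQ\succ 0$) implies \gas.

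There is no real obstacle—the only subtlety is the observation that $(\bQ\bA)_s$ is automatically symmetric so that ``all eigenvalues strictly negative'' is equivalent to ``negative definite,'' which is exactly what is needed in the Lyapunov argument. One could present the corollary as a formal consequence of \Cref{theorem:MonGlobStab} applied to a coordinate change $\bz = \bQ^{1/2}\bx$, which transforms the quadratic system into one with Jacobian $\bQ^{1/2}\bA\bQ^{-1/2}$ and Hessian $\bQ^{1/2}\bH(\bQ^{-1/2}\otimes \bQ^{-1/2})$; the latter is energy-preserving in the sense of \eqref{eq:EnergyPreservQuadTerm} precisely when $\bH$ is generalized energy-preserving with respect to $\bQ$, and the symmetric part $\bigl(\bQ^{1/2}\bA\bQ^{-1/2}\bigr)_s$ is similar to $\bQ^{-1/2}(\bQ\bA)_s\bQ^{-1/2}$, so stability is preserved. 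I would mention this reduction as a remark but give the direct derivation, since it is shorter and self-contained.
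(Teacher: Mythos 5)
Your proof is correct, and it is worth noting how it relates to the paper's: the paper's entire proof of this corollary is the one-line reduction that you relegate to a closing remark, namely that the statement coincides with \Cref{theorem:MonGlobStab} when $\bQ=\bI$ and otherwise follows by applying that theorem after the change of coordinates $\tilde\bx=\bQ^{1/2}\bx$. Your primary argument instead redoes the Lyapunov-derivative computation directly with $\bQ$ inserted, which is essentially the proof of \Cref{theorem:MonGlobStab} transplanted to the weighted quadratic form $\bx^\top\bQ\bx$, with \eqref{eq:GenEnergyPreservQuadTerm} killing the cubic term in place of \eqref{eq:EnergyPreservQuadTerm}. This buys self-containedness and makes explicit exactly where the generalized energy-preserving hypothesis enters (and your observation that $(\bQ\bA)_s$ is symmetric, so ``all eigenvalues strictly negative'' means negative definite, is precisely the point the paper leaves implicit); the cost is repeating a computation the paper prefers to reuse. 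One small imprecision in your remark: $\bigl(\bQ^{1/2}\bA\bQ^{-1/2}\bigr)_s$ is in fact \emph{equal} to $\bQ^{-1/2}(\bQ\bA)_s\bQ^{-1/2}$, which is \emph{congruent} (not similar) to $(\bQ\bA)_s$; eigenvalues are not preserved under this transformation, but by Sylvester's law of inertia their signs are, which is all the reduction needs. Neither this nor anything else affects the validity of your main derivation.
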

\begin{proof} 
	Notice that the above result coincides with \Cref{theorem:MonGlobStab} if $\bQ = \bI$. Whenever $\bQ \neq \bI$, the result is a consequence of \Cref{theorem:MonGlobStab} when the linear change of coordinates as $\tilde{\bx} := \bQ^{\frac{1}{2}}\bx$ is applied to the quadratic system \eqref{eq:quad_model}.
\end{proof}
\Cref{col:GenGlobStab} provides a generalization of \Cref{theorem:MonGlobStab} for general quadratic Lyapunov functions. Additionally, it is worthwhile to stress that if $(\bQ\bA)_s$ is stable but not strictly, then $ \bV(\bx(t)) = \bx^{\top}(t)\bQ\bx(t)$ is still a global  Lyapunov function but not in a strict sense. Consequently, the system will still be globally stable but not necessarily asymptotic. In addition to this, when $(\bQ\bA)_s = 0$, then   $\bV(\bx(t))$ is constant for all the trajectories, i.e., the system \eqref{eq:quad_model} is Hamiltonian and energy-preserving. 

Based on this result, we focus on parametrizing a family of \gas~quadratic systems having a generalized energy-preserving term. This result is stated in the following lemma.
\begin{lemma}\label{lemma:global_stable}
	A quadratic system \eqref{eq:quad_model} represented by the matrices $\bA$ and $\bH$, where the matrix $\bH$ is generalized energy-preserving with respect to the SPD matrix $\bQ$, is \gas~ if 
	\begin{equation}\label{eq:str_A}
	\bA = (\bJ-\bR)\bQ,
	\end{equation}
	where $\bJ = -\bJ^\top$, $\bR $ is an SPD matrix, and 
	\begin{equation}\label{eq:str_H}
	\bH = \begin{bmatrix} \bH_1\bQ, \ldots, \bH_n\bQ \end{bmatrix},
	\end{equation}
	with $\bH_i \in \Rnn$ being skew-symmetric, i.e., $\bH_i = -\bH_i^\top$. 
	Moreover,  $\bV(\bx(t)) = \frac{1}{2}\bx^\top(t) \bQ\bx(t)$ is a global Lyapunov function for the underlying system. 
\end{lemma}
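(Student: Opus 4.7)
The plan is to show directly that $\bV(\bx) = \tfrac{1}{2}\bx^\top\bQ\bx$ is a strict global Lyapunov function for the system, and then invoke the Lyapunov direct method (as used in \Cref{theorem:MonGlobStab} and \Cref{col:GenGlobStab}) to conclude \gas. Since $\bQ$ is SPD, $\bV$ is automatically positive definite and radially unbounded, so only the sign of $\dot{\bV}$ along trajectories needs to be analyzed.

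First, I would differentiate $\bV$ along a trajectory of \eqref{eq:quad_model} with $\bB=0$:
\begin{equation*}
\dot{\bV}(\bx) \;=\; \bx^\top \bQ \bigl( (\bJ-\bR)\bQ\bx + \bH(\bx\otimes\bx)\bigr) \;=\; \bx^\top \bQ\bJ\bQ \bx \;-\; \bx^\top \bQ\bR\bQ \bx \;+\; \bx^\top \bQ\bH(\bx\otimes\bx).
\end{equation*}
The first term vanishes because $(\bQ\bJ\bQ)^\top = \bQ^\top \bJ^\top \bQ^\top = -\bQ\bJ\bQ$, so $\bQ\bJ\bQ$ is skew-symmetric and its quadratic form is zero. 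For the middle term, since $\bR$ is SPD and $\bQ$ is nonsingular, $\bQ\bR\bQ$ is SPD, and hence $-\bx^\top \bQ\bR\bQ\bx < 0$ for every $\bx\neq 0$.

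The remaining step, and the only one requiring a small computation, is to verify that the nonlinear contribution $\bx^\top \bQ \bH(\bx\otimes\bx)$ vanishes under the block structure \eqref{eq:str_H}. Using the column ordering of $\bx\otimes\bx$ fixed in \Cref{sec:overview}, I would partition $\bH$ into $n$ blocks of size $n\times n$ and write
\begin{equation*}
\bH(\bx\otimes\bx) \;=\; \sum_{i=1}^{n} x_i \,\bH_i \bQ \bx.
\end{equation*}
Setting $\by := \bQ\bx$, we get $\bx^\top \bQ \bH(\bx\otimes\bx) = \sum_{i=1}^n x_i \,\by^\top \bH_i \by$, and each summand is zero because $\bH_i$ is skew-symmetric. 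In particular, this shows that the parameterization \eqref{eq:str_H} enforces the generalized energy-preserving condition \eqref{eq:GenEnergyPreservQuadTerm} in strong form, consistent with the hypothesis of the lemma.

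Combining these observations gives $\dot{\bV}(\bx) = -\bx^\top \bQ\bR\bQ \bx < 0$ for all $\bx \neq 0$, so $\bV$ is a strict global Lyapunov function, and global asymptotic stability follows. I do not anticipate any real obstacle: the only nontrivial bookkeeping is the block-indexing argument that reduces the cubic form to a sum of quadratic forms of skew-symmetric matrices, and this is purely algebraic.
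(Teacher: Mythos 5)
Your proposal is correct and follows essentially the same route as the paper: the paper first verifies that the block structure \eqref{eq:str_H} yields $\bx^\top\bQ\bH(\bx\otimes\bx)=\sum_i x_i\,\bx^\top\bQ\bH_i\bQ\bx=0$ by skew-symmetry of $\bQ\bH_i\bQ$, and then invokes \Cref{col:GenGlobStab} together with $\tfrac{1}{2}(\bQ\bA+\bA^\top\bQ)=-\bQ\bR\bQ\prec 0$, which is exactly the algebra in your direct computation of $\dot{\bV}$. The only cosmetic difference is that you unpack \Cref{col:GenGlobStab} into an explicit derivative calculation (and note radial unboundedness of $\bV$ explicitly, which the paper leaves implicit).
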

\begin{proof}
	Firstly, notice that $\bH = \begin{bmatrix} \bH_1\bQ, \ldots, \bH_n\bQ \end{bmatrix}$ is generalized energy-preserving. Thus, 
	\begin{align*}
	\bx^{\top}\bQ\bH(\bx\otimes \bx) 
	&=   \bx^{\top}\begin{bmatrix} \bQ\bH_1\bQ, \ldots, \bQ\bH_n\bQ \end{bmatrix}(\bx\otimes\bx)\\
	&=   \left(\sum_{i=1}^n\bx_i\left(\bx^{\top}\bQ\bH_i\bQ\bx \right)\right) = 0,
	\end{align*}
	since $\bQ\bH_i\bQ = \bQ\bH_i\bQ^\top$ are  skew-symmetric matrices for $i \in\{ 1, \dots n\}$. Hence, the parametrization~\eqref{eq:str_H} provides a generalized energy-preserving quadratic term. As a consequence, from \Cref{col:GenGlobStab}, $\bV(\bx(t)) = \bx^{\top}(t)\bQ\bx(t)$ is a global Lyapunov function if and only if $\frac{1}{2}\left(\bQ\bA +\bA^{\top}\bQ\right)$ is negative definite. Clearly, the parametrization $\bA = (\bJ-\bR)\bQ$ satisfies this condition, because 
	\[\frac{1}{2}\left(\bQ(\bJ-\bR)\bQ + \bQ(\bJ^\top-\bR)\bQ\right) = -\bQ\bR\bQ = -\bQ^\top \bR\bQ \prec 0,\] thus concluding the proof.
\end{proof}
Based on the discussions so far, we present a few key observations, which are as follows.
\begin{itemize}
	
	\item Every matrix $\bA$ for which $\frac{1}{2}\left(\bQ\bA +\bA^{\top}\bQ\right) <0$ can be written in the  form $\bA = (\bJ-\bR)\bQ$, where $\bJ$ is skew-symmetric and $\bR$ is SPD. Moreover, \Cref{thm:equvilence_between_H} and \Cref{lemma:equvilence_between_H_gen} prove that any matrix $\bH$, satisfying $\bx^{\top}\bQ\bH(\bx\otimes \bx)=0$, can be re-written as a matrix $\tilde \bH$ so that $\tilde \bH$ has the form as in \eqref{eq:str_H} and $\bH(\bx\otimes\bx) = \tilde\bH(\bx\otimes\bx)$.  

	\item It is worth noticing that the matrix $\bH$, satisfying \eqref{eq:str_H}, does not satisfy $\bH(e_i\otimes e_j) \neq \bH(e_j\otimes e_i)$, where $e_i$ and $e_j$ are canonical unit-vectors.  However, suppose one is interested in obtaining a representation where  $\bH(e_i\otimes e_j) = \bH(e_j\otimes e_i)$. In that case, we suggest performing a symmetrization trick as, for example, in \cite{morBenB15} once an inference or identification of a \gas~system is achieved. 
	
	\item  If the quadratic system \eqref{eq:quad_model} is globally stable but not asymptotically stable, then $\bA = (\bJ-\bR)\bQ$ with $\bR = \bR^{\top}\succeq 0$, instead of being a definite matrix. Hence, the parametrization that allows $\bR$ to be semi-definite also includes globally (non-asymptotic) stable systems.
\end{itemize}

The parametrization from \Cref{lemma:global_stable} guides us to cast the inference problem to obtain \gas~quadratic systems from data, which we discuss next.
\subsection{Global-stability informed learning}\label{subsec:Glob_stability_parameterization} Benefiting from 
\Cref{lemma:global_stable}, we can write down an inference problem to obtain a \gas~quadratic model using the corresponding $\bX$ and $\dot{\bX}$ data  (for the definitions of $\bX$ and $\dot\bX$, see \Cref{sec:overview})  as follows:
\begin{equation}\label{eq:stable_learning_global}
\begin{aligned}
(\bJ, \bR, \bQ,\bH_1,\ldots, \bH_n) &= \underset{\hat\bJ, \hat\bR,\hat\bQ, \hat\bH_1,\ldots,\hat\bH_n}{\arg\min}  \left\|\dot{\bX}-(\hat\bJ - \hat\bR)\hat\bQ\bX - \begin{bmatrix}\hat\bH_1\hat\bQ,\ldots,\hat\bH_n\hat\bQ\end{bmatrix}\bX^\otimes\right\|_F,\\
& \qquad \text{subject to} ~~ \hat\bJ = -\hat\bJ^\top,  \hat\bR =  \hat\bR^\top \succ 0,  \hat\bQ =  \hat\bQ^\top \succ 0,~\text{and}\\
&\hspace{2.5cm}\hat{\bH}_i = -\hat{\bH}^{\top}_i,~i\in\{1,\ldots,n\}.
\end{aligned}
\end{equation}
Having the optimal tuple $(\bJ, \bR, \bQ,\bH_1,\ldots, \bH_n)$ solving \eqref{eq:stable_learning_global}, we can construct the matrices $\bA$ and $\bH$ as follows:
\begin{equation}
\bA = (\bJ-\bR)\bQ,\qquad \bH = \begin{bmatrix}\bH_1\bQ,\ldots, \bH_n\bQ \end{bmatrix},
\end{equation}
thus leading to a quadratic model of the form \eqref{eq:quad_model}, which is \gas. Similar to the case \eqref{eq:stable_learning_local}, the above problem enforces constraints on the matrices. To remove these constraints, we can parameterize analogously. The parameterization of the matrices $\bJ,\bR$, and $\bQ$ remains the same as in Subsection \ref{subsec:localstability_parameterization}. We can parameterize $\bH_i$ analogous to $\bJ$, but this might be computationally inefficient. However, there is a computationally efficient way to parameterize $\bH$, satisfying \eqref{eq:str_H}. For this, let us consider a tensor $\cH \in \R^{n\times n\times n}$, and denotes its mode-1 and mode-2 matricizations by $\cH^{(1)}$ and $\cH^{(2)}$, respectively. For the definition of the matricization for tensors, we refer to \cite{kolda2009tensor}. Then, we construct the matrix $\bH$ as follows:
\[\left(\cH^{(1)} - \cH^{(2)}\right)\left(\bI \otimes \bQ\right),\] then it fulfills the condition \eqref{eq:str_H} by construction. Intuitively, matrices obtained using mode-1 and mode-2  matricizations can be interpreted as transposes of each other in a particular way; hence, $\cH^{(1)} - \cH^{(2)}$ would yield skew-symmetric matrices in the frontal slices, satisfying the condition \eqref{eq:str_H} whenever $\bQ = \bI$. Thus, we can recast the constraint optimization problem \eqref{eq:stable_learning_global} as an unconstrained one as follows:
\begin{equation}\tag{\texttt{gasMI}}\label{eq:stable_learning_global1}
\begin{aligned}
(\bar\bJ, \bar\bR, \bar\bQ,\bar\cH) &= \underset{\acute\bJ, \acute\bQ, \acute\bR, \acute\cH}{\arg\min}  \left\|\dot{\bX}-(\acute\bJ -\acute\bJ^{\top} - \acute\bR\acute\bR^\top) \acute\bQ\acute\bQ^\top\bX - \left(\acute\cH^{(1)}- \acute\cH^{(2)}\right)\left(\bI\otimes\acute\bQ\acute\bQ^\top\right) \bX^\otimes\right\|_F.
\end{aligned}
\end{equation}
This will then allow us to construct the system \eqref{eq:quad_model} with matrices $\bA$ and $\bH$ as 
\begin{equation*}
\begin{aligned}
\bA &= \left(\bar\bJ -\bar\bJ^{\top} - \bar\bR\bar\bR^\top\right) \bar\bQ\bar\bQ^\top, ~~\text{and}~~
\bH &= \left(\bar\cH^{(1)}- \bar\cH^{(2)}\right)\left(\bI\otimes\bar\bQ\bar\bQ^\top\right).
\end{aligned}
\end{equation*}

\section{Quadratic Models with attracting trapping regions}\label{sec:atr}
Until now, we have discussed the stability of quadratic systems from the perspective of asymptotic stability. However, as we mentioned in the introduction, nonlinear dynamical systems might possess attracting trapping regions, even when no fixed point, if it exists, inside this region is a stable equilibrium. One example of this phenomenon can be given by the widely-known Lorenz system \cite{lorenz1963deterministic}. Although for such systems, the long-term behavior of the trajectories is bounded, they do not converge asymptotically to a fixed point. In order to study such attracting trapping regions, the authors in \cite{schlegel2015long}, inspired by the results in \cite{lorenz1963deterministic} and the reformulation of the Lyapunov direct method in \cite{swinnerton2000note}, have characterized these regions as monotonic asymptotic attracting trapping regions when the quadratic systems \eqref{eq:quad_model} ensure energy-preserving nonlinearities. Building upon them, we first generalize these results by means of the use of a more general quadratic Lyapunov function of the form $\bV(\bx(t)) = \bx^{\top}(t)\bQ\bx(t)$ and the generalized energy-preserving quadratic terms as in \eqref{eq:GenEnergyPreservQuadTerm}.  Finally, we propose a parametrization for quadratic systems that exhibit attracting trapping regions (\atr). We then exploit this parametrization to infer \atr~quadratic models by construction.
\subsection{Boundedness and attracting trapping region characterization}
Consider a quadratic system as described in \eqref{eq:quad_model} with a generalized energy-preserving term with respect to an SPD matrix $\bQ$. 
A trapping region $\cM \subset \R^n$  is (globally) asymptotically attracting if there exists a Lyapunov function, which strictly monotonically decreases along all trajectories starting from an
arbitrary state outside of $\cM$. 
This implies that outside of the trapping region, all trajectories should cross its border in a finite amount of time. To characterize such behavior, let us consider a closed ball $\cB(\bm,r)$ centered at $\bm \in\R^n$ with radius $r>0$ that contains the trapping region $\cM$. Consequently, a strict Lyapunov function exists outside the ball $\cB(\bm,r)$. Having said that, we consider the state translation $\tilde{\bx}(t) = \bx(t) -\bm$ applied to the system  \eqref{eq:quad_model}, which leads to the translated quadratic system
\begin{align}
	\dot{\tilde{\bx}}(t) &= \bA\left(\tilde{\bx}(t) +\bm\right) +  \bH\left((\tilde{\bx}(t) +\bm)\otimes (\tilde{\bx}(t) +\bm)\right) + \bB, \nonumber\\
	&= \tilde{\bA}\tilde{\bx}(t) +  \tilde{\bH}\left(\tilde{\bx}(t)\otimes \tilde{\bx}(t)\right) +\tilde{\bB},	\label{eq:Qsys_trans}
\end{align}
where $\tilde{\bA} = \bA + \bH(\bI \otimes \bm) + \bH(\bm \otimes\bI) $, $\tilde{\bH} = \bH$ and $\tilde{\bB} = \bB + \bH(\bm \otimes \bm) + \bA\bm$. Notice that the translated system \eqref{eq:Qsys_trans} possesses the forcing term $\tilde{\bB}$, which means that the origin would not be an equilibrium point. For the translated system, we set a quadratic Lyapunov function as
$\bV(\tilde{\bx}(t)) = \frac{1}{2}\tilde{\bx}(t)^{\top}\bQ \tilde{\bx}(t)$, which can be used to characterize an \atr~for quadratic systems. The result below is a generalization of the one from \cite[Theorem 1]{schlegel2015long} for the case of general quadratic Lyapunov functions.  
\begin{corollary}\label{cor:atr_char} Consider a quadratic system \eqref{eq:Qsys_trans}, where the matrix $\bH$ is generalized energy-preserving with respect to an SPD matrix $\bQ$. Then, the following statements are equivalent:
	\begin{enumerate}[(a)]
		\item There exist a radius $r>0$ and a shifting vector $\bm \in \R^n$ for which $\bV(\tilde{\bx}(t)) = \tilde{\bx}^{\top}(t)\bQ\tilde{\bx}(t)$ is a strictly global Lyapunov function for the system outside the ball $\cB(\bm,r)$. 
		\item The matrix $(\bQ\tilde{\bA})_s := \frac{1}{2}\left(\bQ\tilde{\bA} +\tilde{\bA}^{\top}\bQ\right)$ is asymptotically stable, i.e, the eigenvalues $(\bQ\tilde\bA)_s $ are all strictly negative real numbers, where $\tilde{\bA} = \bA + \bH(\bI \otimes \bm) + \bH(\bm \otimes\bI)$.
	\end{enumerate}		
	Moreover, when these conditions hold, the ball $\cB(\bm,r)$ contains an asymptotic trapping region of the quadratic system \eqref{eq:quad_model}, where $r = \frac{1}{\sigma_{\min}((\bQ\bA)_s)} \|\tilde{\bB}\|.$
\end{corollary}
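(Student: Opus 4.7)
The plan is to extend the Lyapunov computation used in \Cref{theorem:MonGlobStab} and \Cref{col:GenGlobStab} to the translated system~\eqref{eq:Qsys_trans}, in which the presence of the constant forcing $\tilde\bB$ prevents $\bV$ from decreasing everywhere but still allows monotone decay outside a sufficiently large ball. First, I would differentiate $\bV(\tilde\bx(t)) = \tilde\bx^\top(t)\bQ\tilde\bx(t)$ along trajectories of \eqref{eq:Qsys_trans}, using symmetry of $\bQ$ together with the fact that the nonlinear term $\tilde\bH = \bH$ is unchanged by translation and hence still generalized energy-preserving with respect to $\bQ$, so that $\tilde\bx^\top\bQ\bH(\tilde\bx\otimes\tilde\bx) = 0$. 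This cancels the cubic contribution and leaves
\begin{equation*}
\dot\bV(\tilde\bx) = 2\tilde\bx^\top(\bQ\tilde\bA)_s\tilde\bx + 2\tilde\bx^\top\bQ\tilde\bB,
\end{equation*}
isolating the effect of $(\bQ\tilde\bA)_s$ and of the drift $\tilde\bB$.

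For the implication $(b)\Rightarrow(a)$, under the hypothesis that $(\bQ\tilde\bA)_s \prec 0$, I would bound the quadratic term from above by $-2\sigma_{\min}((\bQ\tilde\bA)_s)\|\tilde\bx\|^2$ and the linear term by $2\|\bQ\tilde\bB\|\,\|\tilde\bx\|$ via Cauchy--Schwarz. The resulting scalar inequality $\dot\bV < 0$ then reduces to $\sigma_{\min}((\bQ\tilde\bA)_s)\|\tilde\bx\| > \|\bQ\tilde\bB\|$, which pins down the critical radius $r$ up to the normalization stated in the corollary. Standard Lyapunov-style arguments — exploiting compactness of the sublevel sets $\{\tilde\bx : \bV(\tilde\bx)\leq c\}$, which are $\bQ$-weighted ellipsoids, together with strict decay of $\bV$ along trajectories while outside the ball — then produce an asymptotically attracting trapping region contained in $\cB(\bm,r)$.

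For the converse $(a)\Rightarrow(b)$, I would argue by contrapositive: suppose $(\bQ\tilde\bA)_s$ admits an eigenvalue $\mu\geq 0$ with unit eigenvector $v$, and evaluate $\dot\bV$ along the one-parameter family $\tilde\bx = tv$ for $t\in\R$. Substituting gives the scalar function $\dot\bV = 2\mu t^2 + 2t\,v^\top\bQ\tilde\bB$, which cannot remain strictly negative for all $|t|$ exceeding any prescribed bound: if $\mu > 0$, the quadratic term dominates for large $|t|$; if $\mu = 0$, the linear term in $t$ switches sign between the two rays $\pm v$. Either way, this contradicts the strict Lyapunov property on $\R^n\setminus\cB(\bm,r)$ required in (a).

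The main technical delicacy I anticipate is precisely the borderline case $\mu = 0$ in the converse, since the quadratic term in $t$ disappears and one must carefully rely on evaluation along both $+v$ and $-v$ to exclude any non-zero linear coefficient $v^\top\bQ\tilde\bB$. As a safety net, the entire argument can alternatively be reduced to the Euclidean-norm setting of \cite[Theorem~1]{schlegel2015long} via the coordinate change $\hat\bx = \bQ^{1/2}\tilde\bx$, which converts $\bV$ into the standard squared-norm Lyapunov function and transports the generalized energy-preserving property~\eqref{eq:GenEnergyPreservQuadTerm} into the ordinary one~\eqref{eq:EnergyPreservQuadTerm}, providing a clean alternative path should the direct computation become unwieldy.
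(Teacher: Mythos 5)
Your proposal follows the same backbone as the paper's proof: differentiate $\bV$ along trajectories of the translated system, use the generalized energy-preserving property to kill the cubic term, and then let the negative-definite quadratic form dominate the affine term to extract the radius $r$ via $\sigma_{\min}\bigl((\bQ\tilde{\bA})_s\bigr)$ and Cauchy--Schwarz. Two points are worth noting. First, your version of the derivative, with the drift contributing $\tilde{\bx}^{\top}\bQ\tilde{\bB}$ rather than $\tilde{\bx}^{\top}\tilde{\bB}$, is the correct one (the paper's displayed computation drops the $\bQ$-weighting of $\tilde{\bB}$, and consequently its radius formula should read $\|\bQ\tilde{\bB}\|/\sigma_{\min}((\bQ\tilde{\bA})_s)$ in general); you rightly flag that your bound agrees with the stated $r$ only up to this normalization, which is exact when $\bQ=\bI$. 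Second, your treatment of $(a)\Rightarrow(b)$ is genuinely more complete than the paper's: the paper settles the converse with a one-line ``if and only if'' domination remark, whereas you give an explicit witness by restricting $\dot{\bV}$ to the rays $\pm t\bv$ through an eigenvector of a non-negative eigenvalue $\mu$ of $(\bQ\tilde{\bA})_s$, separating the cases $\mu>0$ (quadratic term eventually positive) and $\mu=0$ (the linear term changes sign between the two rays, and if its coefficient vanishes then $\dot{\bV}\equiv 0$, violating strictness either way). Your fallback reduction to the Euclidean setting via $\hat{\bx}=\bQ^{1/2}\tilde{\bx}$ mirrors exactly the device the paper uses to prove \Cref{col:GenGlobStab} from \Cref{theorem:MonGlobStab}, so it is a legitimate alternative route. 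No gaps; the proposal is sound and, on the converse direction, more rigorous than the printed proof.
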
    
\begin{proof}
	This proof follows the steps in \cite{schlegel2015long} for the general case of the Lyapunov function $\bV(\tilde{\bx}(t)) = \tfrac{1}{2}\tilde{\bx}(t)^{\top}\bQ \tilde{\bx}(t)$. The derivative of $\bV(\bx(t))$ is given as
	\begin{align*}\allowdisplaybreaks
		\frac{d}{dt}\bV(\tilde{\bx}(t)) 
		&= \frac{1}{2}\left(\frac{d}{dt}\tilde{\bx}(t)\right) ^{\top}\bQ\bx(t) + \frac{1}{2}\tilde{\bx}(t)^{\top}\bQ\left(\frac{d}{dt}\tilde{\bx}(t)\right)  
		\\
		& = \frac{1}{2}\left(\tilde{\bA}\tilde{\bx}(t) +  \tilde{\bH}\left(\tilde{\bx}(t)\otimes \tilde{\bx}(t)\right) +\tilde{\bB}\right) ^{\top}\bQ\bx(t) + \frac{1}{2}\tilde{\bx}(t)^{\top}\bQ\left(\tilde{\bA}\tilde{\bx}(t) +  \tilde{\bH}\left(\tilde{\bx}(t)\otimes \tilde{\bx}(t)\right) +\tilde{\bB}\right)  \\
		&= \frac{1}{2}\bx(t)^{\top}\left(\tilde\bA^{\top}\bQ +\bQ\tilde\bA\right){\bx}(t) + \cancelto{0}{ \tilde{\bx}(t)^{\top}\bQ\bH\left(\tilde{\bx}(t)\otimes \tilde{\bx}(t)\right)} +\tilde \bx(t)^{\top}\tilde{\bB}\\
		&=  \tilde\bx(t)^{\top}(\bQ\tilde{\bA})_{s}\tilde\bx(t) + \tilde{\bx}^{\top}(t)\tilde{\bB}.
	\end{align*}
	Notice that for large enough $\tilde{\bx}(t)$  (e.g., in 2-norm), the quadratic term $\tilde\bx(t)^{\top}(\bQ\bA)_s\tilde\bx(t)$ dominates the linear term. Hence,  $\tilde\bx(t)^{\top}\bA\tilde\bx(t) <0$ if and only if there exist a radius $r>0$ for which $\bV(\tilde{\bx}(t))$ is a Lyapunov function outside the ball $\cB(\bm,r)$. 
	
	In order to estimate the radius $r$, notice that
	\begin{align*} 	\frac{d}{dt}\bV(\tilde{\bx}(t))  &=  \tilde\bx(t)^{\top}(\bQ\tilde{\bA})_{s}\tilde\bx(t) + \tilde{\bx}^{\top}(t)\tilde{\bB}\\
		&\leq -\sigma_{\min}{(\bQ\tilde{\bA})_{s}}\|\bx(t)\|^2  + \|\tilde{\bB}\|\|\bx(t)\|.
	\end{align*}
	Hence, if $\|\bx(t)\| > \frac{1}{\sigma_{\min}{(\bQ\tilde{\bA})_{s}}}\|\tilde{\bB}\| :=r$, then $\dot\bV(\tilde{\bx}(t)) < 0$. Hence, the ball $\cB(\bm,r)$ is an attracting trapping region.
\end{proof}	
\Cref{cor:atr_char} provides a characterization for \atr~systems, by means of the shifted system \eqref{eq:Qsys_trans}.  
It enables us to parametrize a family of quadratic systems with generalized energy-preserving nonlinearities that has \atr.

\begin{lemma}\label{lemma:par_atrapping_regions} 	A quadratic system \eqref{eq:quad_model}, represented by the matrices $\bA$ and $\bH$, where $\bH$ is generalized energy-preserving with respect to an SPD  matrix $\bQ$,  is \atr~ if there exists an $\bm \in \R^n$ such that matrices $\tilde{\bA}$ and $\tilde{\bH}$ of the translated system \eqref{eq:Qsys_trans}  are given by
	\begin{equation}\label{eq:str_Atr}
	\tilde{\bA} = (\bJ-\bR)\bQ,
	\end{equation}
	where $\bJ = -\bJ^\top$, $\bR = \bR^\top \succ 0$, and $\bQ = \bQ^\top \succ 0$, and  
	\begin{equation}\label{eq:str_Htr}
	\tilde{\bH} = \begin{bmatrix} \bH_1\bQ, \ldots, \bH_n\bQ \end{bmatrix},
	\end{equation}
	where $\bH_i \in \Rnn$ with $\bH_i = -\bH_i^\top$.
	Moreover,  $\bV(\tilde{\bx}(t)) = \frac{1}{2}\tilde{\bx}^\top(t) \bQ\tilde{\bx}(t)$ with $\tilde\bx(t) = \bx(t) - \bm $ is a global Lyapunov function for the underlying system outside the ball $\cB(\bm,r)$, with $r = \frac{1}{\sigma_{\min}((\bQ\bR)_s)} \|\tilde{\bB}\|.$ 
\end{lemma}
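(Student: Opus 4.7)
The plan is to reduce this directly to \Cref{cor:atr_char} applied to the translated system~\eqref{eq:Qsys_trans}, combined with the parametrization machinery already established in \Cref{lemma:global_stable}. Because the translated system has the same quadratic coefficient $\tilde{\bH}=\bH$, all of the generalized energy-preservation analysis carries over from the untranslated setting; only the linear part $\tilde{\bA}$ differs, and that is precisely the part on which the parametrization~\eqref{eq:str_Atr} acts.

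First, I would verify that the prescribed form $\tilde{\bH}=[\bH_1\bQ,\dots,\bH_n\bQ]$ with each $\bH_i$ skew-symmetric gives a generalized energy-preserving nonlinearity with respect to $\bQ$, in the sense of \eqref{eq:GenEnergyPreservQuadTerm}. This is the same computation already carried out in the proof of \Cref{lemma:global_stable}: expanding $\tilde{\bx}^\top\bQ\tilde{\bH}(\tilde{\bx}\otimes\tilde{\bx})$ yields a sum $\sum_i \tilde{\bx}_i\bigl(\tilde{\bx}^\top\bQ\bH_i\bQ\tilde{\bx}\bigr)$ in which every quadratic form vanishes because $\bQ\bH_i\bQ$ is skew-symmetric. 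Thus \eqref{eq:GenEnergyPreservQuadTerm} holds for the translated system.

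Next, I would check that the parametrization $\tilde{\bA}=(\bJ-\bR)\bQ$ makes $(\bQ\tilde{\bA})_s$ strictly negative definite, which is the hypothesis needed to invoke \Cref{cor:atr_char}. A one-line computation gives
\begin{equation*}
(\bQ\tilde{\bA})_s \;=\; \tfrac{1}{2}\bigl(\bQ(\bJ-\bR)\bQ + \bQ(\bJ-\bR)^\top\bQ\bigr) \;=\; -\bQ\bR\bQ \;\prec\; 0,
\end{equation*}
since $\bQ$ and $\bR$ are SPD. With this in hand, \Cref{cor:atr_char} applies verbatim to the translated system \eqref{eq:Qsys_trans}: there is a radius $r>0$ outside of which $\bV(\tilde{\bx}(t))=\tfrac{1}{2}\tilde{\bx}^\top\bQ\tilde{\bx}$ is a strict Lyapunov function, the estimate $r=\|\tilde{\bB}\|/\sigma_{\min}((\bQ\tilde\bA)_s)$ carries over, and thus the ball $\cB(\bm,r)$ is an attracting trapping region of the original quadratic system after undoing the translation $\tilde{\bx}(t)=\bx(t)-\bm$.

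The argument is essentially an assembly of previously established ingredients, so I do not anticipate a serious obstacle. The only point deserving care is making explicit that the generalized energy-preservation condition is invariant under the translation $\bx\mapsto\bx-\bm$ (since it is a condition on $\bH=\tilde{\bH}$ alone and not on the linear or constant terms), so that \Cref{cor:atr_char} can be legitimately applied to \eqref{eq:Qsys_trans} with the Lyapunov candidate expressed in the shifted coordinates. Once this observation is recorded, the rest is bookkeeping.
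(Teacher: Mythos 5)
Your proposal is correct and takes essentially the same route the paper intends: the paper omits this proof entirely, stating only that it ``follows the same line of thoughts as \Cref{lemma:global_stable},'' and your assembly of the energy-preservation computation for $\tilde{\bH}=[\bH_1\bQ,\dots,\bH_n\bQ]$, the identity $(\bQ\tilde{\bA})_s=-\bQ\bR\bQ\prec 0$, and the invocation of \Cref{cor:atr_char} on the translated system is precisely that line of thought, now made explicit. The only point worth flagging is purely notational: \Cref{cor:atr_char} delivers the radius in terms of $\sigma_{\min}((\bQ\tilde{\bA})_s)=\sigma_{\min}(\bQ\bR\bQ)$, as you write, whereas the lemma statement prints $\sigma_{\min}((\bQ\bR)_s)$ --- a discrepancy in the paper's statement rather than a gap in your argument.
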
    
\begin{proof} The proof follows the same line of thoughts as \Cref{lemma:global_stable}. We, therefore, skip it for brevity of the paper.
\end{proof}
\Cref{lemma:par_atrapping_regions} provides a parametrization for \atr~systems, guiding us to cast the inference problem, which we discuss next. As for globally \atr~systems, we briefly comment on the case where $\bR$ is semi-definite. If $\bR$ is allowed to be semi-definite, then $\bV(\tilde{\bx}(t)) = \tilde{\bx}^{\top}(t)\bQ\tilde{\bx}(t)$ will be a non-strictly global Lyapunov function if $\bv^{\top}\tilde{\bB} = 0$, where $\tilde\bB$ is defined in \eqref{eq:Qsys_trans},  for every $\bv\in\R^n$ satisfying $\bR\bQ\bv = 0$. Moreover, when $\bR =0$, the system is energy-preserving  if and only if $\tilde{\bB} = 0$, and  $\bV(\tilde{\bx}(t))$ is constant.

\subsection{Attracting trapping regions informed learning}
In this section, we propose a methodology to guarantee a learned quadratic system as in \eqref{eq:quad_model} to be \atr. To this aim, we notice that we can rewrite the dynamics \eqref{eq:quad_model}  in terms of the matrices of the shifted system in \eqref{eq:Qsys_trans}. Indeed, since $\tilde{\bx}(t) = \bx(t) -\bm$, we have
\begin{equation}
	\dot{\bx}(t)= \tilde{\bA}(\bx(t) -\bm) +  \tilde{\bH}\left(\left(\bx(t) -\bm\right)\otimes \left(\bx(t) -\bm\right)\right) +\tilde{\bB},
\end{equation}
where $\tilde \bA$ is a stable matrix, and $\tilde\bH$ is energy-preserving. 
Then, we can leverage the parametrization from \Cref{lemma:par_atrapping_regions} to obtain an \atr~system. Since the vector $\bm$ is also unknown, it needs to be a part of the inference problem as well. As a consequence, we can write down the inference problem to obtain \atr~ systems using the corresponding data as follows:
\begin{equation}\label{eq:stable_learning_atr}
	\begin{aligned}
		&(\tilde \bJ, \tilde\bR, \tilde\bQ,\tilde\bH_1,\ldots, \tilde\bH_n,\tilde\bB, \bm) \\
		&\qquad\qquad = \underset{ \hat \bJ, \hat\bR,\hat \bQ, \hat\bH_1,\ldots,\hat\bH_n, \hat{\bB},\hat{\bm}}{\arg\min}  
		\left\|\dot{\bX}- (\hat \bJ- \hat \bR)\hat\bQ(\bX -\hat{\bm}) - \begin{bmatrix} \hat\bH_1\hat\bQ, \ldots, \hat\bH_n\hat\bQ \end{bmatrix}\left(\bX-\hat{\bm}\right)^{\otimes} +\hat{\bB}\right\|
		\\
		& \qquad\qquad\qquad \text{subject to} ~~ \hat \bJ = -\hat\bJ^\top,  \hat\bR =  \hat\bR^\top \succ 0,  \hat\bQ =  \hat\bQ^\top \succ 0,~\text{and}\\
		&\qquad\qquad\hspace{2.5cm}\hat\bH_i = -\hat\bH_i^{\top},~i\in\{1,\ldots,n\}, \hat{\bB}, \hat{\bm} \in \R^n.
	\end{aligned}
\end{equation}
Having the optimal tuple $(\tilde \bJ, \tilde\bR, \tilde\bQ,\tilde\bH_1,\ldots, \tilde\bH_n,\tilde\bB,\bm)$ for \eqref{eq:stable_learning_atr}, we can construct $\bA$, $\bH$ and $\bB$ as follows:
\begin{equation}
	\bA = (\tilde\bJ-\tilde\bR)\tilde\bQ - \bH(\bI \otimes \tilde\bm) - \bH(\bm \otimes\bI),  \bB = \tilde\bB- \bH(\bm \otimes \bm) - \bA\bm, ~~\text{and}~~\bH = \begin{bmatrix}\tilde\bH_1\tilde\bQ,\ldots, \tilde\bH_n\tilde \bQ \end{bmatrix},
\end{equation}
thus leading to a quadratic model of the form \eqref{eq:quad_model} that is \atr.  Moreover, the same parametrization tricks as used in \Cref{subsec:Glob_stability_parameterization} for $\bJ,\bR$, $\bQ$ and $\bH$, can be employed here. Thus, we can recast the constrained optimization problem \eqref{eq:stable_learning_atr} as an unconstrained one as follows: \begin{equation}\tag{\texttt{atrMI}}\label{eq:stable_learning_atr1}
	\begin{aligned}
		(\bm,\bar\bJ, \bar\bR, \bar\bQ,\bar\cH) &= \underset{\bm,\acute\bJ, \acute\bQ, \acute\bR, \acute\cH}{\arg\min}  \left\|\dot{\bX}-(\acute\bJ -\acute\bJ^{\top} - \acute\bR\acute\bR^\top) \acute\bQ\acute\bQ^\top(\bX-\bm) \right.\\
		&\left.\qquad\qquad\qquad- \left(\acute\cH^{(1)}- \acute\cH^{(2)}\right)\left(\bI\otimes\acute\bQ\acute\bQ^\top\right) (\bX-\bm)^\otimes - \tilde{\bB}\right\|_F.
	\end{aligned}
\end{equation}
This then allows the construction of the system \eqref{eq:quad_model} with matrices  $\bA$  and $\bH$ given by
\begin{equation}
	\begin{aligned}
		\bH &= \left(\bar\cH^{(1)}- \bar\cH^{(2)}\right)\left(\bI\otimes\bar\bQ\bar\bQ^\top\right),\\
		\bA &= \left(\bar\bJ -\bar\bJ^{\top} - \bar\bR\bar\bR^\top\right) \bar\bQ\bar\bQ^\top - \bH(\bI \otimes \bm) - \bH(\bm \otimes\bI).\\
	\end{aligned}
\end{equation}
Once again, the inferred system is guaranteed to be \atr~due to its dedicated parameterization.  

\section{Optimizing Through a Numerical Integration Scheme to Infer Operators}\label{sec:intergrationscheme}
To infer operators using \ref{eq:stable_learning_local1}, \ref{eq:stable_learning_global1}, and \ref{eq:stable_learning_atr1}, we require the derivative information. However, obtaining this information can be challenging using numerical methods, particularly when the data are noisy, scarce, or irregularly sampled. Therefore, it would be beneficial to infer operators via an integral form of the differential equations, a technique that is widely adopted for learning continuous-time models \cite{chen2018neural, goyal2022discovery,uy2022operator}. For this, let us consider the following general form of a differential equation:
$$\dot{\bx}(t) = \bg_{\boldsymbol{\alpha}}(\bx(t)),$$
where $\bg$ defines a vector field and is parameterized by learnable ${\boldsymbol{\alpha}}$. In our case, $\bg$ has a quadratic form, and ${\boldsymbol{\alpha}}$ contains all learnable operators such as $\bJ$ and $\bR$.

Given the data $\bx(t)$ and $\dot{\bx}(t)$ at $t\in\{t_1,\ldots,\cN\}$, we can cast the learning of $\mathbf{g}$ as follows:
\[\min_{\boldsymbol{\alpha}}\sum_i\|\dot\bx(t_i) - \bg_{\boldsymbol{\alpha}}(\bx(t_i))\|.\]
We can also re-cast the above optimization problem in an integral form as follows:
\begin{equation}\label{eq:integral_form}
	\min_{\boldsymbol{\alpha}}\sum_i\left\|\bx(t_{i+1}) - \int_{t_i}^{t_{i+1}}\bg_{\boldsymbol{\alpha}}(\bx(t))\dt\right\|,
\end{equation}
which then avoids the need for derivative information. However, it comes at the expense that the problem \eqref{eq:integral_form} is more involved. The concepts of NeuralODEs \cite{chen2018neural} are developed, allowing us to solve such problems efficiently by taking a gradient of the learnable parameters through any numerical integration method with arbitrary accuracy. Despite this method being memory efficient $(\cO(1))$, it might invoke several function calls (in this case, the function $\bg$). This might add to computational costs. Therefore, at the expense of memory and accuracy, we approximate the integral in \eqref{eq:integral_form} using a fourth-order Runge-Kutta method as follows:
\begin{equation}\label{eq:integration}
\bx(t_{i+1}) = \Phi_{\dt}\left(\bg_{\boldsymbol{\alpha}}(\bx(t)), \bx(t_i\right) := \bx(t_i) + \dt\left(\bh_1 +\bh_2 + \bh_3 + \bh_4\right),
\end{equation}
where $\dt = t_{i+1} - t_i$ and
\begin{equation}
\begin{aligned}
\bh_1 &= \tfrac{1}{6}\bg_{\boldsymbol{\alpha}}(\bx(t_i)), & \bh_2 &= \tfrac{1}{3}\bg_{\boldsymbol{\alpha}}\left(\bx(\bt_i) + \dfrac{\dt}{2}\bh_1\right) \\
\bh_3 &= \tfrac{1}{3}\bg_{\boldsymbol{\alpha}}\left(\bx(t_i) + \tfrac{\dt}{2}\bh_2\right) & h_3 &= \dfrac{1}{6}\bg_{\boldsymbol{\alpha}}\left(\bx(t_i) + \dt \bh_3\right).
\end{aligned}
\end{equation}
Subsequently, we can write the objective function \eqref{eq:integral_form} as follows:
\begin{equation}\label{eq:Loss_function_RK}
\min_{\bar{\boldsymbol{\alpha}}} \sum_i\left\|{\bx}(t_{i+1})- \Phi_{\texttt{\dt}}\left(\bg_{\bar{\boldsymbol{\alpha}}}, \bx(t_i)\right)\right\|_F.
\end{equation}
We can utilize the above formation in our inference problems. For example, we can write the \ref{eq:stable_learning_local1} problem as follows:
\begin{equation}\label{eq:stable_learning_local_integral}
\begin{aligned}
\bar{\boldsymbol{\alpha}} &= \underset{\acute{\boldsymbol{\alpha}}}{\arg\min}  \sum_{i}\|{\bx}(t_{i+1})- \Phi_{\texttt{\dt}}\left(\tilde\bg_{\acute{\boldsymbol{\alpha}}}, \bx(t_i)\right)\|_F,
\end{aligned}
\end{equation}
where $\acute{\boldsymbol{\alpha}} = (\acute\bJ, \acute\bR, \acute\bQ,\acute\bH)$ and $\tilde\bg_{\acute{\boldsymbol{\alpha}}}(\bx) = (\acute\bJ - \acute\bJ^\top - \acute\bR\acute\bR^\top)\acute\bQ\acute\bQ^\top \bx + \acute\bH(\bx\otimes \bx) $.  This way, we can combine a numerical integration scheme with \ref{eq:stable_learning_local1} to infer the operators, thus avoiding the need of the derivative information at any stage. This can be done analogously for \ref{eq:stable_learning_global1} and \ref{eq:stable_learning_atr1} as well.

\section{Numerical Experiments}\label{sec:Exp}
In this section, we demonstrate the effectiveness of the proposed stability-guaranteeing methodologies, namely \ref{eq:stable_learning_local1}, \ref{eq:stable_learning_global1}, and \ref{eq:stable_learning_atr1}. 
To evaluate the performance of these methodologies, we apply them to two case studies involving high-dimensional data obtained from Burgers' equation and Chafee-Infante equations. Additionally, we use two examples, namely the chaotic Lorenz model and the magneto-hydrodynamics model, to demonstrate the utility of the proposed methodologies for discovering governing equations via sparse regression and learning energy-preserving models. We shall discuss a precise set-up of each example in their respective subsections. However, note that all three proposed methodologies incorporate a numerical integration scheme as discussed in \Cref{sec:intergrationscheme} so that we are not required to estimate the derivative information from data that could be challenging if they are scarce and noisy. This also enhances the performance of the classical \opinf~as well, as discussed in \cite{uy2022operator}. 

\paragraph{Training set-up:} Given that the involved optimization problems  in inferring (sparse) operators are nonlinear, non-convex, and lack analytical solutions, we adopt a gradient descent method for identification.
Specifically, we utilize the Adam optimizer \cite{kingma2014adam} with a triangular cyclic learning rate ranging from $10^{-6}$ to $10^{-2}$ over a cycle of $4~000$ steps and make $12~000$ updates in total. The coefficients of all matrices are initialized with random values drawn from a Gaussian distribution with a mean of $0$ and a standard deviation of $0.1$.
To evaluate the performance of our proposed methodology, we compare it with the operator inference method discussed in \cite{uy2022operator}, which utilizes the integral form of differential equations to learn operators but without imposing any constraints. We refer to it as \opinfbenchmark. To ensure a fair comparison, we also use the Runge-Kutta integration scheme as discussed in \Cref{sec:intergrationscheme} for \opinfbenchmark.
Furthermore, for sparse regression problems to discover governing equations, we employ a sequential thresholding algorithm \cite{brunton2016discovering,goyal2022discovery}. We set the tolerance $0.1$ and perform hard-pruning of coefficients below the tolerance after each $12~000$ updates. We optimize the remaining non-zero coefficients. We repeat this process four times to obtain a sparse parsimonious model which exhibits the desired stability properties. 
We implement all the methods using PyTorch on a $12$th Gen \intel~\coreifive-12600K 32GB RAM.

\paragraph{Setting $\bQ=\bI$ for \gasmi~and \atrmi:} 
The involvement of matrix $\bQ$ in defining both matrices $\bA$ and $\bH$ (see \Cref{lemma:global_stable,lemma:par_atrapping_regions}) makes the corresponding optimization problem even more challenging to solve.
To ease the problem, we set $\bQ$ to identity. It results in monotonic Lyapunov functions for \gas~and \atr. However, it can indeed limit the flexibility of the learned operators. We had attempted to learn $\bQ$ along with other parameters but encountered challenges with convergence and got stuck in bad local minima. In future work, we plan to investigate more efficient ways of solving the optimization problem when $\bQ \neq \bI$. This will require developing novel strategies to effectively handle the additional complexity introduced by the matrix $\bQ$. 
\subsection{Learning reduced operators for high-dimensional data via \opinf}
We begin by considering high-dimensional data and learning the corresponding operators in a low-dimensional state-space.
\subsubsection{Burgers' equations with Dirichlet boundary conditions}\label{sec:burgers_diri}
In our first example, we consider the one-dimensional Burgers' equation, which is governed by the following equations:
\begin{equation}\label{eq:burgers}
\begin{aligned}
v_t +  v v_\zeta&=  \mu \cdot v_{\zeta\zeta} & & \text{in}~(0,1)\times (0,T), \\
v(0,\cdot)  & = 0, ~~\text{and~~}
v(1,\cdot) = 0, &&\\
v(\zeta,0) &= v_0(\zeta) & & \text{in}~(0,1),
\end{aligned}
\end{equation}
where $v_t$ and $v_{\zeta}$ denote the derivative of $v$ with respect to the time $t$ and spatial coordinates $\zeta$, and $v_{\zeta\zeta}$ denotes the second derivative of $v$ with respect to $\zeta$. We set $\mu = 0.05$. We discretize using a central finite difference scheme. We consider $250$ equidistant grid points in the spatial domain, and $500$ equidistant data points are collected in the time-interval $[0,1]$. Furthermore, as in \cite{morGoyPB23}, we collect data using $17$ different initial conditions, i.e., 
$$v_0(\zeta) = \sin(f\cdot 2\pi \zeta) \zeta(1-\zeta), \qquad f = \{3, 3.125,\ldots, 4.875,5.0\}.$$
Taking data corresponding to three initial conditions ($f = \{3.5, 4.0, 4.5\}$) out for testing, we use the rest of them for learning the operators of a quadratic system as in \eqref{eq:quad_model}.

First, we plot the decay of the singular values of the training data in \Cref{fig:burgers_svd}. 
This serves as an indicator of the number of dominant modes that effectively capture the amount of energy present in the dataset. We notice a rapid decay of the singular values, thus a possibility of constructing low-order models yet capturing the underlying dynamics accurately. 
\begin{figure}[tb]
	\centering
	\includegraphics[width = 0.5\textwidth]{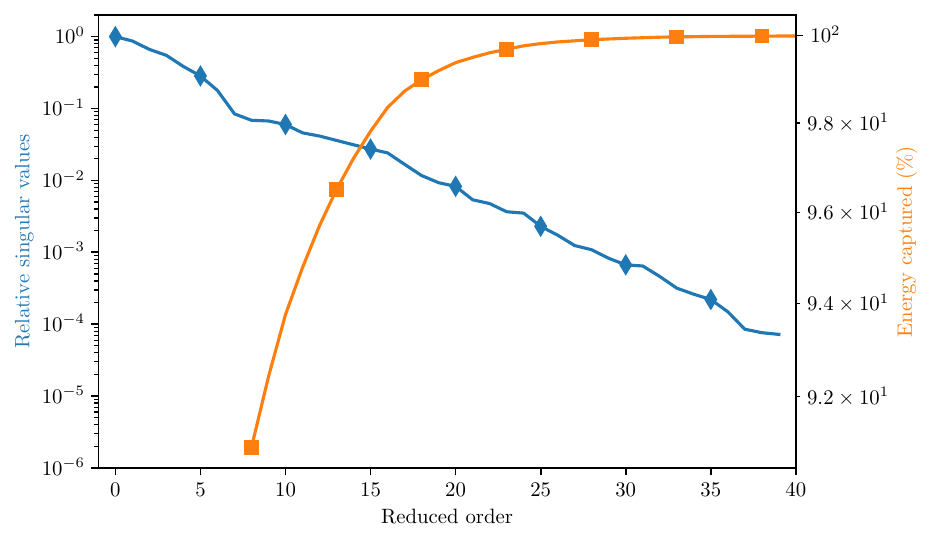}
	\caption{Burgers' equation  with Dirichlet boundary conditions: Decay of the singular values obtained using the training data. The orange graph indicates how much energy is captured by how many dominant modes. }
	\label{fig:burgers_svd}
\end{figure} 

In our first study, we aim to solve the inference problem using \opinfbenchmark, \lasmi, and \gasmi\ to obtain the corresponding dynamical systems. It is known that operator inference problems can be ill-conditioned; hence, they may require a regularization, which 
penalizes the norm of the nonlinear terms, namely the matrix $\bH$, in the optimization problem. In this study, along with the corresponding optimization problem for  \opinfbenchmark, \lasmi, and \gasmi, we add a regularizer based on the mean $l_1$-norm of the matrix $\bH$, i.e., $\lambda_\bH \dfrac{1}{m}\|\bH\|_{l_1}$, where $m$ is the number of the elements in the matrix $\bH$. A suitable choice of $\lambda_\bH$ is crucial for learning good quality data-driven models, particularly for \opinfbenchmark. Thus, to investigate the performance of these considered methods to learn operators, we set the order of the inferred model to $20$ and vary $\lambda_\bH$. We then test the quality of the learned models on the left-out testing data, which is discussed in the following.

We first project the left-out testing initial conditions on the corresponding low-dimensional subspace of order $20$ determined using the dominant subspace of the training data. We then simulate the reduced-order model in the time interval $[0,1]$ by considering $501$ points. We then re-project the solutions on the high-dimensional state-space using the same projection matrix. To measure the quality of the solution, we compute the relative $L_2$-norm of the error between the ground truth and the predicted solution using the learned models as follows:
\begin{equation}\label{eq:l2_measure_test}
\dfrac{\|\bX^{\texttt{ground-truth}} - \bX^{\texttt{learned}}\|_2}{\|\bX^{\texttt{ground-truth}}\|_2},
\end{equation}
where $\bX^{\texttt{ground-truth}}$ and $\bX^{\texttt{learned}}$, respectively, store the solutions using the high-fidelity models and using the inferred reduced quadratic models for a considered initial condition. The results for three test cases are plotted in \Cref{fig:Burgers_Dirichilet_time_domain}, where we notice that \opinfbenchmark~is quite sensitive to the parameter $\lambda_H$ and yields even unstable models, particularly for smaller $\lambda_H$. The method \lasmi~ yields also unstable models for various $\lambda_H$, but less often than \opinfbenchmark. On the other hand, \gasmi~ provides globally stable models for values of $\lambda_\bH$ as the stability property is encoded directly in its parameterization. 
However, for the value  $\lambda_\bH = 10^{-3}$, all these methods tend to  perform well.
For this value of $\lambda_\bH$, we present the solutions obtained using these learned models for one of the test cases on the full grid in \Cref{fig:burgers_time_domain_onetraj}, illustrating that the inferred model captures the essential dynamics.

\begin{figure}[tb]
	\centering
	\includegraphics[width = 0.95\textwidth]{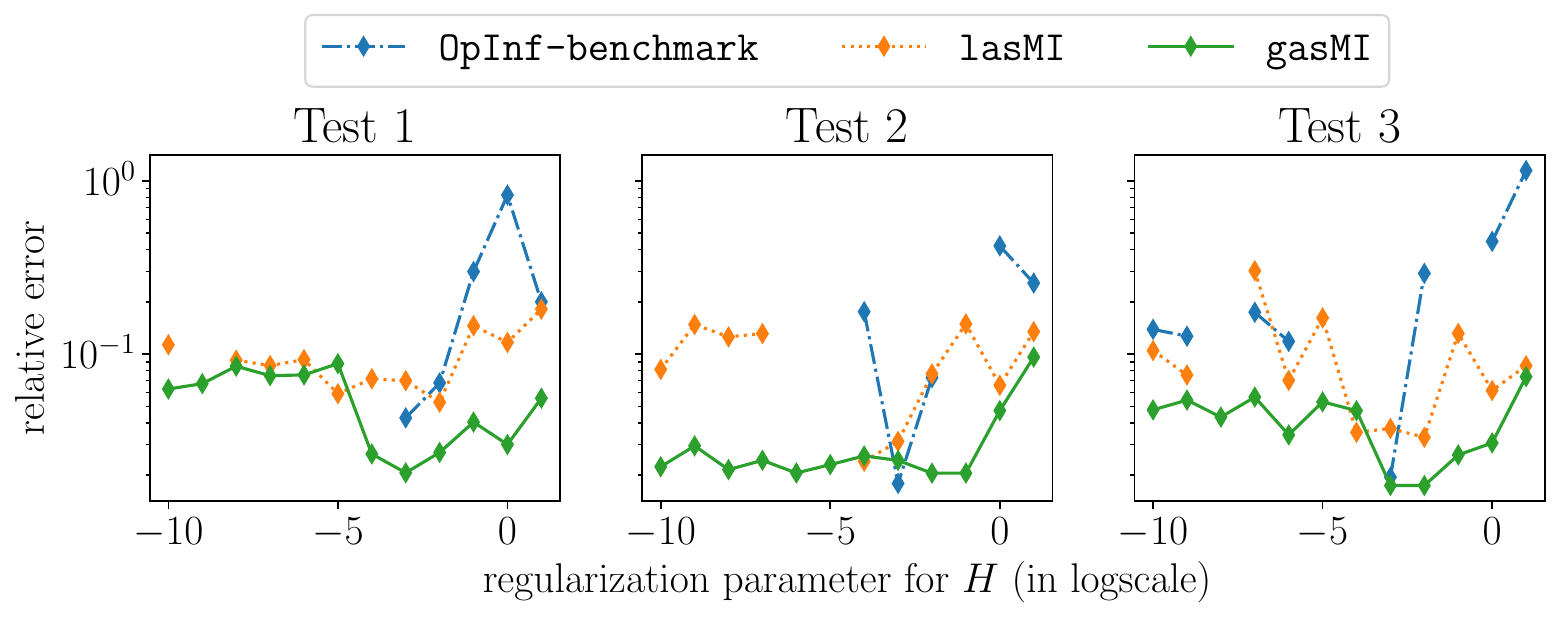}
	\caption{Burgers' equation  with Dirichlet boundary conditions: A performance over all the test data of the inferred models. The missing dots indicate instability of the model.}
	\label{fig:Burgers_Dirichilet_time_domain}
\end{figure} 

\begin{figure}[!tb]
	\centering
	\begin{subfigure}[t]{0.8\textwidth}
		\includegraphics[width = 0.95\textwidth]{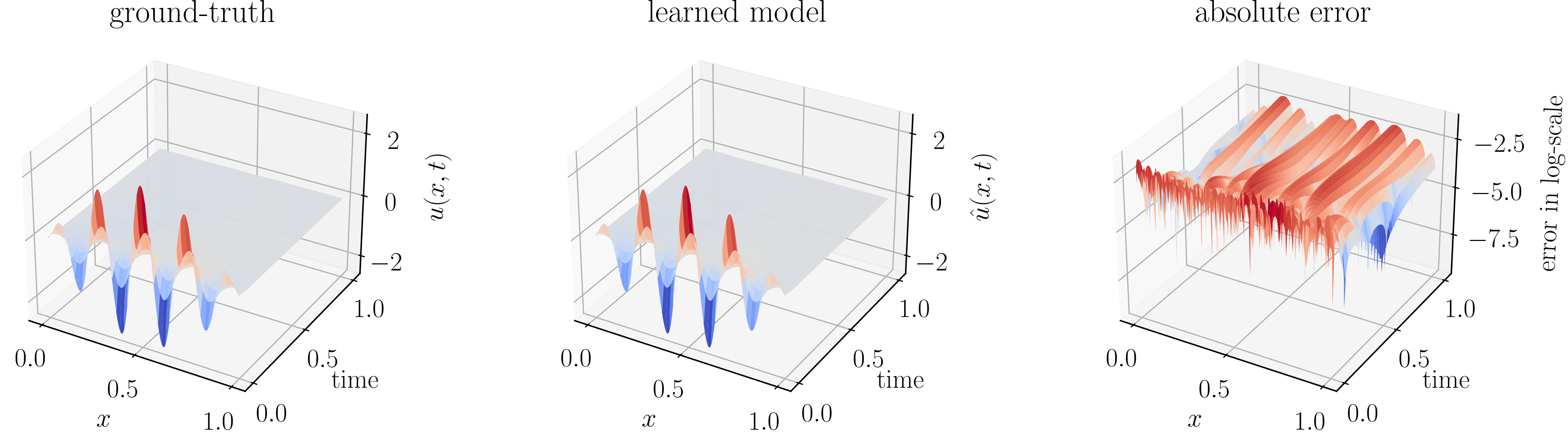}
		\caption{Using \opinfbenchmark.}
	\end{subfigure}
	\begin{subfigure}[t]{0.8\textwidth}
		\includegraphics[width = 0.95\textwidth]{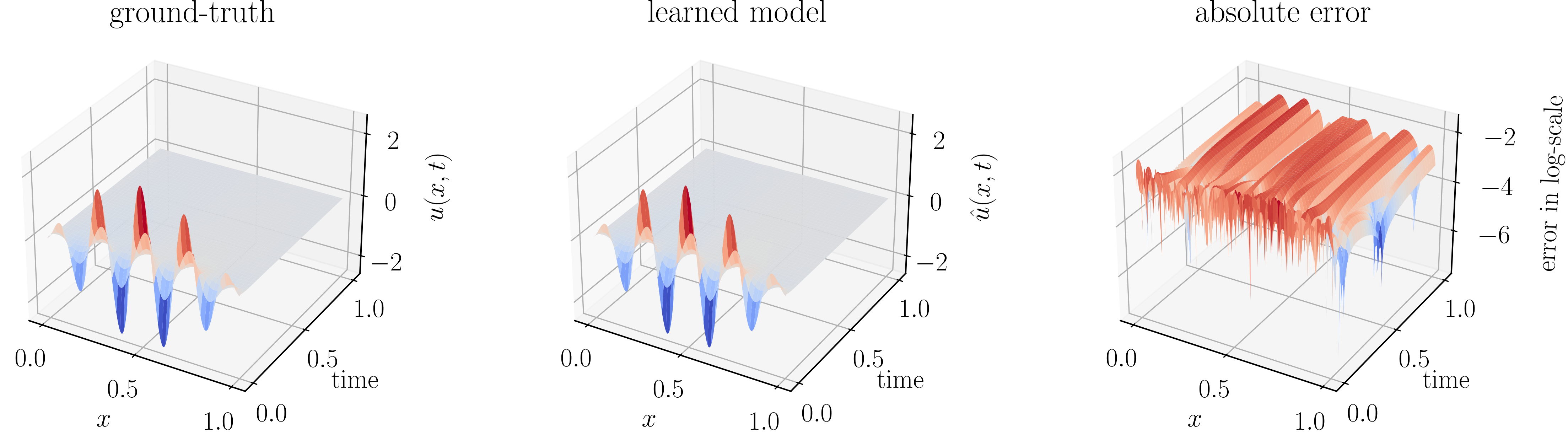}
		\caption{Using \lasmi.}
	\end{subfigure}
	\begin{subfigure}[t]{0.8\textwidth}
		\includegraphics[width = 0.95\textwidth]{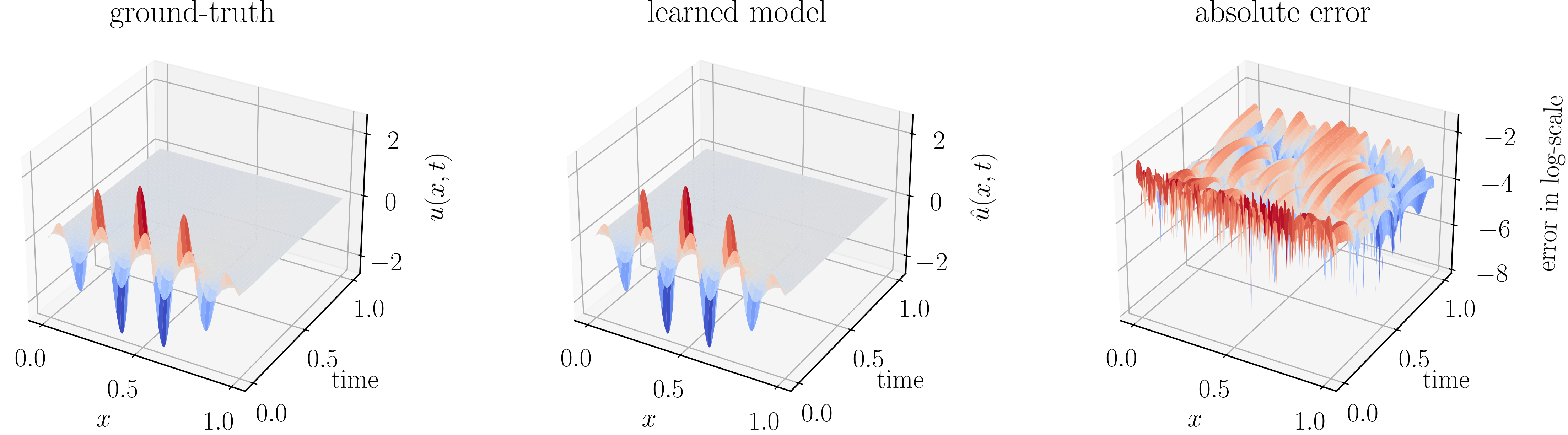}
		\caption{Using \gasmi.}
	\end{subfigure}
	\caption{Burgers' equation with Dirichlet boundary conditions: A comparison of the time-domain simulations of the inferred models on an initial test condition.}
	\label{fig:burgers_time_domain_onetraj}
\end{figure}
In our second study, we fix $\lambda_\bH$ to $10^{-3}$ and vary the order of the learned models from $16$ to $30$ in increments of $2$. We then infer the corresponding operators using \opinfbenchmark, \lasmi, and \gasmi. We determine the quality of the models on the test cases by taking the mean of the relative $L_2$-measure given in \cref{eq:l2_measure_test} over the test cases. We show the results in \Cref{fig:burgers_time_domain}. It demonstrates that \opinfbenchmark\ performs poorly for high-order models as compared to \las~and \gas. Furthermore, we notice a superior performance of \gas~for all orders.

Moreover, we highlight that \lasmi~learns models, which are \las~ which can be checked via the eigenvalues of the matrix $\bA$ in the system \eqref{eq:quad_model}. None of the stability properties can be guaranteed by \opinfbenchmark, and to show this, we closely look at the eigenvalues of the learned linear operator $\bA$ using all three methods. For this, we plot the eigenvalues of the linear operator $\bA$ for order $r=20$ by projecting them on the unit circle in \Cref{fig:burgers_eigenvalues}. We can notice that \opinfbenchmark~is  locally unstable, whereas \lasmi~and \gasmi~yield locally and globally stable models, respectively. 

\begin{figure}[tb]
	\centering
	\begin{subfigure}[t]{0.49\textwidth}
		\centering
			\includegraphics[height = 6cm]{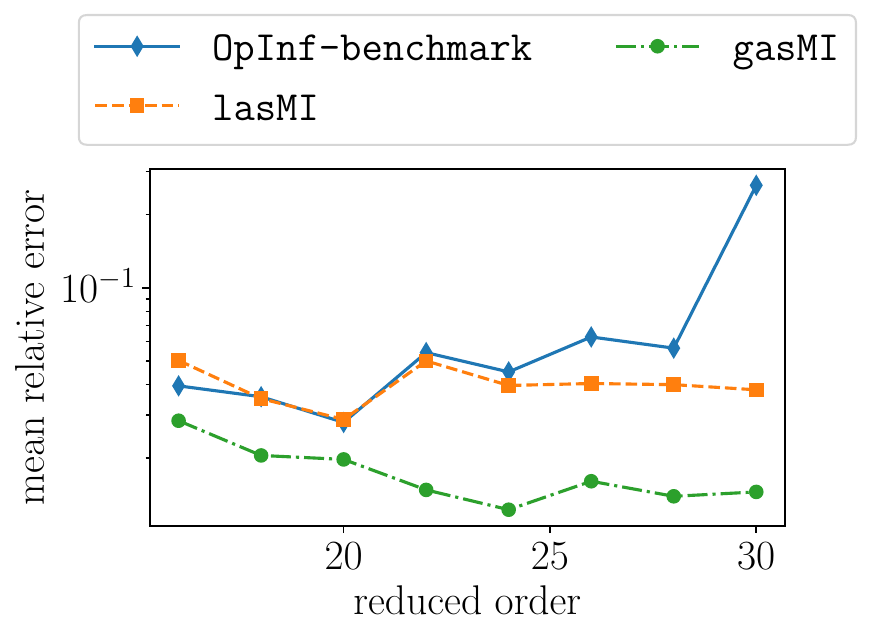}
		\caption{A mean performance over all the test data of the inferred models.}
		\label{fig:burgers_time_domain}
	\end{subfigure}
	\begin{subfigure}[t]{0.49\textwidth}
		\centering
		\includegraphics[height=6cm]{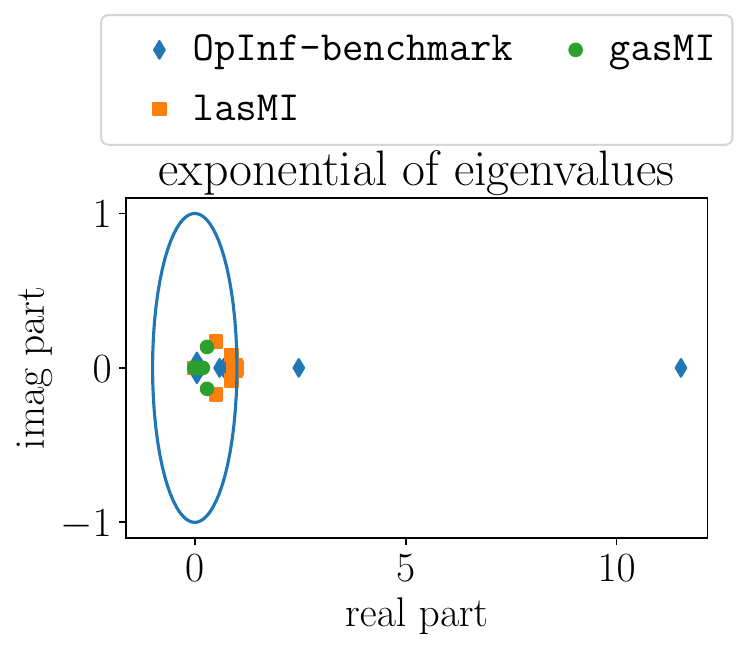}
		\caption{The eigenvalues of the inferred  linear operators but projected on the unit circle.}
		\label{fig:burgers_eigenvalues}
	\end{subfigure}
\caption{Burgers' equation with Dirichlet boundary conditions: Comparisons of the qualities of the inferred models.}
\end{figure}

\subsubsection{Chafee-Infante example}
In our second example, we consider the one-dimensional Chafee-Infante equation, which is  governed by the following equations:
\begin{equation}
\begin{aligned}
v_t + v^3  &=  v + v_{\zeta\zeta} & & \text{in}~(0,1)\times (0,T), \\
v_\zeta(0,\cdot)  & = 0,~~\text{and}~~v_\zeta(1,\cdot) &= 0, &&\\
v(\zeta,0) &= v_0(\zeta) & & \text{in}~(0,1),
\end{aligned}
\end{equation}
where, similar to the Burgers' equation, $v_t$ and $v_{\zeta\zeta}$ denote the derivative of $v$ with respect to\ the time $t$, and second derivative of $v$ with respect to the space $\zeta$. It is noteworthy that the governing equation exhibits a cubic nonlinearity. However, these equations can be written as a quadratic system by defining an auxiliary state variable as $\alpha v^2$, where $\alpha$ is a non-zero scalar, using lifting principles \cite{morGu09,morGu11,morBenB15,morBenG17,morBenGG18,morKraW19}.

Towards creating training and testing datasets, we first discretize the governing PDE using $1~000$ equidistant points for a finite-difference scheme. Then, we consider different initial conditions to collect data, as we did for the Burgers' equation in the previous example. We parameterize the initial condition as follows:
\begin{equation}
v(\zeta,0) = 0.1 + f\cdot(\sin(4\pi \zeta))^2,
\end{equation}
for a given $f$. We consider $13$ initial conditions by taking $13$ equidistant points for $f$ in the interval $[1,3]$. Assuming the considered values for $f$ are sorted in increasing order, we take the $4$th, $8$th, and $11$th indices for testing and the remaining $10$ values for training. For each initial condition, we take $500$ points in the time interval $[0,8]$.

To learn models, we use a lifting transformation as done in \cite{morGu09, morBenB15, QKPW2020_lift_and_learn}. To this aim,  we augment the state using $w := \alpha v^2$ so that a quadratic model can jointly describe the dynamics of both $v$ and $w$. Before that, we also shift $v$ by $1$ to have zero as an equilibrium point.
Moreover, we set $\alpha$ to $0.5$ so that the singular values of $v$ and $w$ are of the same order (at least the dominant ones); otherwise, we observed numerical difficulties in inferring the operators correctly. We first plot the decay of the singular values for $v$ and $w$ in \Cref{fig:chafee_svd}. 
\begin{figure}[tb]
	\centering
	\includegraphics[width = 0.5\textwidth]{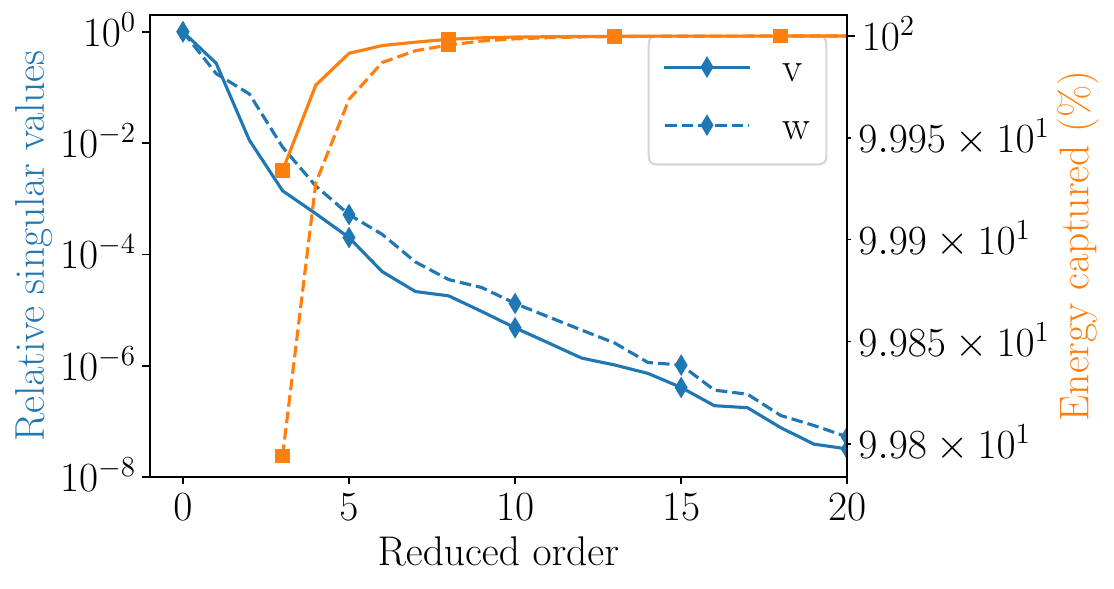}
	\caption{Chafee Infante equation: Decay of the singular values for the $v$ and $w := \tfrac{1}{2}v^2$ using the training data.}
	\label{fig:chafee_svd}
\end{figure} 
The figure indicates a rapid decay of the singular values, confirming a low-dimensional representation of the data should be possible. After that, we determine the dominant subspaces for $v$ and $w$ based on how much energy is aimed to capture. We stress that we determine dominant subspaces for $v$ and $w$ separately---denoted by $\bV_{\text{p}}$ and $\bW_{\text{p}}$---so that information of these two quantities are not mixed. In our experiments, this yields a more robust inference. We then compute the projection matrix $\bV := \diag{\bV_{\text{p}}, \bW_{\text{p}}}$. Using the matrix $\bV$, we determine a low-dimensional coordinate system for which we aim to learn reduced operators. By varying the amount of the energy captured by $\bV$, we obtain different dimensions of the reduced coordinate system, thus the order of learned models. We learn reduced-order operators using all three considered methods. The performance of these models is determined using the left-out test data. For reduced-order $r=7$, we plot the time-domain simulation for one of the test initial conditions in \Cref{fig:Chafee_time_domain_onetraj}. %
\begin{figure}[!tb]
	\centering
	\begin{subfigure}{0.9\textwidth}
		\includegraphics[width = 0.95\textwidth]{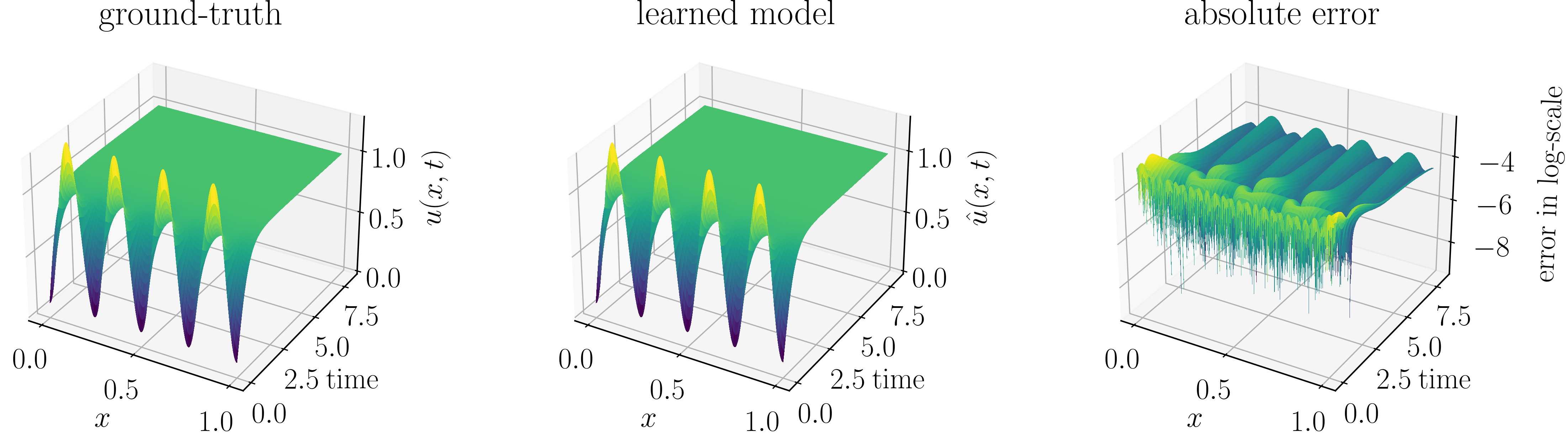}
		\caption{Using \opinfbenchmark.}
	\end{subfigure}
	\begin{subfigure}{0.9\textwidth}
		\includegraphics[width = 0.95\textwidth]{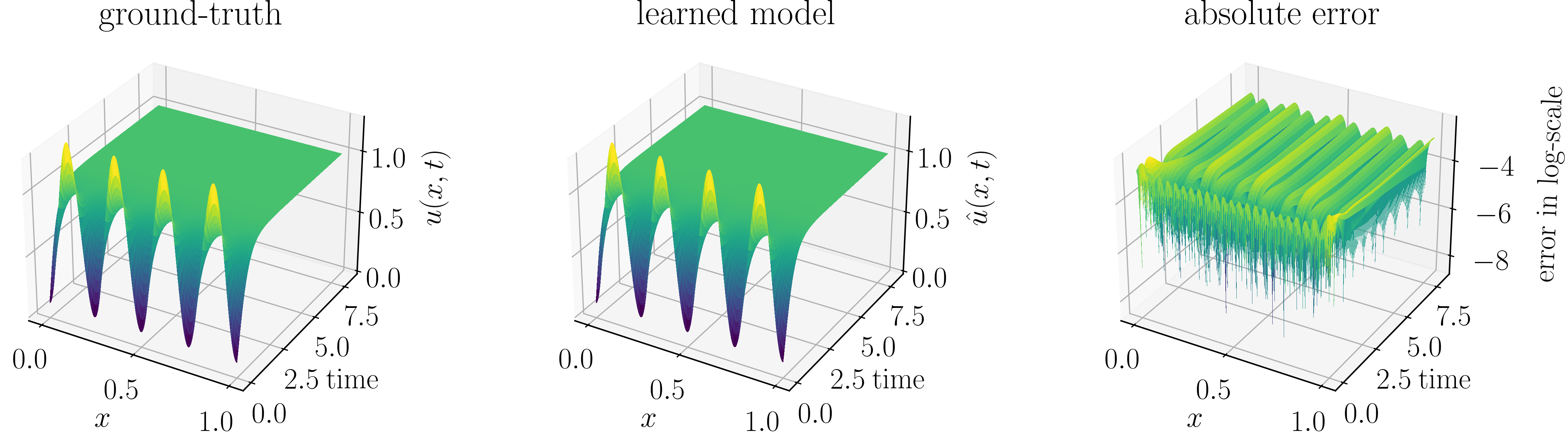}
		\caption{Using \lasmi.}
	\end{subfigure}
	\begin{subfigure}{0.9\textwidth}
		\includegraphics[width = 0.95\textwidth]{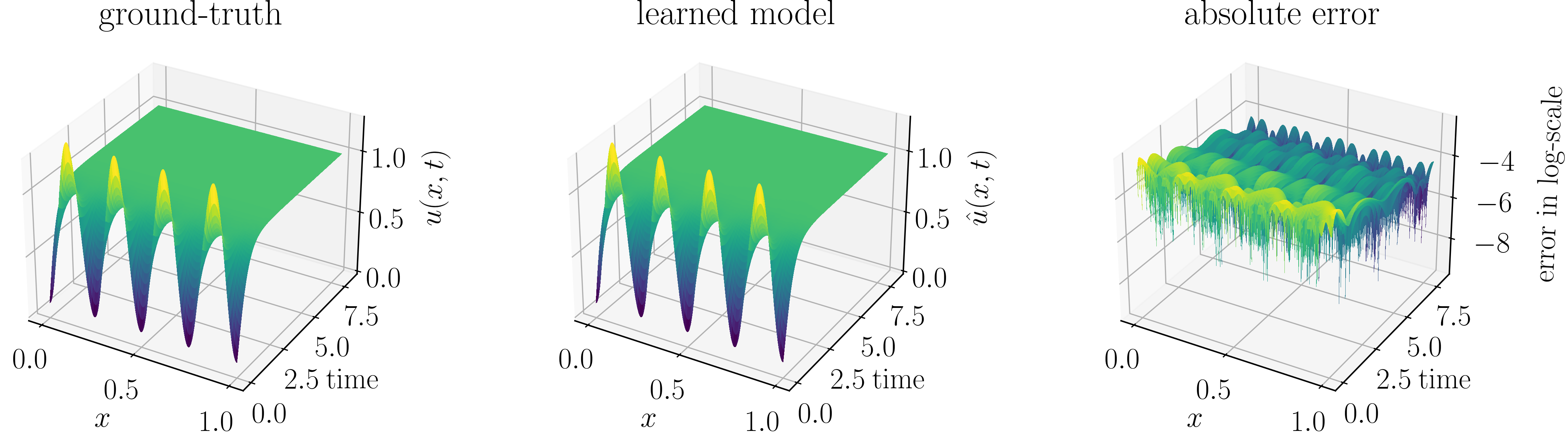}
		\caption{Using \gasmi.}
	\end{subfigure}
	\caption{Chafee-Infante equation: A comparison of the time-domain simulations of the learned models on an initial condition for testing.}
	\label{fig:Chafee_time_domain_onetraj}
\end{figure}

\begin{figure}[!tb]
	\centering
	\includegraphics[width = 0.5\textwidth]{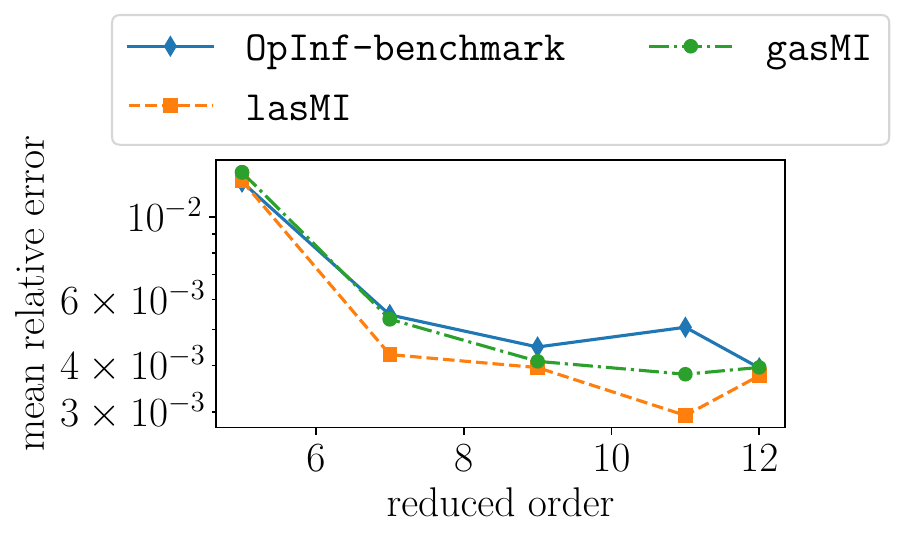}
	\caption{Chafee-Infante equation: A mean performance over all the test data using the inferred models. }
	\label{fig:Chafee_time_domain}
\end{figure} 

Moreover, the mean error over all the initial testing conditions---similar to the previous example---is plotted in \Cref{fig:Chafee_time_domain}. For this particular example, we observe that all the inferred models were locally stable, including for \opinfbenchmark, although it does not explicitly impose the stability. However, we notice a better performance of \lasmi~and \gasmi~for all three test cases as compared to \opinfbenchmark. Moreover, \gasmi---which guarantees global stability---also produces comparable models to \lasmi~despite \gasmi~posing more restrictions on the operators. 

\subsection{Discovery of governing equations via sparse regression}
Next, we test the proposed stability-guaranteed learning in the context of sparse regression to discover governing equations. 
\subsubsection{Chaotic Lorenz models}
The Lorenz model is one of the simplest models describing chaotic dynamics with roots in fluid dynamics \cite{lorenz1963deterministic}. Its dynamics are given by 
\begin{equation}
\begin{bmatrix}
\dot{\bx}(t) \\ \dot{\by}(t) \\ \dot{\bz}(t)
\end{bmatrix} = \begin{bmatrix}
-10\bx(t) + 10\by(t) \\ \bx(t)(28-\bz(t)) - \by(t) \\ \bx(t)\by(t) - \dfrac{8}{3}\bz(t)
\end{bmatrix}.
\end{equation}
We gather $5~000$ data points in the time interval $t = 0$ to $t=20$ with the initial condition $[-8.0, 7.0, 27.0]$. We corrupt the data by adding Gaussian noise of mean $0$ and standard deviation $0.1$. 
Like in \cite{goyal2022discovery}, we shift and re-scale the data as follows:
\begin{equation}
\tilde\bx(t) = \dfrac{\bx(t)}{8},\quad \tilde\by(t) = \dfrac{\by(t)}{8},\quad \text{and}\quad \tilde\bz(t) = \dfrac{\bz(t)-25}{8}. 
\end{equation}
With the shifting, we bring the mean and standard deviation of all the variables close to $0$ and $1$, respectively, without changing the interaction pattern, see \cite{goyal2022discovery}.

Note that the Lorenz system does not have any stable equilibrium point, but it is globally bounded and stable. Thus, it allows us to employ the trapping globally stable arguments discussed in \Cref{sec:atr}.  We employ \texttt{RK4-SINDy} \cite{goyal2022discovery}, inspired by the sparse regression for system identification (SINDy) approach \cite{brunton2016discovering} so that we can discover the underlying model without requiring derivative information. Instead of derivative information, \texttt{RK4-SINDy} also uses the integral form and approximates it using a fourth-order Runge-Kutta scheme. Note that there exist other methodologies, see, e.g., \cite{messenger2021weak}, which also avoid the need for the derivative information to discover governing equations in continuous time.

Next, we combine the discussion in Subsection~\ref{sec:atr} with sparse regression, where we seek to determine sparse matrices $\bA$ (or $\bJ$ and $\bR$), $\bH$, and $\bB$. To that end, we use an iterative thresholding scheme \cite{brunton2016discovering,goyal2022discovery}, in which small coefficients are hard-pruned. 
We learn models using both approaches and compare their performance by simulating for different initial conditions. We consider three test initial conditions as $[10,10,-10]$, $[100,-100,100]$, and $[-500,500,500]$. Note that these test initial conditions are even of a different magnitude than the training one. We plot the corresponding responses in \Cref{fig:Lorenz_comparison}, where we observe that \atrmi~yields models which are globally stable by construction and can faithfully capture the Lorenz dynamics. The \atrmi~~model results in dynamics correctly on the attractor. In contrast, \texttt{RK4-SINDy} does not guarantee global stability, and for very large initial conditions (for example, in 2-norm), it produces an unstable trajectory. 
\begin{figure}[!tb]
	\centering
	\begin{subfigure}{0.95\textwidth}
		\centering
		\includegraphics[width = 0.32\textwidth]{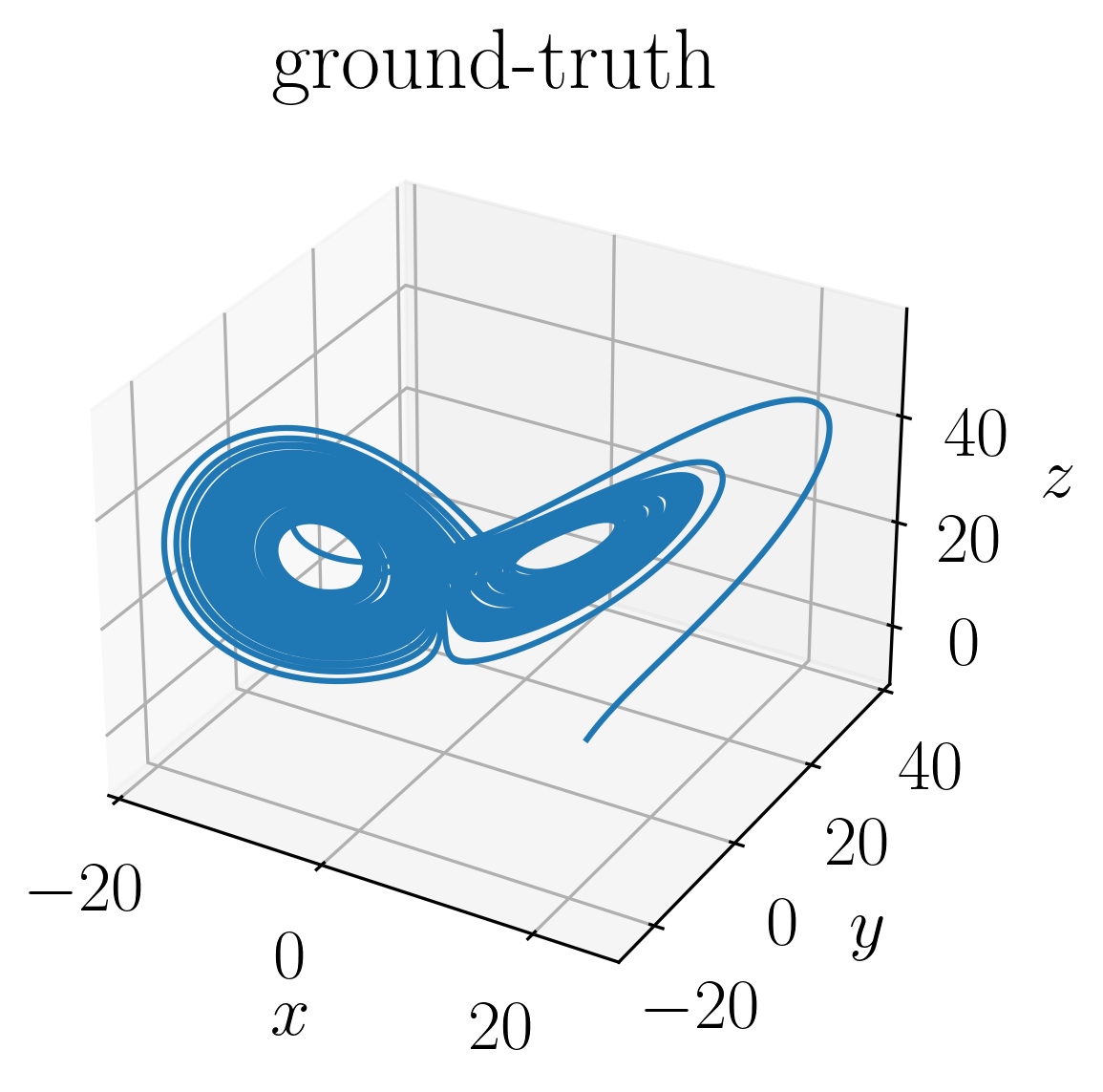}
		\includegraphics[width = 0.32\textwidth]{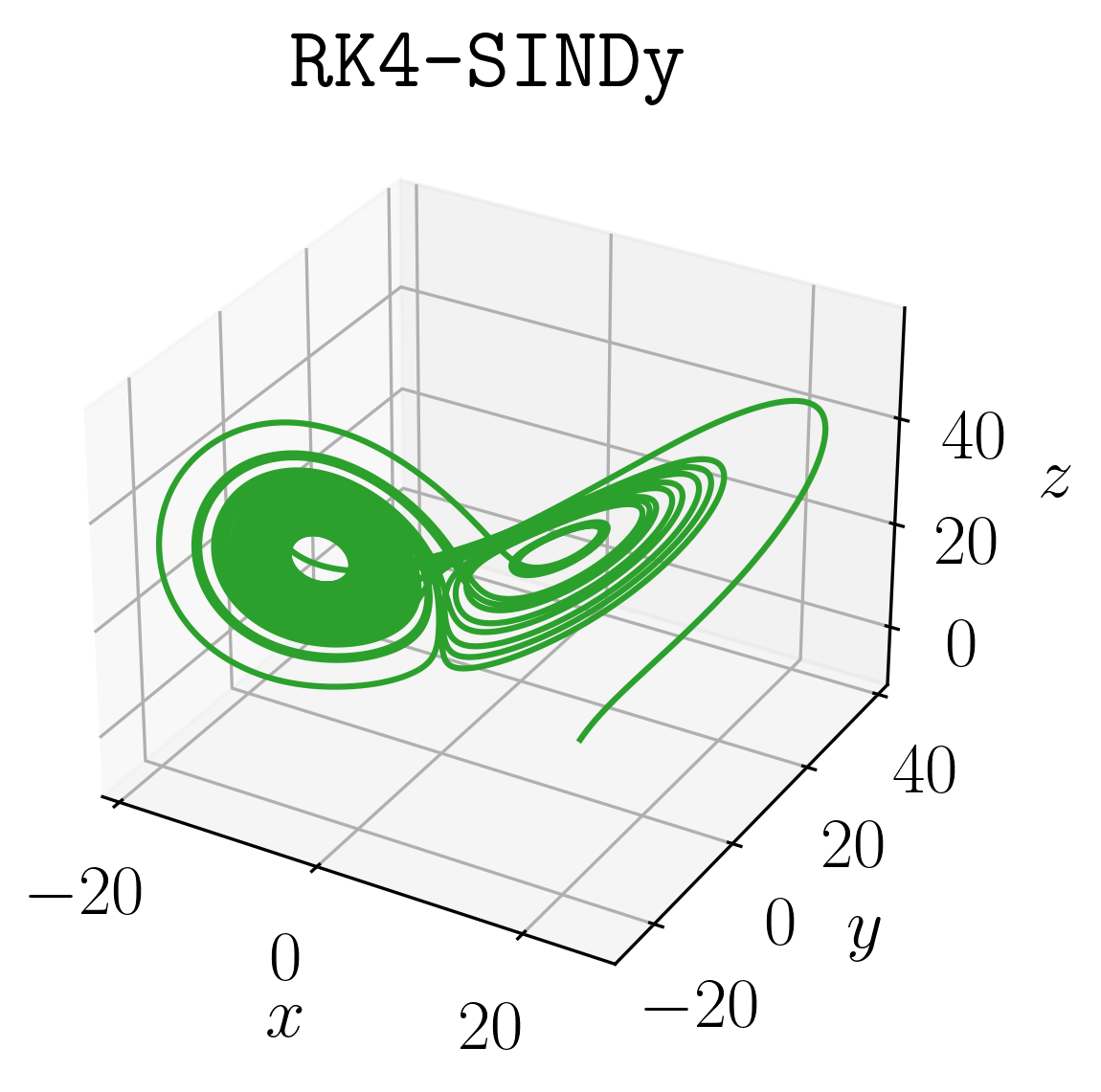}
		\includegraphics[width = 0.32\textwidth]{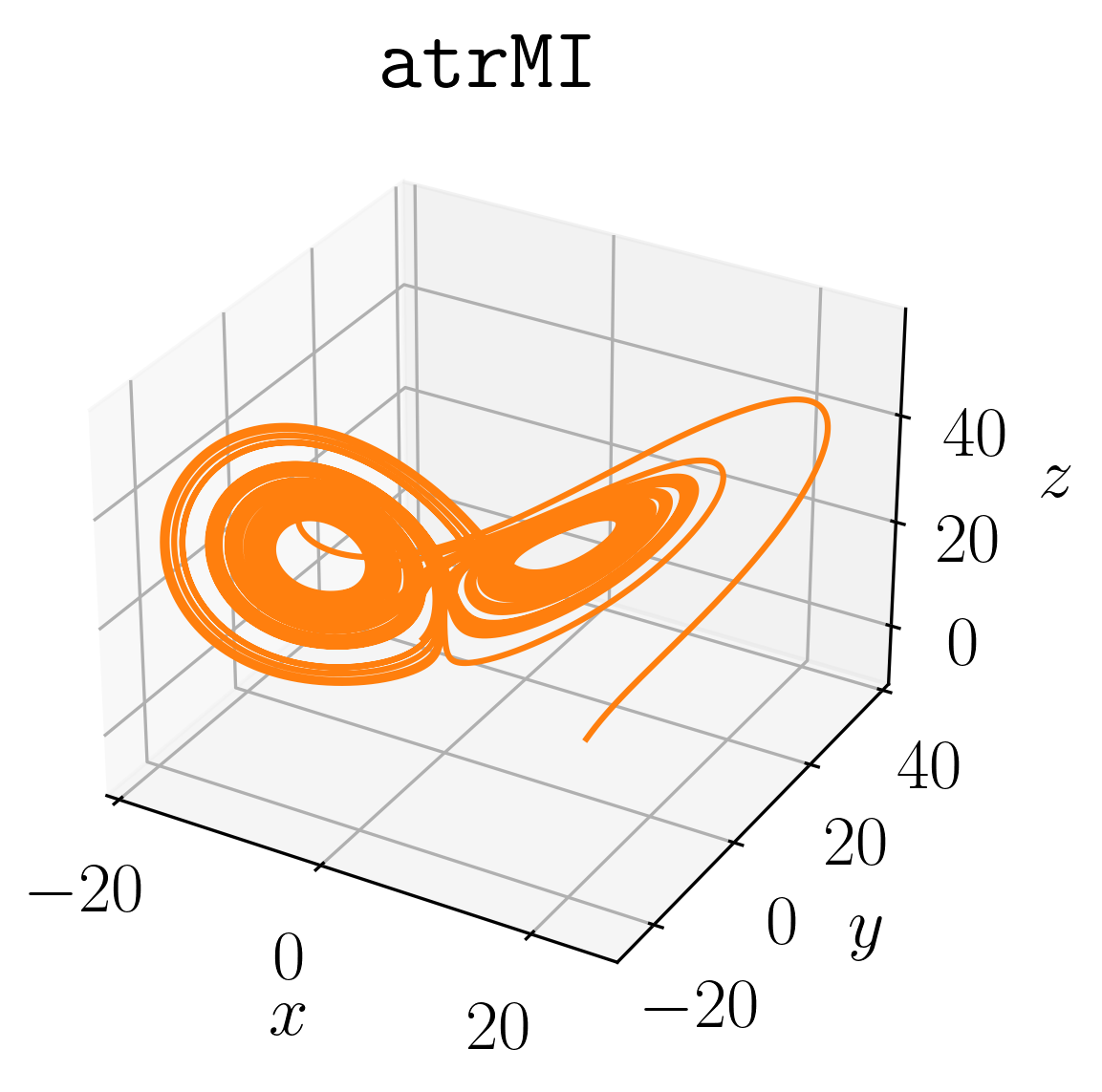}
		\caption{For initial condition $[10,10,-10]$.}
		\label{fig:lorenz_first}
	\end{subfigure}
	\begin{subfigure}{0.95\textwidth}
		\centering
	\includegraphics[width = 0.32\textwidth]{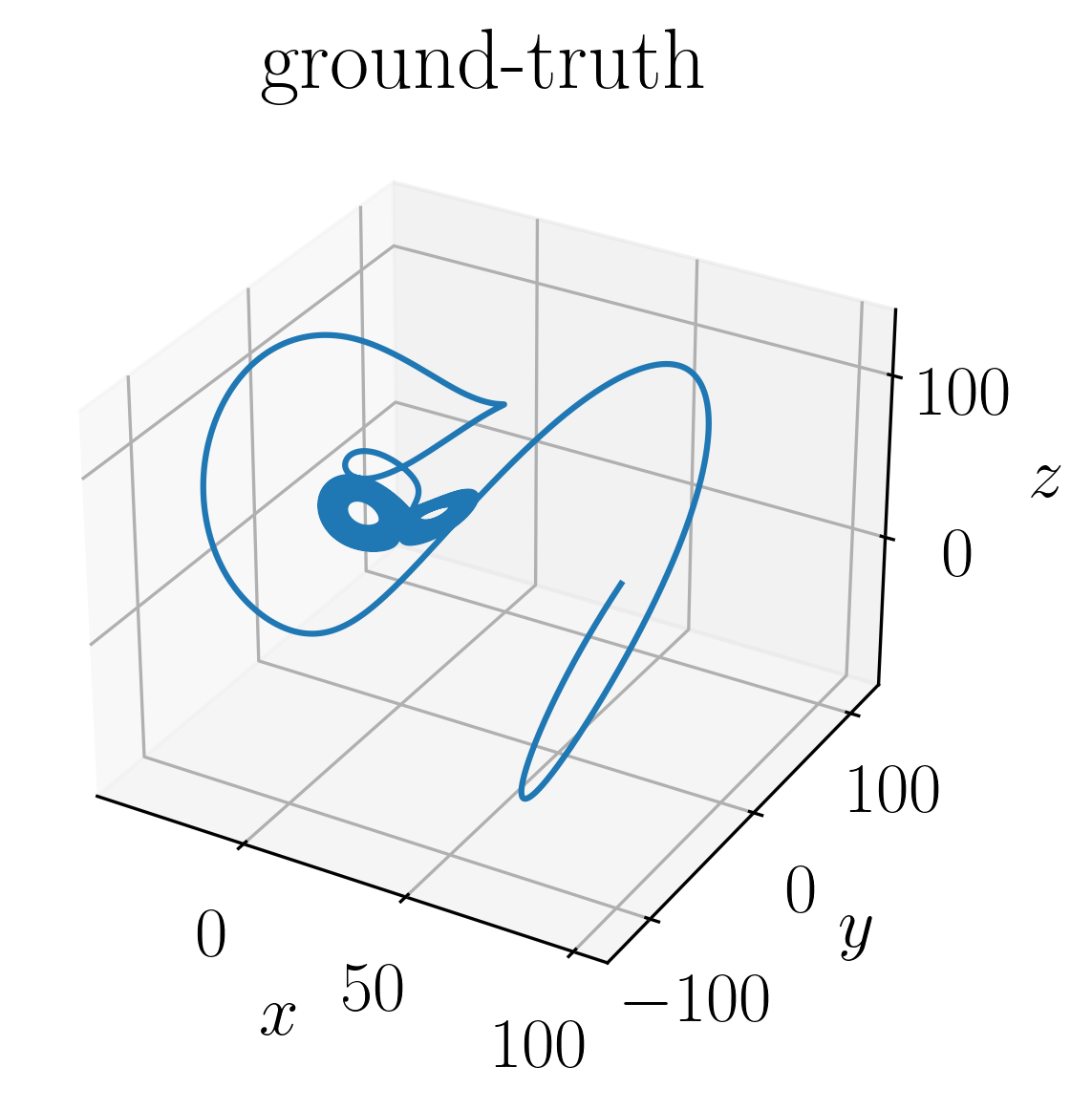}
	\includegraphics[width = 0.32\textwidth]{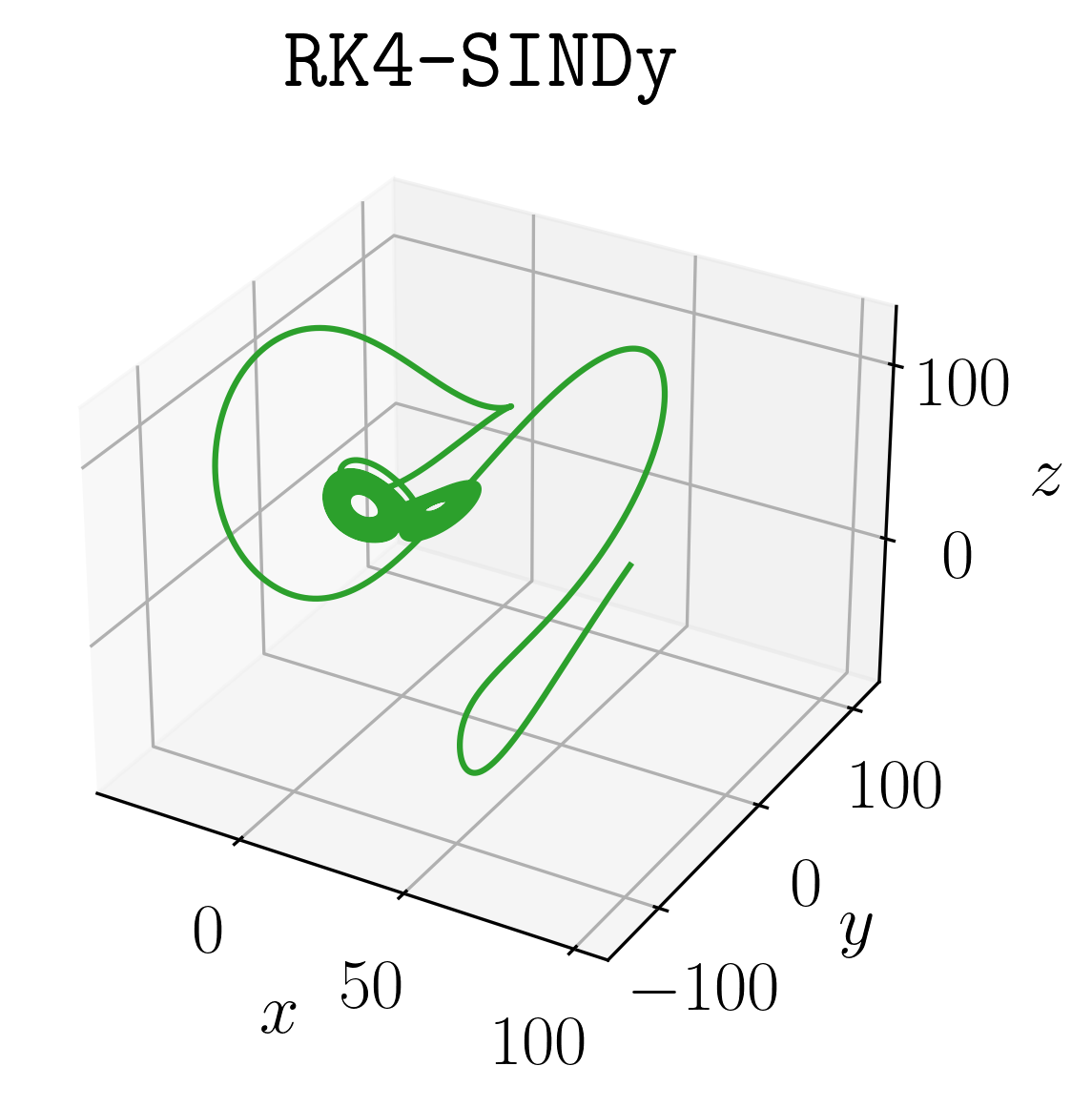}
	\includegraphics[width = 0.32\textwidth]{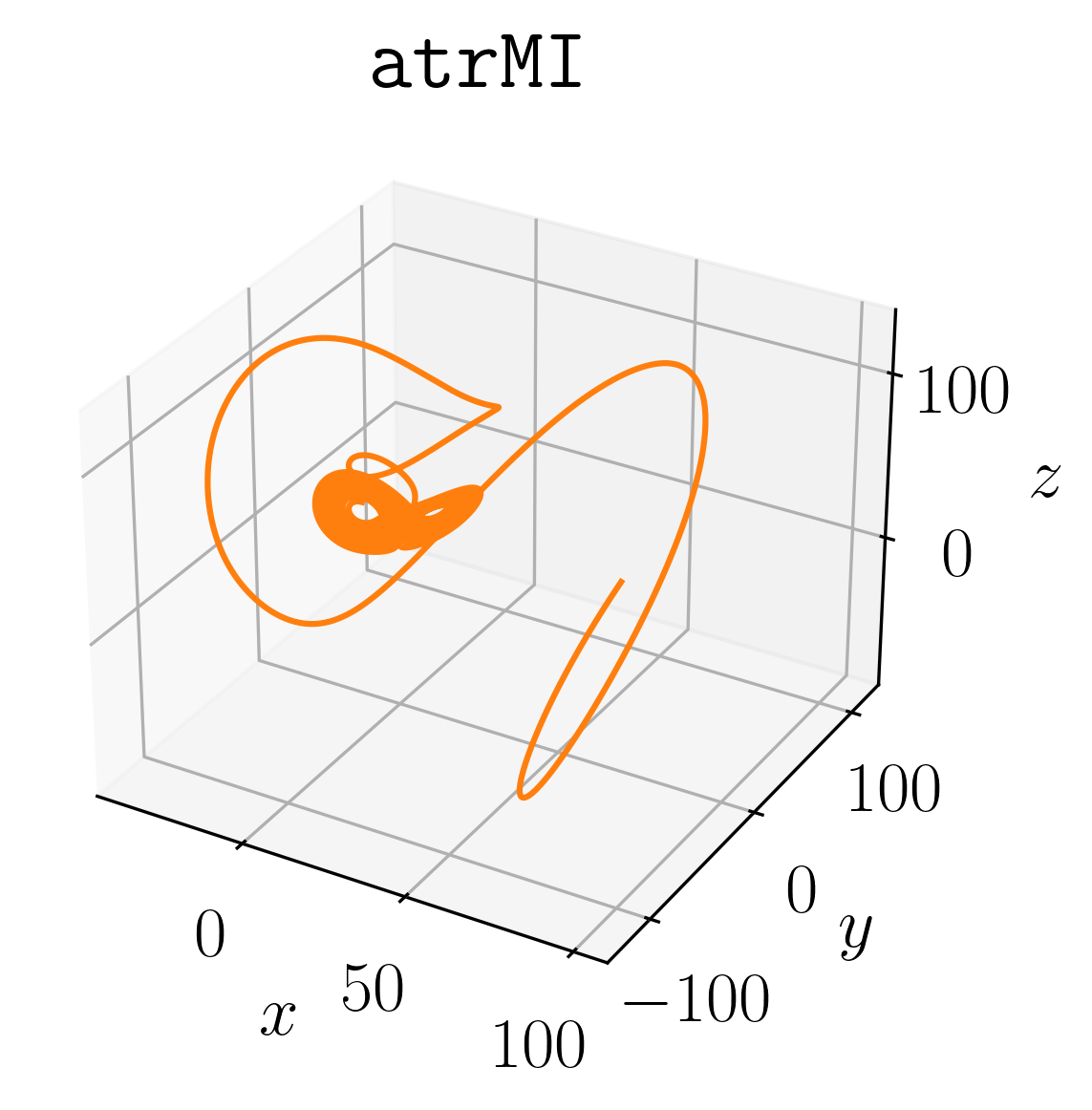}
	\caption{For initial condition $[100,-100,100]$.}
	\label{fig:lorenz_second}
\end{subfigure}
	\begin{subfigure}{0.95\textwidth}
		\centering
	\includegraphics[width = 0.32\textwidth]{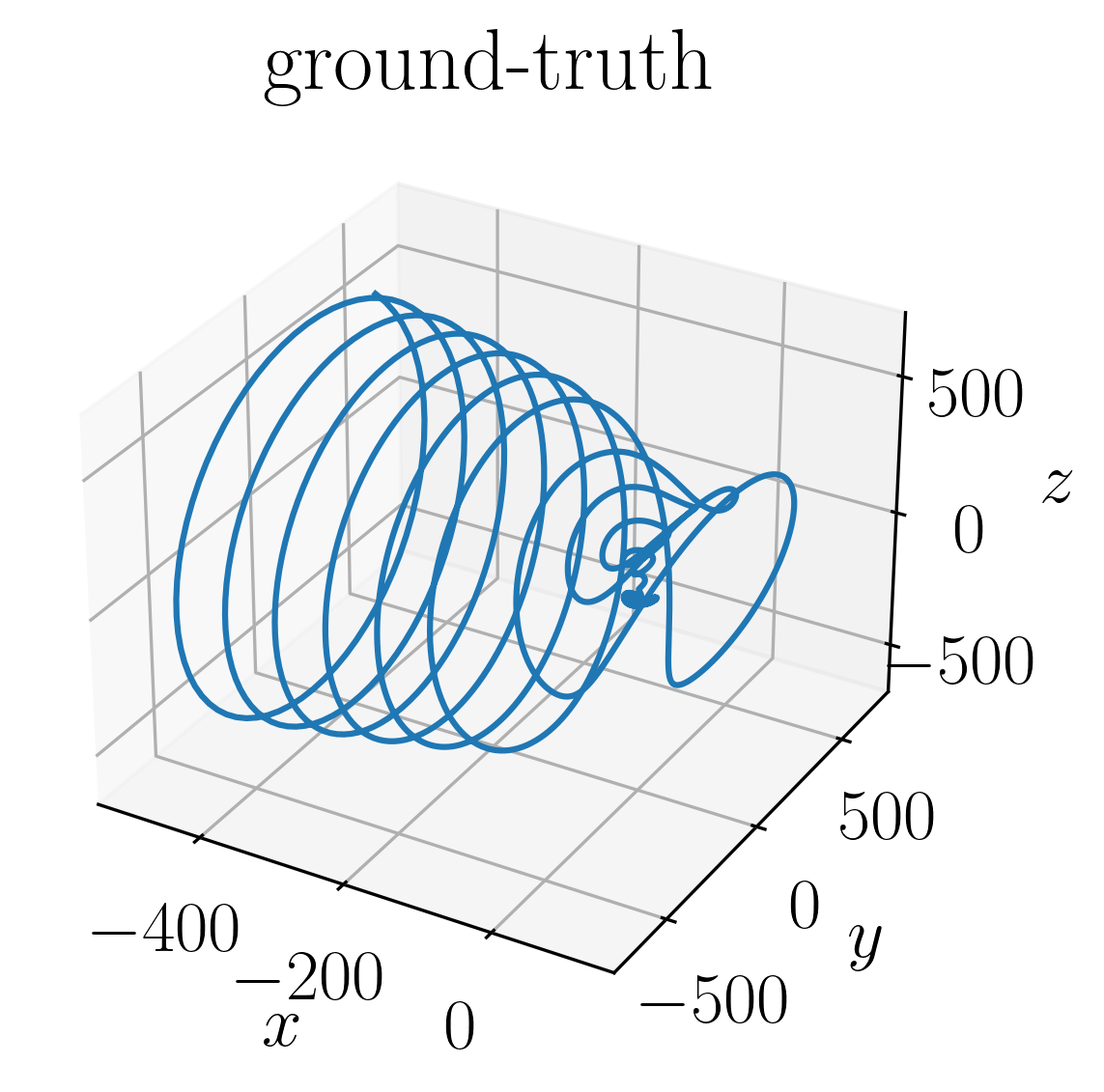}
	\includegraphics[width = 0.32\textwidth]{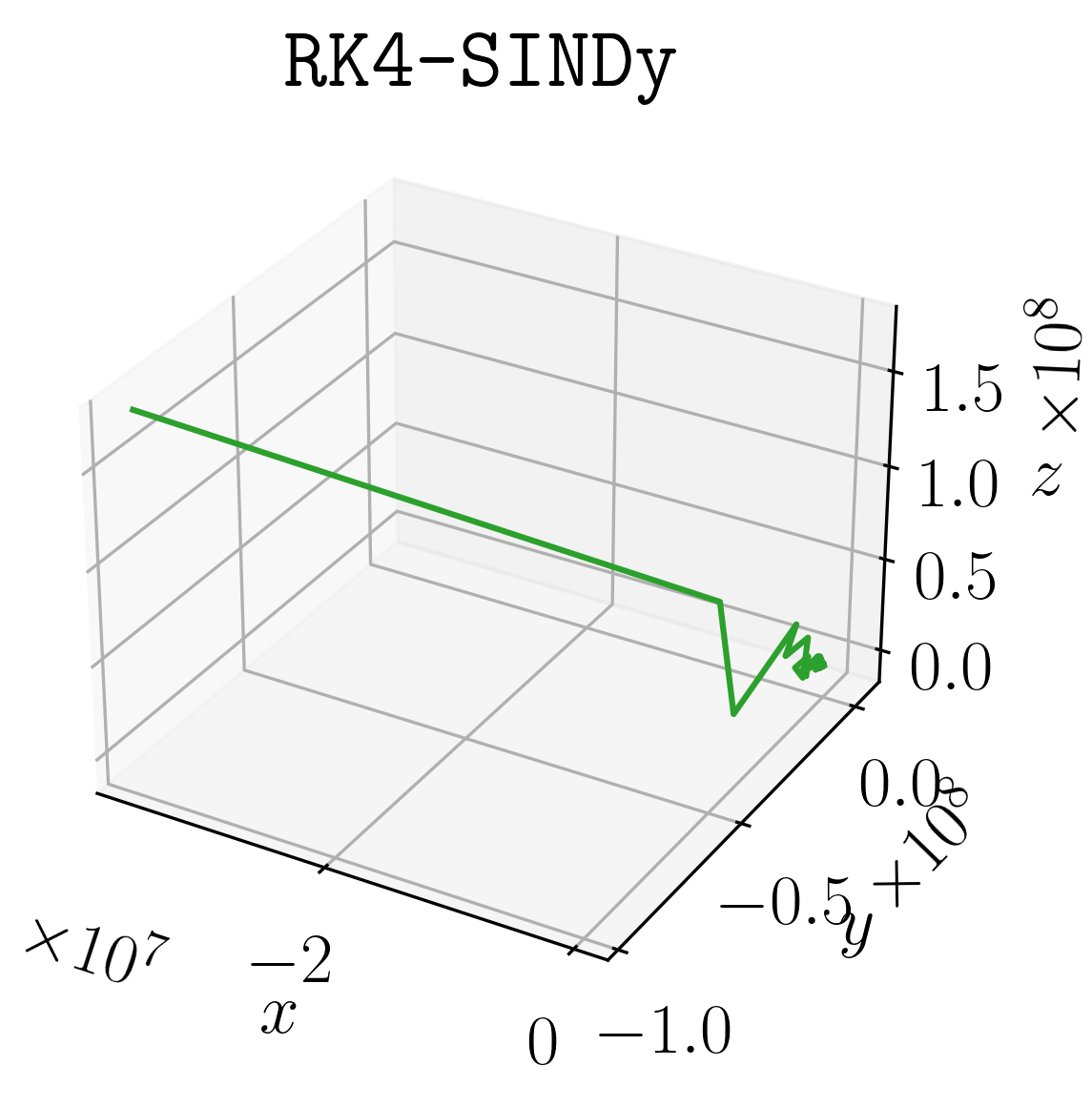}
	\includegraphics[width = 0.32\textwidth]{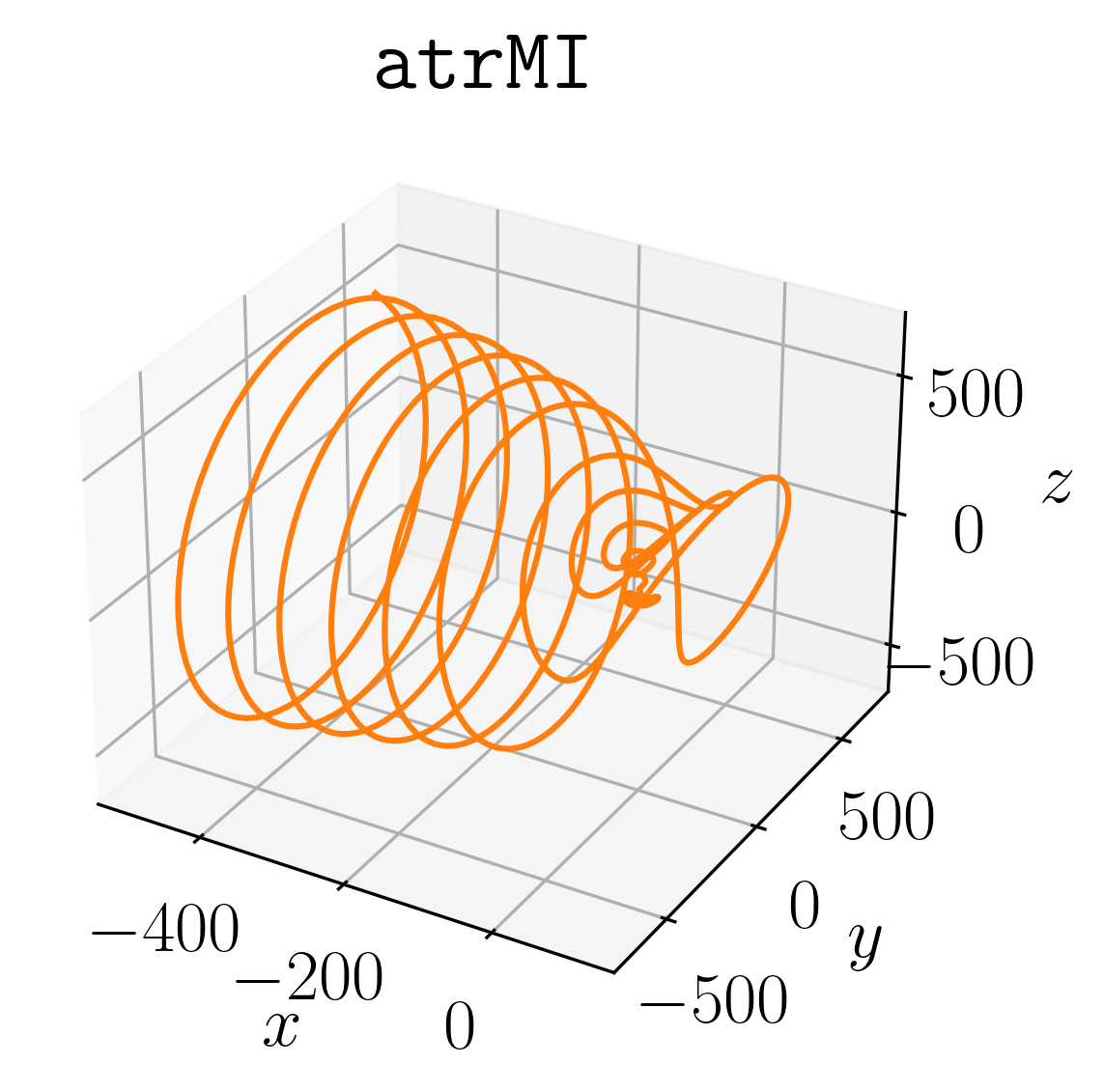}
	\caption{For initial condition $[-500,500,500]$.}
	\label{fig:lorenz_third}
\end{subfigure}
	\caption{Lorenz model: A comparison of the time-domain simulations of the learned models for test initial conditions.}
	\label{fig:Lorenz_comparison}
\end{figure}

\subsubsection{Magneto-hydrodynamic model}
In our next example, we aim to discover a magneto-hydrodynamic (MHD) model with quadratic non-linearities. The model exhibits energy-preserving properties. In this paper, we consider a simple two-dimensional incomprehensible MHD model from \cite{carbone1992relaxation,kaptanoglu2021promoting}, which is given by 
\begin{equation}\label{eq:MHD_model}
\begin{bmatrix}
\dot{v}_1(t) \\\dot{v}_2(t) \\\dot{v}_3(t) \\ \dot{b}_1(t) \\\dot{b}_2(t) \\\dot{b}_3(t)
\end{bmatrix} = \begin{bmatrix}
-2\nu & 0& 0 & 0& 0& 0 \\
0 & -5\nu& 0 & 0& 0& 0 \\
0 & 0& -9\nu & 0& 0& 0 \\
0 & 0& 0 & -2\mu& 0& 0 \\
0 & 0& 0 & 0& -5\mu& 0 \\
0 & 0& 0 & 0& 0& -9\mu
\end{bmatrix} \begin{bmatrix}
{v}_1(t) \\{v}_2(t) \\{v}_3(t) \\ {b}_1(t) \\{b}_2(t) \\{b}_3(t)
\end{bmatrix} + 
\begin{bmatrix}
4\left(v_2v_3 - b_2b_3\right) \\
-7\left(v_1v_3 - b_1b_3\right) \\
3\left(v_1v_2 - b_1b_2\right) \\
2\left(b_3v_2 - v_3b_2\right) \\
5\left(b_1v_3 - v_1b_3\right) \\
9\left(b_2v_1 - b_1v_2\right) 
\end{bmatrix},
\end{equation}
where $\nu \geq 0$ and $\mu\geq 0$ are the viscosity and resistivity, respectively. Typically, the system is stable in the absence of any external forces and dissipates to zero. Thus, the model is globally asymptotically stable. However, following \cite{kaptanoglu2021promoting}, we consider the in-viscid case by setting $\nu =\mu = 0$. As a result, the system is Hamiltonian and energy-preserving. Before we proceed further, we highlight that when the quadratic term of the model \eqref{eq:MHD_model} is noted down in the form of $\bH(\bx\otimes \bx)$, then it can have the form given in \eqref{eq:str_H} with $\bQ = \bI$. It fits into our hypothesis on the structure of $\bH$ in \eqref{eq:quad_model}.

Although \Cref{sec:Exp} discusses trapping regions in a strict sense, it also covers energy-preserving systems, where the energy $\tfrac{1}{2}(\bx-\bm)^\top \bQ(\bx-\bm)$ is preserved when $\bA_s = 0$. This implies that $\bR = 0$ in \Cref{sec:atr}; thus, we assume it to be zero, hence can be removed from the optimization problem \eqref{eq:stable_learning_atr1}. With this setting, we aim to infer an energy-preserving model with sparse regression and present a comparison with \texttt{RK4-SINDy} in \Cref{fig:MHD_comparison} for a testing condition, which is different from the initial condition used to generate the training data. 
We notice that the proposed methodology with parameterization (\atrmi) is able to infer a model that preserves the system's energy. In contrast, without any stability or energy-preserving enforcement, the inferred model using \texttt{RK4-SINDy} fails to capture the underlying Hamiltonian dynamics accurately in the test phase, although it has a good fit for the training data (see \Cref{fig:MHD_comparison} (b), upper triangle). 

\begin{figure}[!tb]
	\centering
	\begin{subfigure}{0.49\textwidth}
		\includegraphics[width = 0.95\textwidth]{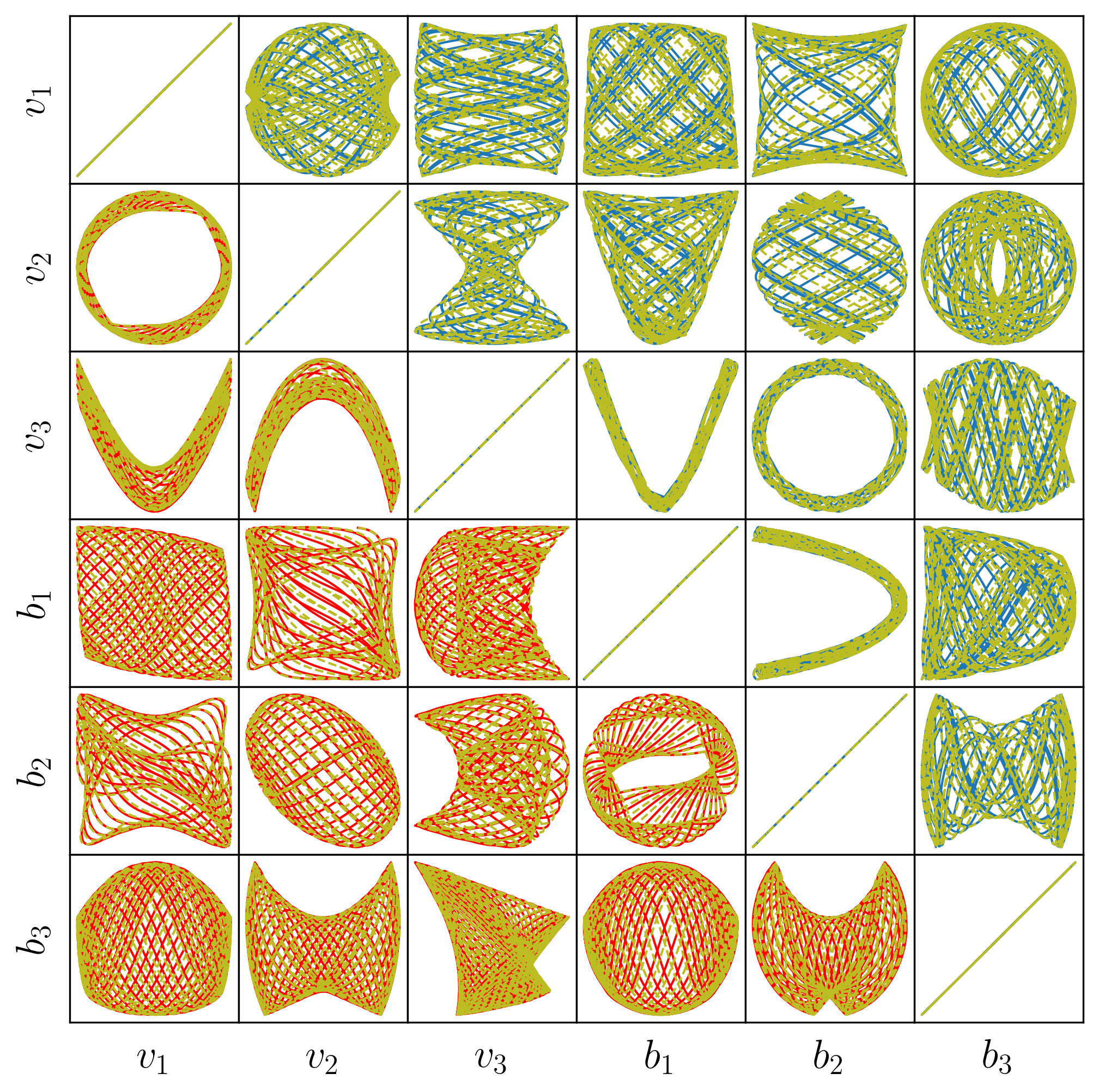}
		\caption{Using \texttt{atrMI}.}
		\label{fig:MHD_atrmi}

	\end{subfigure}
	\begin{subfigure}{0.49\textwidth}
		\includegraphics[width = 0.95\textwidth]{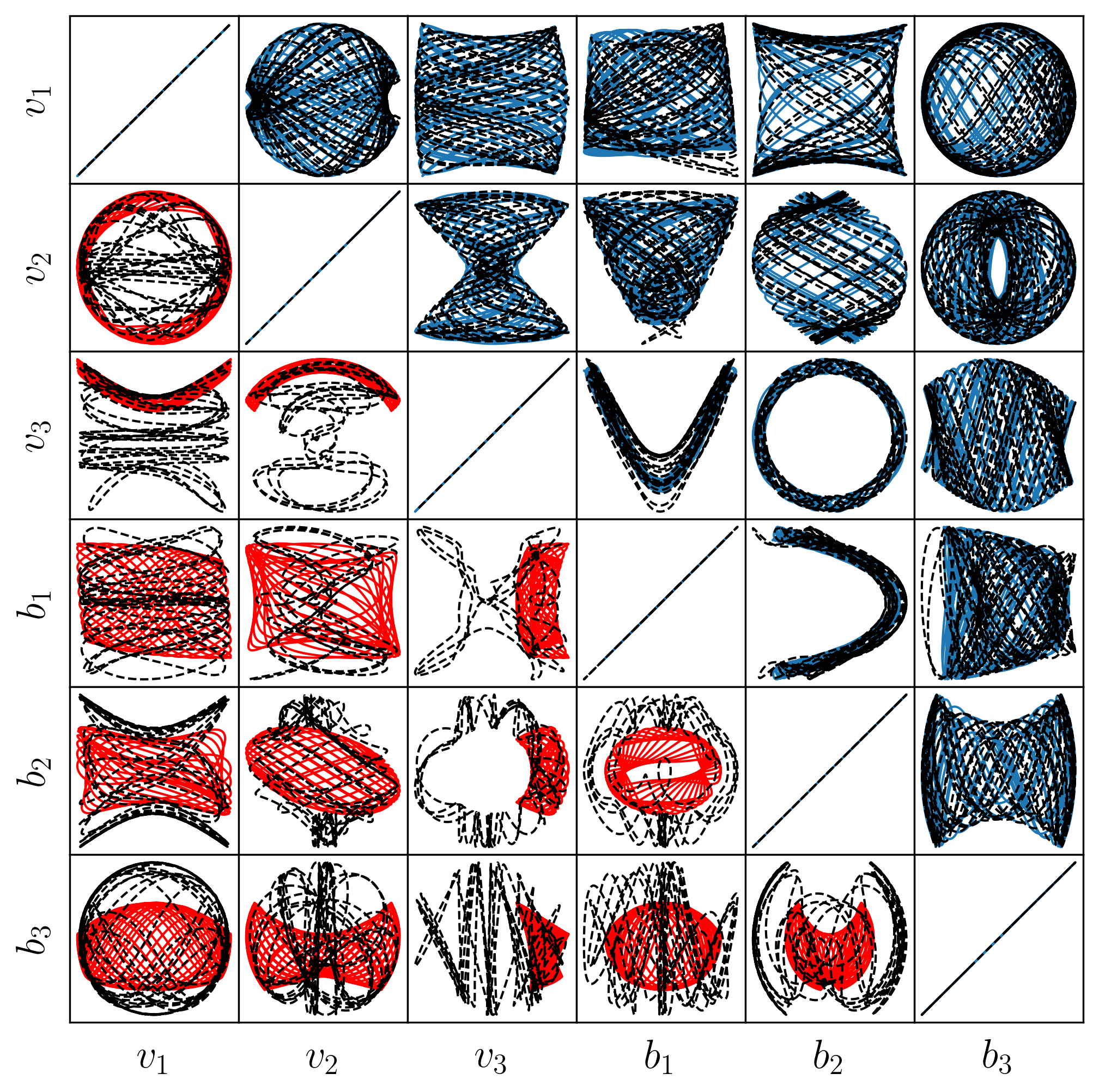}
		\caption{Using \texttt{RK4-SINDy}.}
		\label{fig:MHD_RK4sindy}
	\end{subfigure}
	\caption{MHD model: A comparison of the time-domain simulations of the learned models for test initial conditions. In both plots, the training data (blue, upper triangle), and testing data  (red, lower triangle) are shown, which are indeed very different. Figures (a) and (b) demonstrate the performance of the discovered model using \atrmi~(yellow) and \texttt{RK4-SINDy} (black) for both training and testing data.  }
	\label{fig:MHD_comparison}
\end{figure}

\subsection{Burgers equation with Neumann boundary conditions}
In our last example, we consider once again the one-dimensional Burgers' equation as presented in \eqref{eq:burgers}. However, we assume the following Neumann boundary conditions
\begin{equation}
\begin{aligned}
v_\zeta(0,\cdot)  & = 0, ~~\text{and~~}
v_\zeta(1,\cdot) = 0. &&\\
\end{aligned}
\end{equation}
In contrast to  the Dirichlet boundary condition from \Cref{sec:burgers_diri}, the nonlinear operator here is not energy-preserving, meaning it does not satisfy \eqref{eq:EnergyPreservQuadTerm}, and the system is not \gas. We consider the same setup for discretization as in \cite{morBenB15}. We consider $1~000$ equidistant grid points in the spatial domain, and $501$ equidistant data points are collected in the time-interval $[0,1]$. Furthermore, as in \cite{morGoyPB23}, we collect data using $17$ differential initial conditions, i.e., 
$$v_0(\zeta) = \alpha \cdot \cos(2\pi \zeta)^2, \qquad \alpha = \{0.8, 1.0,\ldots, 3.8,4.0\}.$$
Taking data corresponding to three initial conditions ($\alpha = \{1.6, 2.4,3.2\}$) out for testing, we use the rest of them for learning operators for quadratic systems of the form \eqref{eq:quad_model}.

Similar to the Burgers' example with the Dirichlet boundary conditions in~\Cref{sec:burgers_diri}, we plot the decay of singular values of the training data in \Cref{fig:burgers_svd_neumann}. We notice a rapid decay, thus a possibility of constructing low-order models yet capturing the underlying dynamics accurately. Next, we construct reduced models of order $20$ using all three considered methods. Note that we have used a regularizer based on the norm of the matrix $\bH$ as discussed in \Cref{sec:burgers_diri} with $\lambda_H = 10^{-3}$. The learned models are tested on the test initial conditions, and the results are reported in \Cref{fig:burgers_time_domain_neuman} by computing the mean $L_2$ errors over the test cases. Furthermore, the solutions on the full domain for one of the test cases are shown in \Cref{fig:burgers_time_domain_onetraj_neumann}. As expected, these figures demonstrate that \gasmi~is not suitable for this example, because its data comes from a dynamical system that is not \gas. 
On the other hand, \opinfbenchmark~and \las~yield comparable models, but \las~gives locally stable models.

\begin{figure}[!tb]
	\centering
	\includegraphics[width = 0.5\textwidth]{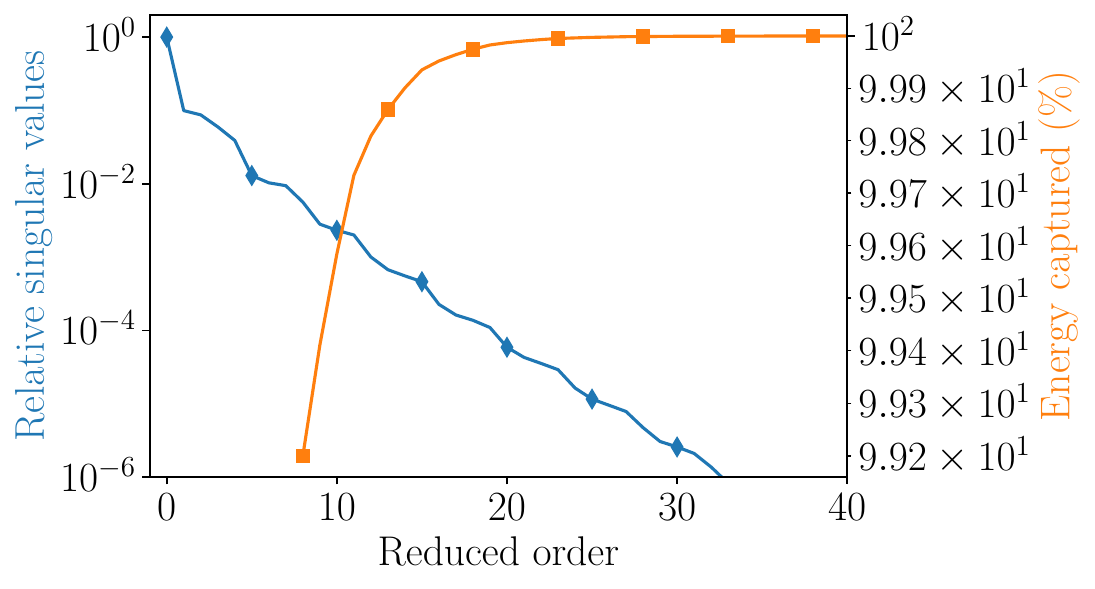}
	\caption{Burgers' equation with Neumann conditions: Decay of the singular values obtained using training data. The orange graph indicates how much energy is captured by how many dominant modes. }
	\label{fig:burgers_svd_neumann}
\end{figure} 

\begin{figure}[!tb]
	\centering
	\includegraphics[width = 0.5\textwidth]{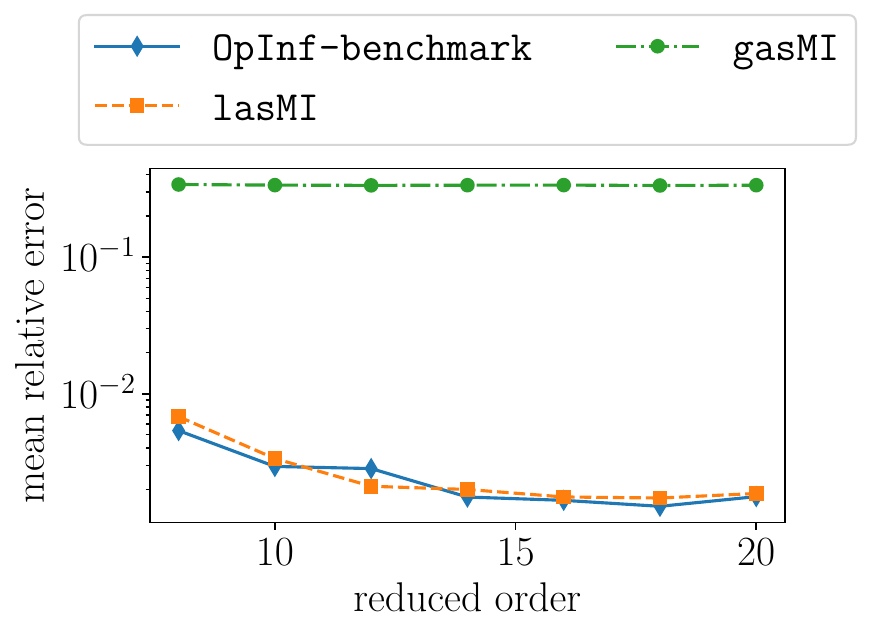}
	\caption{Burgers' equation  with Neumann conditions: A mean performance over all the test data of the inferred models.}
	\label{fig:burgers_time_domain_neuman}
\end{figure} 

\begin{figure}[!tb]
	\centering
	\begin{subfigure}[t]{0.9\textwidth}
		\includegraphics[width = 0.95\textwidth]{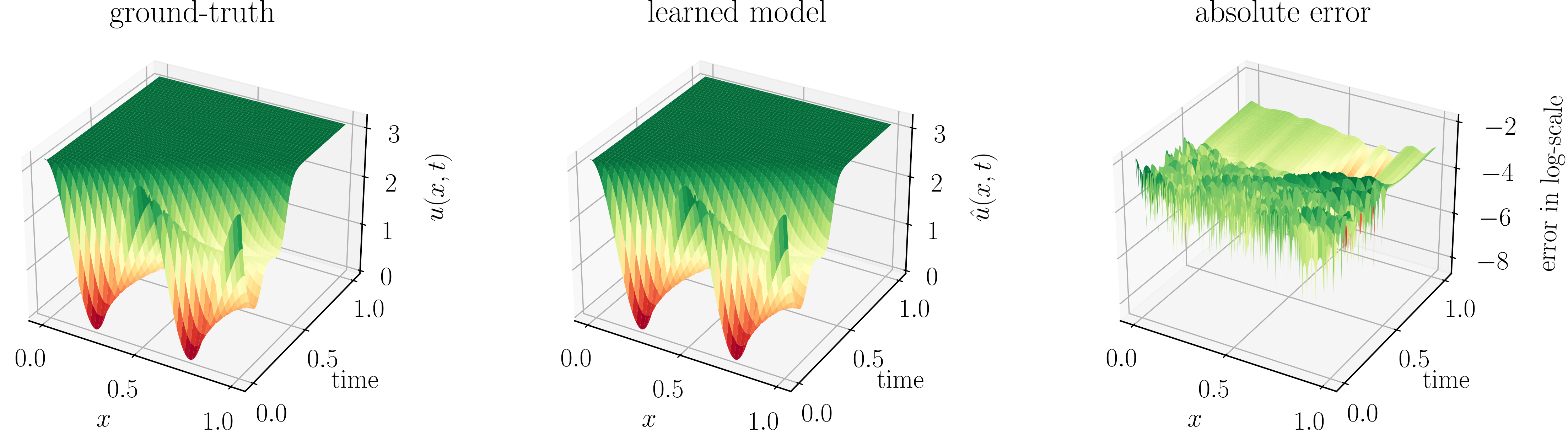}
		\caption{Using \opinfbenchmark.}
	\end{subfigure}
	\begin{subfigure}[t]{0.9\textwidth}
		\includegraphics[width = 0.95\textwidth]{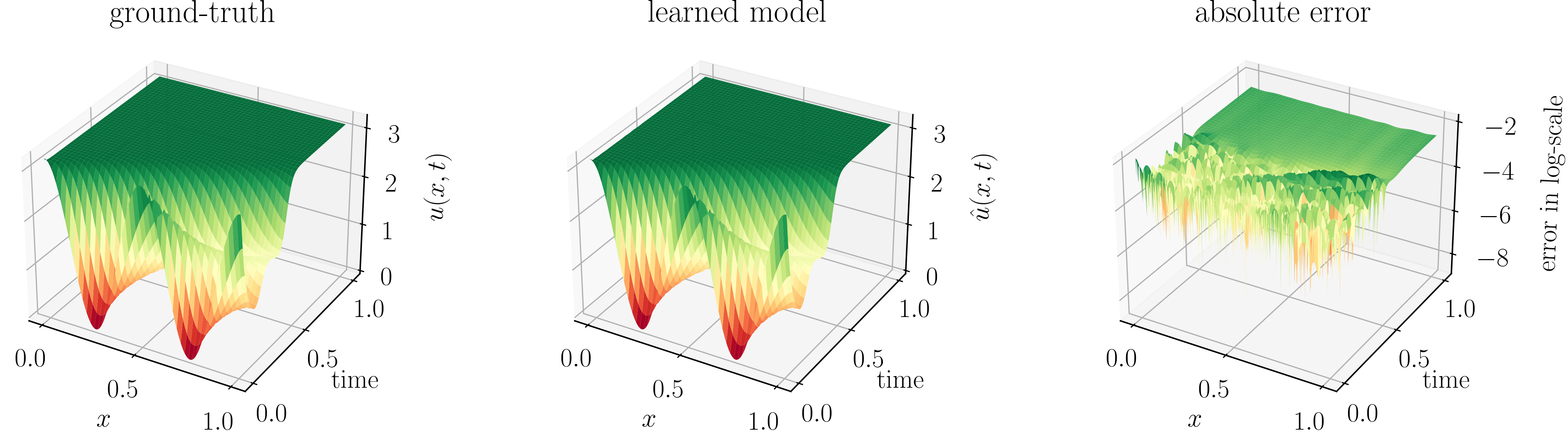}
		\caption{Using \lasmi.}
	\end{subfigure}
	\begin{subfigure}[t]{0.9\textwidth}
		\includegraphics[width = 0.95\textwidth]{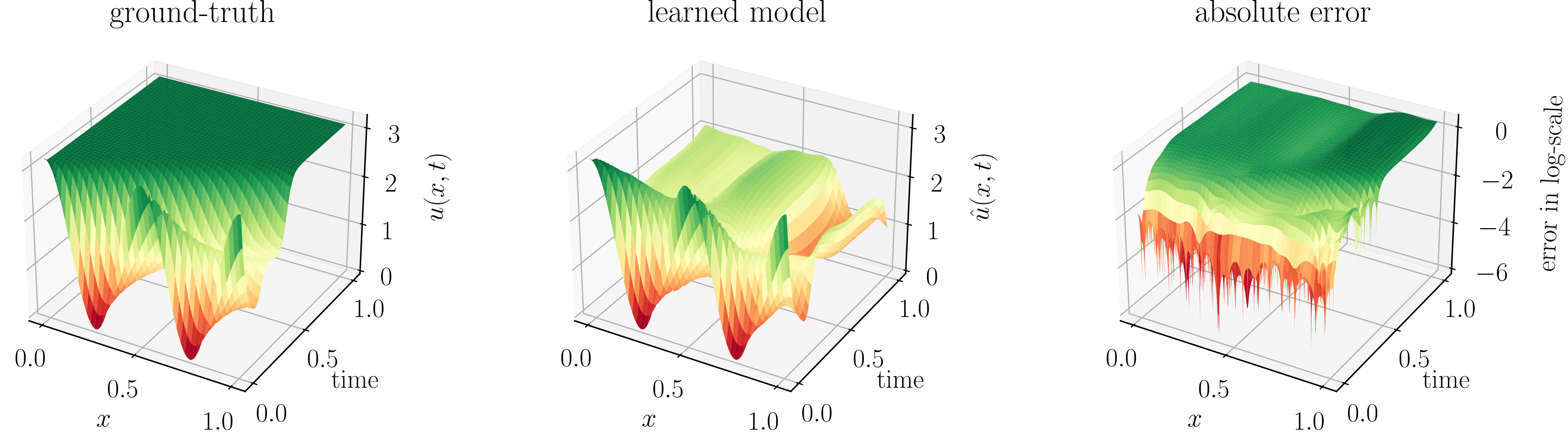}
		\caption{Using \gasmi.}
	\end{subfigure}
	\caption{Burgers' equation  with Neumann conditions: A comparison of the time-domain simulations of the inferred models on an initial test condition. It demonstrates a failure of \gasmi~for Neumann boundary conditions since the underlying model does not have an energy-preserving quadratic term.}
	\label{fig:burgers_time_domain_onetraj_neumann}
\end{figure}

\section{Conclusions}\label{sec:Conc}
This paper has discussed data-driven inference of quadratic systems that guarantee stability by design. We have begun by investigating the local and global asymptotic stability of quadratic systems using a generalized Lyapunov quadratic function. We have utilized the concept of energy-preserving non-linearity for global asymptotic stability. Based on these discussions, we have proposed suitable parameterizations, allowing us to learn quadratic systems that ensure local and global asymptotic stability. We have achieved these goals without using any matrix inequality-based constraint or based on eigenvalues of the matrix but by using dedicated parameterizations of operators. 
Additionally, we have addressed the problem of inferring quadratic systems with no stable equilibrium points. For this, we have discussed the concept of a global attracting trapping region, followed by proposing an appropriate parameterization.
Moreover, to avoid using derivative information while inferring the operators, we have blended a numerical integration scheme that is robust to noise and scarce data. 
Furthermore, since the involved optimization problems for inference do not have a closed-form solution, we have utilized a gradient-based approach, and the stability of the model is preserved at each iteration due to the stability-preserving dedicated parameterization. Several numerical examples demonstrate its effectiveness in inferring low-dimensional operators, discovering governing equations, and energy-preserving modeling.
In our future work, we plan to extend our frameworks for learning stable parametric operators and a detailed investigation of how the methodologies can be used in the case of highly noisy scenarios. Additionally, we aim to extend the proposed framework to a more general quadratic systems for which the non-linearity is not energy-preserving yet the model is bounded in some sense. In particular, we aim to investigate stable parametrization for fluid dynamical systems with Neumann boundary conditions. Overall, this paper provides a contribution to the field of quadratic system inference, offering a promising framework for developing stable quadratic models using data by appropriate parameterization. 
	
\bibliographystyle{elsarticle-num} 
\bibliography{mor,igorBiblio}

\appendix 
\section{Appendix}\label{appendix}
In the following, we show that equations \eqref{eq:energyPreserving_H_condition} and \eqref{eq:EnergyPreservQuadTerm} are equivalent.
\begin{theorem}\label{theorem:EquivalKron} The following statements are equivalent:
	\begin{enumerate}[(a)]
		\item  The $\bH$ matrix, in \eqref{eq:quad_model}, satisfies
		\begin{equation}\label{eq:Tensor_cond}
		\bH_{ijk} + \bH_{ikj} + \bH_{jik} + \bH_{jki} + \bH_{kij} + \bH_{kji} = 0, 
		\end{equation} 
		where  $ \{i,j,k\} \in \{1, \dots n\}$ and $\bH_{ijk} := e_i^{\top} \bH(e_j \otimes e_k)$.
		\item The $\bH$ matrix, in \eqref{eq:quad_model}, satisfies  
		\begin{equation}\label{eq:KronProd_cond}\bx^{\top}\bH(\bx\otimes \bx) = 0, \quad \text{for every $\bx \in \R^n$}. \end{equation}
	\end{enumerate}	
\end{theorem}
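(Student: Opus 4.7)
The plan is to view $\bx^{\top}\bH(\bx\otimes\bx)$ as a homogeneous cubic polynomial in the entries $x_1,\ldots,x_n$ of $\bx$ and read off the equivalence from elementary symmetrization. Writing $\bx = \sum_i x_i e_i$ and expanding trilinearly gives
\begin{equation*}
  \bx^{\top}\bH(\bx\otimes\bx) \;=\; \sum_{i,j,k=1}^{n} x_i x_j x_k\, \bH_{ijk}.
\end{equation*}
Because $x_i x_j x_k$ is symmetric in $(i,j,k)$, grouping the six permutations of any triple of indices shows that this cubic depends on $\bH$ only through the symmetrized coefficients
\begin{equation*}
  S_{ijk} \;:=\; \bH_{ijk} + \bH_{ikj} + \bH_{jik} + \bH_{jki} + \bH_{kij} + \bH_{kji}.
\end{equation*}

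For the direction $(a)\Rightarrow(b)$: I would simply reorder the sum by unordered triples $\{i,j,k\}$, collect the six permutations, and obtain $\bx^{\top}\bH(\bx\otimes\bx) = \tfrac{1}{6}\sum_{i,j,k} x_i x_j x_k\, S_{ijk}$ (with the usual bookkeeping for repeated indices). Since $S_{ijk}=0$ by hypothesis, the polynomial vanishes identically, proving (b).

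For $(b)\Rightarrow(a)$: The plan is to use that a polynomial identity in $x_1,\ldots,x_n$ forces all its coefficients to vanish. Concretely, after collecting terms as above, the coefficient of each monomial $x_i x_j x_k$ is a positive multiple of $S_{ijk}$. Since this cubic is identically zero on $\R^n$, all coefficients vanish, yielding $S_{ijk}=0$ for every triple $(i,j,k)$, which is exactly \eqref{eq:Tensor_cond}. Alternatively, one can avoid invoking the coefficient-comparison principle abstractly by evaluating at cleverly chosen points: taking $\bx = e_i$ gives $\bH_{iii}=0$; taking $\bx = e_i+e_j$ and using the already-proved diagonal relations yields the two-index identities; and taking $\bx = e_i+e_j+e_k$ together with the previous two reductions isolates $S_{ijk}$.

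The only mildly delicate step is the bookkeeping in $(b)\Rightarrow(a)$ when two or three of the indices coincide, since then some of the six permutations of $(i,j,k)$ collapse and one must check that the resulting multiplicities still let one conclude $S_{ijk}=0$ rather than just a weighted combination. The cleanest way is to treat the cases $i=j=k$, $i=j\neq k$, and $i,j,k$ distinct separately via the three evaluations at $e_i$, $e_i+e_j$, $e_i+e_j+e_k$; each case reduces to a small linear system in the relevant $\bH_{\cdot\cdot\cdot}$ entries whose solution yields the claimed symmetrized identity.
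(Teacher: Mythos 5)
Your proposal is correct and follows essentially the same route as the paper's proof: expand $\bx^{\top}\bH(\bx\otimes\bx)=\sum_{i,j,k}x_ix_jx_k\bH_{ijk}$, group the permutations of each index triple (handling the cases of one, two, and three distinct indices separately, as the paper does in its rearranged identity), and for the converse use that an identically vanishing polynomial has all coefficients zero. The alternative evaluation at $e_i$, $e_i+e_j$, $e_i+e_j+e_k$ is a nice concrete substitute for the coefficient-comparison step, but it does not change the substance of the argument.
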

\begin{proof} $(a) \Rightarrow (b)$: For a given $\bx \in \R^n,$ let us write it as $\bx = \displaystyle\sum_{i=1}^n x_i  e_i$, where $x_i \in \R$ are the coordinates of $\bx$ in the basis $\{ e_1, \dots,  e_n\}$. Using this notation, the left-hand side of equation \eqref{eq:KronProd_cond} becomes
	\begin{align*}
	\bx^{\top}\bH(\bx\otimes \bx) &=  \left(\sum_{i=1}^n x_i   e_i^{\top}\right)\bH\left(\left(\sum_{j=1}^n x_j   e_j\right)\otimes \left(\sum_{k=1}^n x_k  e_k\right)\right) 	\\
	& =  \sum_{i=1}^n \sum_{j=1}^n  \sum_{k=1}^n  x_ix_jx_k   e_i^{\top}\bH\left(  e_j\otimes   e_k\right).
	\end{align*}
	Note that the product $x_ix_jx_k$ gives the same value if we permute the indexes in $(i,j,k)$. Hence, the above sum can be rearranged as follows:
	\begin{equation}\label{eq:permIdent}
	\begin{aligned}
	\sum_{i=1}^n \sum_{j=1}^n  \sum_{k=1}^n  x_ix_jx_k  e_i^{\top}\bH\left( e_j\otimes  e_k\right)   &= \\
	& \hspace{-3cm} =  \sum_{i=1}^n x_i^3 \left(\bH_{iii}\right)+ \dfrac{1}{2}  \sum_{i=1}^n \sum_{\substack{j=1 \\ j\neq i}}^n x_ix_j^2 \left(\bH_{ijj} + \bH_{jij} + \bH_{jji} + \bH_{ijj} + \bH_{jij} + \bH_{jji} \right) \\
	& \hspace{-2.5cm} +  \dfrac{1}{6}  \sum_{i=1}^n \sum_{\substack{j=1 \\ j\neq i}}^n  \sum_{\substack{k=1 \\ k\neq i\\ k\neq j}}^n  x_ix_jx_k \left(\bH_{ijk} + \bH_{ikj} + \bH_{jik} + \bH_{jki} + \bH_{kij} + \bH_{kji}\right),	
	\end{aligned}
	\end{equation}
	with $\bH_{ijk} := e_i^{\top} \bH(e_j \otimes e_k)$. Using \eqref{eq:Tensor_cond}, one can note the following:
	\begin{align*}
	\bH_{iii} & = 0\\
	\bH_{ijj} + \bH_{jij} + \bH_{jji} + \bH_{ijj} + \bH_{jij} + \bH_{jji} & =0 \\
	\bH_{ijk} + \bH_{ikj} + \bH_{jik} + \bH_{jki} + \bH_{kij} + \bH_{kji} & = 0,
	\end{align*}
	As a result, $\bx^{\top}\bH(\bx\otimes \bx)  =0$.
	
	$(b) \Rightarrow (a)$: Using \eqref{eq:KronProd_cond} and \eqref{eq:permIdent}, we have 
	\begin{equation*}
	\begin{aligned}
 &\sum_{i=1}^n x_i^3 \left(\bH_{iii}\right)+ \dfrac{1}{2}  \sum_{i=1}^n \sum_{\substack{j=1 \\ j\neq i}}^n x_ix_j^2 \left(\bH_{ijj} + \bH_{jij} + \bH_{jji} + \bH_{ijj} + \bH_{jij} + \bH_{jji} \right) \\
	& \hspace{0.5cm} +  \dfrac{1}{6}  \sum_{i=1}^n \sum_{\substack{j=1 \\ j\neq i}}^n  \sum_{\substack{k=1 \\ k\neq i\\ k\neq j}}^n  x_ix_jx_k \left(\bH_{ijk} + \bH_{ikj} + \bH_{jik} + \bH_{jki} + \bH_{kij} + \bH_{kji}\right) = 0.
	\end{aligned}
	\end{equation*}
	Since the above equation can be viewed as a multi-dimensional polynomial in $(x_1,\ldots, x_n)$ and it is zero, each coefficient must be zero. This leads to the condition $\eqref{eq:Tensor_cond}$.
\end{proof}

Next, we state that every energy-preserving matrix $\bH \in \R^{n \times n^2}$ can be represented using skew-symmetric matrices, without modifying the system dynamics. 

\begin{theorem}\label{thm:equvilence_between_H}
	Consider a energy-preserving matrix $\bH \in \R^{n\times n^2}$, satisfying \eqref{eq:energyPreserving_H_condition}. Then, there exists a matrix $\tilde \bH$ with structure as follows:
	\begin{equation}\label{eq:skew_symmetric_H}
	\tilde \bH = [\bH_1,\ldots, \bH_n],\qquad \bH_i = -\bH_i^\top  ~~\text{for}~~i = \{1,\ldots, n\},
	\end{equation}
	which is also energy-preserving and $\bH(\bx \otimes \bx) = \tilde \bH(\bx\otimes \bx)$ for every $\bx\in \R^n$.
\end{theorem}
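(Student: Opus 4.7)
The plan is to work entrywise with the scalar entries $\bH_{ijk} := e_i^\top \bH(e_j\otimes e_k)$ introduced in \eqref{eq:energyPreserving_H_condition}. First, I would symmetrize in the last two indices, setting $\bH^{\mathrm{s}}_{ijk} := \tfrac{1}{2}(\bH_{ijk}+\bH_{ikj})$. Since $x_j x_k = x_k x_j$, this symmetrization does not alter the action on $\bx\otimes\bx$: for every $\bx$, $\sum_{j,k}\bH_{ijk}x_jx_k = \sum_{j,k}\bH^{\mathrm{s}}_{ijk}x_jx_k$. Thus without loss of generality I may work with $\bH^{\mathrm{s}}$ in place of $\bH$.

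Next, I would translate the energy-preserving identity \eqref{eq:energyPreserving_H_condition} into a useful form for $\bH^{\mathrm{s}}$. The six-term sum collapses pairwise under the symmetrization to $2\bigl(\bH^{\mathrm{s}}_{ijk}+\bH^{\mathrm{s}}_{jik}+\bH^{\mathrm{s}}_{kij}\bigr)=0$, i.e.,
\[\bH^{\mathrm{s}}_{ijk} + \bH^{\mathrm{s}}_{jik} + \bH^{\mathrm{s}}_{kij} = 0.\]
This cyclic-in-first-index identity, together with the symmetry $\bH^{\mathrm{s}}_{ijk} = \bH^{\mathrm{s}}_{ikj}$, is the only algebraic input the construction will need.

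Then I would define the blocks of $\tilde\bH = [\tilde\bH_1,\ldots,\tilde\bH_n]$ by the explicit linear ansatz
\[(\tilde\bH_i)_{jk} := \tfrac{4}{3}\,\bH^{\mathrm{s}}_{jik} + \tfrac{2}{3}\,\bH^{\mathrm{s}}_{ijk}.\]
Two short verifications would close the argument. Skew-symmetry of each $\tilde\bH_i$ follows from $(\tilde\bH_i)_{jk}+(\tilde\bH_i)_{kj} = \tfrac{4}{3}\bigl(\bH^{\mathrm{s}}_{jik}+\bH^{\mathrm{s}}_{kij}+\bH^{\mathrm{s}}_{ijk}\bigr) = 0$, which uses only the cyclic identity and $\bH^{\mathrm{s}}_{ikj}=\bH^{\mathrm{s}}_{ijk}$. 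Matching of the quadratic form follows from $(\tilde\bH_i)_{jk}+(\tilde\bH_k)_{ji} = 2\bH^{\mathrm{s}}_{jik}$; symmetrizing the double sum in the dummy indices $(i,k)$ then yields $\sum_{i,k}(\tilde\bH_i)_{jk}x_ix_k = \sum_{i,k}\bH^{\mathrm{s}}_{jik}x_ix_k = (\bH(\bx\otimes\bx))_j$ for every $\bx$ and every $j$.

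As a bonus, $\tilde\bH$ is automatically energy-preserving without reinvoking \Cref{theorem:EquivalKron}, since $\bx^\top \tilde\bH(\bx\otimes\bx) = \sum_i x_i\,\bx^\top \tilde\bH_i\,\bx = 0$ by skew-symmetry of each block. The main obstacle is identifying the correct linear ansatz in the first place: the coefficients $(\tfrac{4}{3},\tfrac{2}{3})$ are forced as the unique solution of the $2\times 2$ linear system coming from simultaneously imposing skew-symmetry and quadratic-form matching, modulo the cyclic identity. Solving this tiny system is the only genuinely calculational step; everything else is routine index manipulation.
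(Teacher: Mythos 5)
Your proof is correct, and it takes a genuinely different route from the paper. The paper proceeds non-constructively: it vectorizes $\bH$ and $\tilde\bH$, encodes the matching condition $\bH(\bx\otimes\bx)=\tilde\bH(\bx\otimes\bx)$ as a linear system $\cA\tilde\bH_{\texttt{vect}}=\cA\bH_{\texttt{vect}}$, shows that the energy-preserving constraints lie in the row span of $\cA$ and that the skew-symmetry constraints are independent of the remaining rows, and then concludes solvability by a row-rank and dimension count (with some of the linear-independence claims asserted rather than fully verified). You instead exhibit an explicit closed-form solution: after symmetrizing in the last two indices so that the six-term condition \eqref{eq:energyPreserving_H_condition} collapses to the cyclic identity $\bH^{\mathrm{s}}_{ijk}+\bH^{\mathrm{s}}_{jik}+\bH^{\mathrm{s}}_{kij}=0$, the ansatz $(\tilde\bH_i)_{jk}=\tfrac{4}{3}\bH^{\mathrm{s}}_{jik}+\tfrac{2}{3}\bH^{\mathrm{s}}_{ijk}$ is checked directly to be skew-symmetric in $(j,k)$ and to reproduce the quadratic form via $(\tilde\bH_i)_{jk}+(\tilde\bH_k)_{ji}=2\bH^{\mathrm{s}}_{jik}$; I verified both identities, including the degenerate cases of repeated indices, and they hold. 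Your construction buys an explicit, easily implementable formula for $\tilde\bH$, a shorter and fully self-contained verification, and the observation that the coefficients are essentially forced; the paper's linear-algebraic framing buys a template that transfers more mechanically to the generalized ($\bQ\neq\bI$) setting of \Cref{lemma:equvilence_between_H_gen}, although your formula also transfers there by the same change of variables $\tilde\bx=\bL\bx$ used in that lemma.
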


\begin{proof}
The proof of the theorem has been initially attempted in \cite{Gki2023} in an internal report. In the following, we provide a new  proof for the completeness of the paper.
In this proof, we will show that a generic $\tilde \bH$ as in \eqref{eq:skew_symmetric_H} for which  $\bH(\bx \otimes \bx) = \tilde \bH(\bx\otimes \bx)$ is encoded by a linear system that has at least one solution. First note that $\tilde \bH$ satisfying \eqref{eq:skew_symmetric_H} is energy-preserving as it can be easily seen that $\bx^\top \tilde \bH (\bx\otimes \bx) = 0$. 
	Since both matrices $\bH$ and $\tilde \bH$ are energy-preserving, from  \eqref{eq:energyPreserving_H_condition}, we have
	\begin{subequations}\label{eq:energy_preserving_appendix}
		\begin{align}
		e_k^\top\bH(e_i\otimes e_j + e_j\otimes e_i) + e_j^\top\bH(e_i\otimes e_k + e_k\otimes e_i) + 	e_i^\top\bH(e_j\otimes e_k + e_k\otimes e_j) &= 0, \\		e_k^\top\tilde\bH(e_i\otimes e_j + e_j\otimes e_i) + e_j^\top\tilde\bH(e_i\otimes e_k + e_k\otimes e_i) + 	e_i^\top\tilde\bH(e_j\otimes e_k + e_k\otimes e_j) &= 0,
		\end{align}
	\end{subequations}
	for  $i,j,k \in \{1,\ldots,n\}$.
	Next, we notice that 
	\begin{equation}
	\bH(\bx \otimes \bx) = \tilde \bH(\bx\otimes \bx)
	\end{equation}
	if and only if
	\begin{equation}\label{eq:equvi_H}
	e_k^\top \bH(e_i \otimes e_j + e_j \otimes e_i) = e_k^\top \tilde\bH(e_i \otimes e_j + e_j \otimes e_i), ~~\text{for}~~{(i,j,k)} \in \{1,\ldots,n\}.
	\end{equation}

	Now, consider the case when $(i,j,k) \in \cM_1$, where $\cM_1 = \{(i, j, k) \mid i, j, k \in \{1, 2, \ldots, n\} \text{ and } i =j=k\}$. For this case, since $\tilde\bH$ and $\bH$ are energy-preserving, we have
	$$e_k^\top\bH(e_i \otimes e_j + e_j \otimes e_i) = e_k^\top\tilde\bH(e_i \otimes e_j + e_j \otimes e_i) = 0\qquad (i,j,k)\in \cM_1.$$
	
	Next, we focus on the cases when $(i,j,k) \in \cM_2$, where $\cM_2 = \{(i, j, k) \mid i, j, k \in \{1, 2, \ldots, n\} \text{ and } i \neq j \text{ or } i\neq k \text{ or } j\neq k\}$. First note that
	\begin{equation}
	e_k^\top \bH(e_i \otimes e_j + e_j \otimes e_i) = \left((e_i^\top \otimes e_j^\top + e_j^\top \otimes e_i^\top) \otimes e_k^\top\right)\bH_{\texttt{vect}},
	\end{equation}
	where $\bH_{\texttt{vect}}$ is the vectorization of the matrix $\bH$. Then, the conditions in \eqref{eq:equvi_H} for $(i,j,k) \in \cM_2$ can be written as follows:
	\begin{equation}\label{eq:equvi_H_Aform}
	\cA \bH_{\texttt{vect}} = \cA \tilde\bH_{\texttt{vect}},
	\end{equation}
	where $\cA$ is a linear operator defined so that its row vectors are constructed using $\left((e_i^\top \otimes e_j^\top + e_j^\top \otimes e_i^\top) \otimes e_k^\top\right)$ for all $(i,j,k) \in \cM_2$. Consequently, the matrix $\cA$ takes the following form:
	\begin{equation}
	\cA = \begin{bmatrix}
	\left((e_1^\top \otimes e_1^\top + e_1^\top \otimes e_1^\top) \otimes e_2^\top\right)\\
	\left((e_1^\top \otimes e_2^\top + e_2^\top \otimes e_1^\top) \otimes e_2^\top\right)\\
	\vdots \\
	\left((e_n^\top \otimes e_n^\top + e_n^\top \otimes e_n^\top) \otimes e_2^\top\right)\\
	\left((e_1^\top \otimes e_1^\top + e_1^\top \otimes e_1^\top) \otimes e_3^\top\right) \\
	\vdots \\ 
	\left((e_n^\top \otimes e_n^\top + e_n^\top \otimes e_n^\top) \otimes e_3^\top\right) \\
	\vdots \\
	\left((e_n^\top \otimes e_{n-1}^\top + e_{n-1}^\top \otimes e_n^\top) \otimes e_n^\top\right).
	\end{bmatrix}
	\end{equation}
	We note that the total number of rows in the matrix $\cA$  are $\tfrac{n^2(n+1)}{2} - n$. 
	
	Furthermore, we write the energy-preserving conditions \eqref{eq:energy_preserving_appendix} in a vectorized form for $\{i,j,k\} \in \cM_2$, thus yielding
	\begin{equation}\label{eq:SystemEP}
	\begin{aligned}
	\left((e_i\otimes e_j + e_j\otimes e_i)^\top \otimes e_k^\top  +  (e_i\otimes e_k + e_k\otimes e_i)^\top \otimes e_j^\top  + (e_j\otimes e_k + e_k\otimes e_j)^\top \otimes e_i^\top  \right)\bH_{\texttt{vect}}&= 0.
	\end{aligned}
	\end{equation}
    Equations as \eqref{eq:SystemEP} can be written in matrix form as:
	\begin{equation}
	\cB\bH_{\texttt{vect}} = 0,
	\end{equation}
	where $\cB$ can be constructed as follows:
	\begin{equation}
	\cB = \begin{bmatrix}
	\left((e_1\otimes e_1 + e_1\otimes e_1)^\top \otimes e_2^\top +  (e_1\otimes e_2 + e_2\otimes e_1)^\top \otimes e_1^\top + (e_1\otimes e_2 + e_2\otimes e_1)^\top \otimes e_1^\top \right) \\ 
	\left((e_2\otimes e_1 + e_2\otimes e_1)^\top \otimes e_2^\top +  (e_2\otimes e_2 + e_2\otimes e_2)^\top \otimes e_1^\top + (e_2\otimes e_1 + e_1\otimes e_2)^\top \otimes e_2^\top \right) \\ 
	\vdots \\
	\left((e_1\otimes e_2 + e_2\otimes e_1)^\top \otimes e_3^\top +  (e_1\otimes e_3 + e_3\otimes e_1)^\top \otimes e_2^\top + (e_2\otimes e_3 + e_3\otimes e_2)^\top \otimes e_1^\top \right) \\ 	
	\end{bmatrix}.
	\end{equation}
	Note that the total number of rows in the matrix $\cB$  is $n(n-1) + \tfrac{n(n-1)(n-2))}{6}$. Next, we find that rows of the matrix $\cB$ are linearly independent and can be written as a linear combination of rows of the matrix $\cA$. Consequently, we can write $\bP\cA = \begin{bmatrix} \cA_1 \\ \cB\end{bmatrix}$, where $\bP$ is an invertible matrix and $\cA_1$ contains a subset of the rows of the matrix $\cA$. Then, after multiplying \eqref{eq:equvi_H_Aform} by the matrix $\bP$ on both sides, we get
	\begin{equation}
	\begin{aligned}
	\bP \cA \bH_{\texttt{vect}} &= 	\bP \cA \tilde\bH_{\texttt{vect}},\\
	\begin{bmatrix}
	\cA_1 \\ \cB
	\end{bmatrix} \bH_{\texttt{vect}} &= \begin{bmatrix}	\cA_1 \\ \cB \end{bmatrix} \tilde\bH_{\texttt{vect}}.
	\end{aligned}
	\end{equation}
	Due to energy-preserving properties of the matrices $\bH$ and $\tilde \bH$, we have $\cB \bH_{\texttt{vect}} =  \cB\tilde \bH_{\texttt{vect}} = 0$. Hence, to ensure $\bH(\bx\otimes \bx) = \tilde{\bH}(\bx\otimes \bx)$ for every $\bx \in \R^n$, provided the matrices $\bH$ and $\tilde \bH$ are energy-preserving, we only require the following condition to be satisfied:
	\begin{equation}\label{eq:constraint_1}
	\begin{aligned}
	\cA_1 \bH_{\texttt{vect}} &= \cA_1  \tilde\bH_{\texttt{vect}},
	\end{aligned}
	\end{equation}
	with the number of rows of the matrix $\cA_1$ to be $$\dfrac{n^2(n+1)}{2} - n - n(n-1) - \dfrac{n(n-1)(n-2))}{6} = \dfrac{n(n+1)(n-1)}{3}.$$
	
	Additionally, we aim to impose an additional structure on the matrix $\tilde \bH$, particularly the one given in \eqref{eq:skew_symmetric_H}. We begin by noticing that due to the particular skew-symmetric structure, we have $\dfrac{n^2(n+1)}{2}$ constraints on the matrix $\tilde \bH$, which can be written as follows:
	\begin{equation}
	\left(e_k^\top\otimes (e_i\otimes e_j + e_j\otimes e_i)\right)\tilde{\bH}_{\texttt{vect}}  = 0,
	\end{equation}
	where $(i,j,k) \in \{1,\ldots,n\}$. We can write these constraints in matrix form as follows:
	\begin{equation}
	\cC \tilde{\bH}_{\texttt{vect}} = 0.
	\end{equation} 
	Combining the above equation with \eqref{eq:constraint_1}, we obtain 
	\begin{equation}\label{eq:solve_H_tH}
	\begin{bmatrix}
	\cA_1 \\ \cC
	\end{bmatrix} \tilde{\bH}_{\texttt{vect}} = \begin{bmatrix} 	
	\cA_1 \tilde{\bH}_{\texttt{vect}} \\ 0 \end{bmatrix}.
	\end{equation}
	Next, note that the rows of the matrices $\cA_1$ and $\cC$ are linearly independent. The reason is that the rows are $\cA_1$ are constructed using 
	\begin{equation}
	\left((e_i^\top \otimes e_j^\top + e_j^\top \otimes e_i^\top) \otimes e_k^\top\right)\qquad {(i,j,k)} \in \cM_2
	\end{equation}
	and the rows are $\cC$ are constructed using 
	\begin{equation}
	e_k^\top\otimes \left((e_i^\top \otimes e_j^\top + e_j^\top \otimes e_i^\top)\right)\qquad {(i,j,k)} \in \cM_1 \cup \cM_2,
	\end{equation}
	and they are linearly independent. Indeed, this follows form the property that $e_k^\top \otimes \ba$ and $\ba \otimes e_k^\top $ are linear independent for every $\ba$ and $k \in \{1,\ldots, n\}$ provided $\ba\neq \alpha e_k$ for $\alpha \in \R$, and the vectors corresponding to $\ba = \alpha e_k$ are not present in the rows of $\cA_1$. Therefore, the matrix $		\begin{bmatrix}
	\cA_1 \\ \cC
	\end{bmatrix}$ is full row-rank with the total number of rows  $$\dfrac{n^2(n+1)}{2} + \dfrac{n(n+1)(n-1)}{3},$$ which is less than (for $n>2$) or equal (for $n\in\{1 , 2\}$) to $n^3$. Hence, a solution always exists to the equation \eqref{eq:solve_H_tH}. This ensures the existence of $\tilde \bH$ in the form \eqref{eq:skew_symmetric_H} such that $\bH(\bx\otimes \bx) = \tilde \bH(\bx\otimes \bx)$ for all $\bx \in \Rn$ provided $\bH$ is energy-preserving. This concludes the proof.
\end{proof}

Finally, we extend \Cref{thm:equvilence_between_H} to the case where $\bH$ is a generalized energy-preserving matrix with respect to an SPD matrix $\bQ$.

\begin{lemma}\label{lemma:equvilence_between_H_gen}
	Consider a generalized energy-preserving matrix $\bH \in \R^{n\times n^2}$ with respect to $\bQ = \bQ^\top > 0$ as given in \eqref{eq:GenEnergyPreservQuadTerm}, i.e., 
	\begin{equation}
	\bx^\top\bQ\bH(\bx\otimes \bx) = 0 \quad \forall \bx \in \Rn.
	\end{equation}
	Then, there exists a matrix $\tilde \bH$ of the following form:
	\begin{equation}
	\tilde \bH = \begin{bmatrix} \bG_1\bQ, \ldots, \bG_n\bQ\end{bmatrix}
	\end{equation}
	with $\bG_i = -\bG_i^\top $ for $i\in \{1,\ldots,n\}$ so that $\bH(\bx\otimes \bx) = \tilde \bH(\bx\otimes \bx)$.
\end{lemma}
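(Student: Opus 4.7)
The natural plan is to reduce the generalized case to the standard energy-preserving case already treated in \Cref{thm:equvilence_between_H} by means of a congruence change of coordinates induced by the unique SPD square root $\bQ^{1/2}$ of $\bQ$. Concretely, set $\by := \bQ^{1/2}\bx$. By the mixed-product property of the Kronecker product, $\bx\otimes\bx = (\bQ^{-1/2}\otimes\bQ^{-1/2})(\by\otimes\by)$. Substituting this into the generalized energy-preserving identity \eqref{eq:GenEnergyPreservQuadTerm} yields
\begin{equation*}
	\by^{\top}\hat{\bH}(\by\otimes\by) = 0 \quad \text{for every } \by\in\R^n,\qquad \text{where } \hat{\bH} := \bQ^{1/2}\bH(\bQ^{-1/2}\otimes\bQ^{-1/2}),
\end{equation*}
so $\hat{\bH}$ is energy-preserving in the standard sense of \eqref{eq:EnergyPreservQuadTerm}.

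Applying \Cref{thm:equvilence_between_H} to $\hat{\bH}$ delivers a matrix $\tilde{\hat{\bH}} = [\tilde{\bG}_1,\ldots,\tilde{\bG}_n]$ with each $\tilde{\bG}_i = -\tilde{\bG}_i^{\top}$ and $\hat{\bH}(\by\otimes\by) = \tilde{\hat{\bH}}(\by\otimes\by)$ for every $\by\in\R^n$. Multiplying this identity on the left by $\bQ^{-1/2}$ and substituting back $\by = \bQ^{1/2}\bx$ gives
\begin{equation*}
	\bH(\bx\otimes\bx) = \bQ^{-1/2}\tilde{\hat{\bH}}(\bQ^{1/2}\otimes\bQ^{1/2})(\bx\otimes\bx) =: \tilde{\bH}(\bx\otimes\bx), \quad \forall\,\bx\in\R^n.
\end{equation*}
It therefore only remains to verify that $\tilde{\bH}$ has the block structure $[\bG_1\bQ,\ldots,\bG_n\bQ]$ with skew-symmetric $\bG_i$.

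This is a direct block-column bookkeeping computation. Writing $\bQ^{1/2} = [q_{ij}]$ and expanding the Kronecker product column-wise, the $j$-th block of $\tilde{\hat{\bH}}(\bQ^{1/2}\otimes\bQ^{1/2})$ reads $\bigl(\sum_{i} q_{ij}\tilde{\bG}_i\bigr)\bQ^{1/2}$. Setting $\hat{\bG}_j := \sum_{i=1}^n q_{ij}\tilde{\bG}_i$, which is skew-symmetric as a real linear combination of skew-symmetric matrices, and defining $\bG_j := \bQ^{-1/2}\hat{\bG}_j\bQ^{-1/2}$, one verifies immediately that $\bG_j \bQ = \bQ^{-1/2}\hat{\bG}_j\bQ^{1/2}$ and, using the symmetry of $\bQ^{-1/2}$, that $\bG_j^{\top} = \bQ^{-1/2}\hat{\bG}_j^{\top}\bQ^{-1/2} = -\bG_j$. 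Putting the blocks together yields $\tilde{\bH} = [\bG_1\bQ,\ldots,\bG_n\bQ]$ with each $\bG_j$ skew-symmetric, which is the desired form.

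The main obstacle is essentially bookkeeping: keeping the Kronecker products, their column partitioning, and the various $\bQ^{\pm 1/2}$ factors straight so that the outer $\bQ^{-1/2}$ and the trailing $\bQ^{1/2}$ combine with the middle skew-symmetric matrix exactly into the prescribed shape $\bG_j\bQ$, and so that skew-symmetry is preserved under the congruence by $\bQ^{-1/2}$. Both points reduce to symmetry of $\bQ^{1/2}$ and to the standard Kronecker identity $(\bA\otimes\bB)(\bC\otimes\bD)=\bA\bC\otimes\bB\bD$, after which the result follows by a direct application of \Cref{thm:equvilence_between_H}.
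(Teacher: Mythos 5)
Your proposal is correct and follows essentially the same route as the paper: reduce to the standard energy-preserving case of \Cref{thm:equvilence_between_H} via a change of coordinates induced by a factorization of $\bQ$, and then reassemble the block columns into the form $[\bG_1\bQ,\ldots,\bG_n\bQ]$ with skew-symmetric $\bG_i$. The only (cosmetic) difference is that you use the symmetric square root $\bQ^{1/2}$ where the paper uses a general factorization $\bQ=\bL^\top\bL$, which lets your block-column bookkeeping replace the paper's longer chain of Kronecker identities.
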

\begin{proof}
	We begin by first transforming this problem into the problem considered in \Cref{thm:equvilence_between_H}. Factorizing $\bQ = \bL^\top\bL$, we have
	\begin{align*}
	0 = \bx^\top\bQ\bH(\bx\otimes \bx) &=  \bx^\top\bL^\top\bL\bH(\bx\otimes \bx) ,\\
	&=  \bx^\top\bL^\top\bL\bH(\bL^{-1}\otimes \bL^{-1})(\bL\otimes \bL)(\bx\otimes \bx),\\
	&= \tilde\bx^\top \hat\bH(\tilde\bx\otimes \tilde\bx),
	\end{align*}
	where $\tilde\bx = \bL\bx$ and $\hat\bH = \bL\bH(\bL^{-1}\otimes \bL^{-1})$. From \Cref{thm:equvilence_between_H}, we know that 
	\begin{align*}
	\hat \bH(\tilde \bx \otimes \tilde \bx) = \begin{bmatrix}
	\bJ_1,\ldots, \bJ_n
	\end{bmatrix}  (\tilde \bx \otimes \tilde \bx). 
	\end{align*}
	Back substituting for $\tilde \bx$ and $\hat\bH$, we obtain
	\begin{align*}
	&		\bL\bH(\bL^{-1}\otimes \bL^{-1}) (\bL\bx \otimes \bL\bx) = \begin{bmatrix} \bJ_1,\ldots, \bJ_n \end{bmatrix}  (\bL\bx \otimes \bL\bx) \\
	\Leftrightarrow ~~& \bH(\bL^{-1}\otimes \bL^{-1}) (\bL\bx \otimes \bL\bx) = \bL^{-1}\begin{bmatrix} \bJ_1,\ldots, \bJ_n \end{bmatrix}  (\bL\bx \otimes \bL\bx)\\
	\Leftrightarrow ~~& \bH(\bx \otimes \bx) = \bL^{-1}\begin{bmatrix} \bJ_1,\ldots, \bJ_n \end{bmatrix}  (\bL \otimes \bL)  (\bx \otimes \bx)\\ 
	\Leftrightarrow ~~& \bH(\bx \otimes \bx) = \bL^{-1}\begin{bmatrix} \bJ_1,\ldots, \bJ_n \end{bmatrix}  (\bL \otimes \bI)  (\bI \otimes \bL)   (\bx \otimes \bx).
	\end{align*}
	Now, we focus on the term
	\begin{equation*}
	\bL^{-1}\begin{bmatrix} \bJ_1,\ldots, \bJ_n \end{bmatrix} (\bL \otimes \bI)   (\bI \otimes \bL)  (\bx \otimes \bx).
	\end{equation*}
	Inserting an identity matrix, we have 
	\begin{align*}
	&\bL^{-1}\begin{bmatrix} \bJ_1,\ldots, \bJ_n \end{bmatrix}  (\bL \otimes \bI)   (\bI \otimes \bL^{-\top}) (\bI \otimes \bL^\top ) (\bI \otimes \bL)   (\bx \otimes \bx) \\ 
	& = \bL^{-1}\begin{bmatrix} \bJ_1,\ldots, \bJ_n \end{bmatrix}  (\bL \otimes \bI)   (\bI \otimes \bL^{-\top}) (\bI \otimes \bL^\top\bL )   (\bx \otimes \bx)\\
	& = \bL^{-1}\begin{bmatrix} \bJ_1,\ldots, \bJ_n \end{bmatrix}  (\bL \otimes \bI)   (\bI \otimes \bL^{-\top}) (\bI \otimes \bQ)   (\bx \otimes \bx)\\
	& = \bL^{-1}\begin{bmatrix} \bJ_1,\ldots, \bJ_n \end{bmatrix}  (\bI \otimes \bL^{-\top})  (\bL \otimes \bI)   (\bI \otimes \bQ)   (\bx \otimes \bx) \\
	&\hspace{4cm} (\text{due to the commutative property of} ~ (\bI \otimes \bL^{-\top}) ~\text{and}~ (\bL \otimes \bI))\\
	& = \begin{bmatrix} \hat\bJ_1,\ldots, \hat\bJ_n \end{bmatrix}   (\bL \otimes \bI)   (\bI \otimes \bQ)   (\bx \otimes \bx),
	\end{align*}
	where $\hat\bJ_i = \bL^{-1}\bJ_1\bL^{-\top}$ which are again skew-symmetric matrices. Furthermore, it can also be noticed that 
	\begin{equation*}
	\begin{bmatrix} \hat\bJ_1,\ldots, \hat\bJ_n \end{bmatrix}   (\bL \otimes \bI) = \begin{bmatrix} \bG_1,\ldots, \bG_n \end{bmatrix}, 
	\end{equation*}
	where $\bG_i = \sum\limits_{j=1}^n l_{ji}\hat\bJ_j$, which are also skew-symmetric. Combining all these components, we have 
	\begin{equation*}
	\bH(\bx \otimes \bx) = \begin{bmatrix} \bG_1,\ldots, \bG_n \end{bmatrix} (\bI \otimes \bQ) = \begin{bmatrix} \bG_1\bQ,\ldots, \bG_n\bQ \end{bmatrix}.
	\end{equation*}
	This concludes the proof. 
\end{proof}
\end{document}